\documentclass{article}

\usepackage{microtype}
\usepackage{graphicx}
\usepackage{subcaption}
\usepackage{booktabs} % for professional tables

\usepackage{hyperref}

\usepackage[preprint]{icml2026}

\usepackage{amsmath}
\usepackage{amssymb}
\usepackage{mathtools}
\usepackage{amsthm}
\usepackage{hyperref}
\usepackage{url}
\usepackage{graphicx}
\usepackage{booktabs} 
\usepackage{adjustbox}
\usepackage{array}
\usepackage{pifont}
\usepackage{subcaption}
\usepackage{wrapfig}
\usepackage{amsmath}
\usepackage{amsthm, amsthm, amssymb}
\usepackage[table]{xcolor}
\usepackage{multirow}
\usepackage{hyperref}

\makeatletter
\AtBeginDocument{
\renewcommand{\sectionautorefname}{\S\@gobble}

\renewcommand{\subsectionautorefname}{\S\@gobble} 
\renewcommand{\subsubsectionautorefname}{\S\@gobble}
\renewcommand{\appendixautorefname}{Appendix \S\@gobble}

}
\makeatother

\usepackage[capitalize,noabbrev]{cleveref}

\theoremstyle{plain}
\newtheorem{theorem}{Theorem}[section]

\theoremstyle{definition}

\theoremstyle{remark}

\usepackage[textsize=tiny]{todonotes}

\icmltitlerunning{Semantic Bridging Domains: Pseudo-Source as Test-Time Connector}

\begin{document}

\twocolumn[
  \icmltitle{Semantic Bridging Domains: Pseudo-Source as Test-Time Connector}

  % It is OKAY to include author information, even for blind submissions: the
  % style file will automatically remove it for you unless you've provided
  % the [accepted] option to the icml2026 package.

  % List of affiliations: The first argument should be a (short) identifier you
  % will use later to specify author affiliations Academic affiliations
  % should list Department, University, City, Region, Country Industry
  % affiliations should list Company, City, Region, Country

  % You can specify symbols, otherwise they are numbered in order. Ideally, you
  % should not use this facility. Affiliations will be numbered in order of
  % appearance and this is the preferred way.
  \icmlsetsymbol{equal}{*}
  \icmlsetsymbol{boss}{\dag}

  \begin{icmlauthorlist}
    \icmlauthor{Xizhong Yang}{1}
    \icmlauthor{Huiming Wang}{2}
    \icmlauthor{Ning Xu}{1}
    \icmlauthor{Mofei Song}{1,boss}
  \end{icmlauthorlist}

  \icmlaffiliation{1}{Southeast University}
  \icmlaffiliation{2}{Kuaishou Technology}

  \icmlcorrespondingauthor{Mofei Song}{songmf@seu.edu.cn}

  % You may provide any keywords that you find helpful for describing your
  % paper; these are used to populate the "keywords" metadata in the PDF but
  % will not be shown in the document
  \icmlkeywords{Machine Learning, ICML}

  \vskip 0.3in
]

% this must go after the closing bracket ] following \twocolumn[ ...

% This command actually creates the footnote in the first column listing the
% affiliations and the copyright notice. The command takes one argument, which
% is text to display at the start of the footnote. The \icmlEqualContribution
% command is standard text for equal contribution. Remove it (just {}) if you
% do not need this facility.

% Use ONE of the following lines. DO NOT remove the command.
% If you have no special notice, KEEP empty braces:
\printAffiliationsAndNotice{}  % no special notice (required even if empty)
% Or, if applicable, use the standard equal contribution text:
% \printAffiliationsAndNotice{\icmlEqualContribution}

\begin{abstract}

    Distribution shifts between training and testing data are a critical bottleneck limiting the practical utility of models, especially in real-world test-time scenarios. To adapt models when the source domain is unknown and the target domain is unlabeled, previous works constructed pseudo-source domains via data generation and translation, then aligned the target domain with them. However, significant discrepancies exist between the pseudo-source and the original source domain, leading to potential divergence when correcting the target directly. From this perspective, we propose a \textit{Stepwise Semantic Alignment} (SSA) method, viewing the pseudo-source as a semantic bridge connecting the source and target, rather than a direct substitute for the source. Specifically, we leverage easily accessible universal semantics to rectify the semantic features of the pseudo-source, and then align the target domain using the corrected pseudo-source semantics. Additionally, we introduce a \textit{Hierarchical Feature Aggregation} (HFA) module and a \textit{Confidence-Aware Complementary Learning} (CACL) strategy to enhance the semantic quality of the SSA process in the absence of source and ground truth of target domains. We evaluated our approach on tasks like semantic segmentation and image classification, achieving a $5.2\%$ performance boost on GTA$\to$Cityscapes over the state-of-the-art. Code available at \href{https://github.com/yxizhong/SHLSA}{https://github.com/yxizhong/SSA}.
\end{abstract}

\section{Introduction}
    Most machine learning methods assume that training and testing data are independently and identically distributed (i.i.d.)~\cite{liang2025comprehensive}. However, this assumption often fails in real-world scenarios due to limited data coverage and distribution shifts between source and target domains. To address such shifts, \textit{Domain Generalization} (DG) learns domain-invariant features from labeled sources~\cite{zhou2022domain}, while \textit{Domain Adaptation} (DA) transfers knowledge from labeled source data to unlabeled target data~\cite{farahani2021brief}. A more challenging setting, \textit{Test-Time Adaptation} (TTA), adapts a source-trained model using only unlabeled target data, without access to source data.
    
    To tackle this source-free, label-free scenario, TTA methods employ strategies such as pseudo labeling and Source Distribution Estimation (SDE)~\cite{liang2025comprehensive}. Among them, SDE stands out for its ability to preserve source discriminability and ensure stable adaptation under distribution shifts. It reconstructs an estimate of the source distribution through data generation, domain translation, or memory selection, and aligns it with the target domain via semi-supervised or adversarial learning. This process mitigates confirmation bias and promotes more reliable adaptation by explicitly retaining the structural priors of the source model.

    Although existing SDE methods align low-level visual features (e.g., textures) and statistical information (e.g., BatchNorm mean and variance), or high-level semantic features, often focusing on constructing a high-quality pseudo-source domain~\cite{SHOT, ding2022source, yang2021generalized, wang2022continual}. This pseudo-source domain acts as a proxy for the original source domain to align with the target domain using methods like semi-supervised learning~\cite{ding2023proxymix}. However, significant differences exist between the pseudo-source and source domains, leading to inferior performance compared to using source data with the same alignment methods. Thus, the pseudo-source domain serves as a bridge between the inaccessible source domain and the target domain, facilitating stepwise alignment in the semantic space.

    To address the potential bias between the pseudo-source domain and the original source domain, we propose a \textit{Stepwise Semantic Adaptation} (SSA) method. Specifically, for a target domain with distribution shifts, we measure the deviation of each sample from the source domain based on the output probability distribution of the source model. By data selection, we partition the target domain into a pseudo-source domain and a remaining target domain. Then, unlike previous works that directly align the pseudo-source domain with the remaining target domain, we first perform similarity correction on the pseudo-source domain using coarse-grained abstract semantic features obtained from the pre-trained source model. Subsequently, we align the remaining target domain using the corrected semantic features. This two-step semantic alignment approach effectively enhances the quality of the pseudo-source features, which are traditionally used as direct alignment targets. In contrast to curriculum learning's data scheduling, SSA achieves an easy-to-difficult progression at the alignment process-level. Furthermore, due to the lack of labels in both the source and target domains under test-time conditions, which results in sparse supervision during alignment, we employ \textit{Hierarchical Feature Aggregation} (HFA) module and a \textit{Confidence-Aware Complementary Learning} (CACL)  to enhance the granularity of semantic supervision and the utilization of pseudo-labels, respectively.

    %In our experiments, we selected semantic segmentation and single/multi-label classification as downstream tasks, comparing different backbones such as CNN-Based and ViT-Based models. The results demonstrate that SSA consistently improves performance across various scenarios and tasks. Moreover, this performance enhancement exhibits a scaling effect as the semantic density of the tasks increases. Specifically, on the GTA$\to$Cityscapes and SYNTHIA$\to$Cityscapes benchmarks, SSA achieves improvements of $5.2\%$ and $5.0\%$ over the current state-of-the-art, respectively, showing performance comparable to methods utilizing source domain information.
    In our experiments, we selected semantic segmentation and single/multi-label classification as downstream tasks, comparing CNN-Based and ViT-Based models. The results show that SSA consistently improves performance across various scenarios and tasks, with a scaling effect as the semantic density increases. Specifically, on the GTA$\to$Cityscapes and SYNTHIA$\to$Cityscapes benchmarks, SSA achieves improvements of $5.2\%$ and $5.0\%$ over the current state-of-the-art, respectively, showing performance comparable to methods using source domain information.

\section{Related Work}
\label{relatedwork}
    %\paragraph{Source Distribution Estimation for TTA.} In the absence of source data, traditional domain adaptation (DA) and generalization (DG) methods become inapplicable. Recent TTA techniques address this by relying on the pretrained source model during inference. A key approach, SDE, treats TTA as a DA problem by constructing a pseudo-source domain from target data~\cite{liang2025comprehensive}. Techniques like adversarial training~\cite{nayak2021mining}, style transfer~\cite{hu2022prosfda}, and uncertainty-based sampling~\cite{liang2021source} aim to approximate the source distribution. After pseudo-source domain constructing, virtual domain alignment methods utilizing semi-supervised learning, adversarial training, or contrastive objectives, further align the pseudo-source and target domains~\cite{liang2021source, ding2023proxymix, kurmi2021domain}. These methods promote semantic consistency across domains. However, directly aligning the target domain using the pseudo-source domain is challenging, especially when there is significant divergence between the pseudo-source domain and the original source domain. To address this issue, we leverage the pre-trained general semantics from the source model to achieve stepwise semantic alignment from easy to difficult regions.

    \paragraph{Source Distribution Estimation for TTA.} In the absence of source data, traditional domain adaptation (DA) and generalization (DG) methods are inapplicable. Recent TTA techniques address this by using the pretrained source model during inference. SDE, a key approach, treats TTA as a DA problem by constructing a pseudo-source domain from target data~\cite{liang2025comprehensive}. Techniques like adversarial training~\cite{nayak2021mining}, style transfer~\cite{hu2022prosfda}, and uncertainty-based sampling~\cite{liang2021source} aim to approximate the source distribution. After constructing the pseudo-source domain, virtual domain alignment methods using semi-supervised learning, adversarial training, or contrastive objectives further align the pseudo-source and target domains~\cite{liang2021source, ding2023proxymix, kurmi2021domain}, promoting semantic consistency. However, directly aligning the target domain using the pseudo-source domain is challenging, especially with significant divergence from the original source domain. To address this, we leverage the pre-trained general semantics from the source model for stepwise semantic alignment from easy to difficult regions.
    \vspace{-5pt}
    % \paragraph{Pseudo Labeling for TTA.} 
    % A mainstream direction in TTA focuses on improving pseudo-label quality through denoising or weighting, with the goal of providing reliable supervision under source–target domain shifts \cite{SHOT, ATP, qu2022bmd}. In SDE, pseudo-labels are primarily used as auxiliary tools to approximate the source distribution \citet{ilse2020diva}. Common strategies include confidence-based filtering \cite{ATP}, iterative self-training \cite{huselective}, generative refinement \cite{ding2023proxymix}, and clustering-based assignment \cite{SHOT, qu2022bmd}. Most methods treat pseudo-labels as one-hot targets, which limits their semantic richness. Confidence-aware approaches from semi-supervised learning help mitigate this by better handling uncertain predictions and improving label quality~\cite{feng2024bacon, wang2022semi}. Meanwhile, recent work has demonstrated the effectiveness of hierarchical feature aggregation in capturing both local details and global context~\cite{HRDA}. In SSA, this design facilitates structured semantic fusion, leading to more complete object representations and more reliable pseudo-label generation.
    \paragraph{Pseudo Labeling for TTA.} A mainstream direction in TTA focuses on improving pseudo-label quality through denoising or weighting to provide reliable supervision under source–target domain shifts \cite{SHOT, ATP, qu2022bmd}. In SDE, pseudo-labels are used as auxiliary tools to approximate the source distribution \cite{ilse2020diva}. Strategies include confidence-based filtering \cite{ATP}, iterative self-training \cite{huselective}, generative refinement \cite{ding2023proxymix}, and clustering-based assignment \cite{SHOT, qu2022bmd}. Most methods treat pseudo-labels as one-hot targets, limiting semantic richness. Confidence-aware approaches from semi-supervised learning help by handling uncertain predictions and improving label quality \cite{feng2024bacon, wang2022semi}. Recent work shows hierarchical feature aggregation captures local details and global context \cite{HRDA}. In SSA, this facilitates structured semantic fusion, leading to complete object representations and reliable pseudo-label generation.

\section{Methodology}
\label{method}

\begin{figure*}
    \centering
    \includegraphics[width=0.9\textwidth]{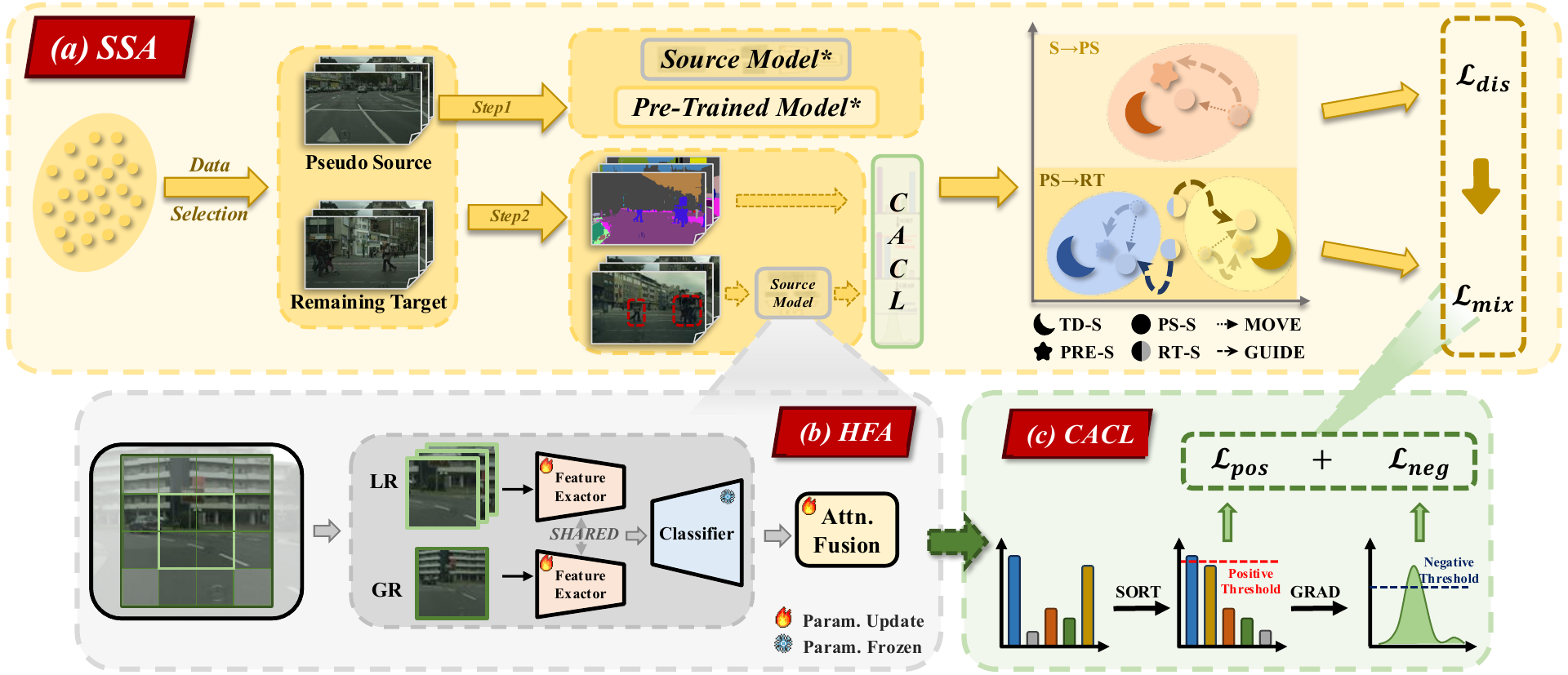}{}
    \caption{Framework of \textit{Stepwise Semantic Alignment} (SSA). \textbf{\textit{Source Model}}* and \textbf{\textit{Pre-Trained Model}}* are their respective feature extractors. TD-S and PRE-S refer to the \textbf{S}emantic features derived from the \textbf{T}arget-\textbf{D}omain (idealized semantics) and the \textbf{PRE}-Trained model, respectively. PS-S and RT-S correspond to the \textbf{S}emantic features extracted from $D_{ps}$ and $D_{rt}$. See \autoref{sec:appendix_pseudo_code} for pseudo-code. }
    \label{fig:framework}
    \vspace{-5pt}
\end{figure*}

\subsection{Problem definition}
    Given a model $\mathcal{M}_s$ trained on the source domain $D_s=\{\mathcal{X}_s,\mathcal{Y}_s\}=\{\boldsymbol{x}_s^i,y_s^i\}_{i=1}^{n_s}$ using pre-trained model $\mathcal{M}_{pre}$, and unlabeled target domain data $D_t=\{\mathcal{X}_t\}=\{\boldsymbol{x}_t^i\}_{i=1}^{n_t}$, where $n_s$ and $n_t$ denote the number of samples in the source and target domains respectively, TTA aims to fine-tune the source model $\mathcal{M}_s$ using $\mathcal{D}_t$ and obtain an adapted model $\mathcal{M}_t$ that can effectively handle domain shifts from the source to the target domain. To prevent ambiguity in the context of pseudo-source domain construction, the target domain is further partitioned in this work as follows: 
    \begin{equation}
        \begin{aligned}
    	D_t=D_{ps}+D_{rt}&=\{\{\mathcal{X}_{ps}\},\{\mathcal{X}_{rt}\}\} \\
        & = \{\{\boldsymbol{x}_{ps}^i\}_{i=1}^{n_{ps}},\{\boldsymbol{x}_{rt}^j\}_{j=1}^{n_{rt}}\},
        \end{aligned}
    \end{equation}
    where $D_{ps}$ and $D_{rt}$ denote the \textit{pseudo-source domain} samples and the \textit{remaining target domain} data, which are constructed through data selection. These subsets satisfy the conditions  $D_{ps}\cap D_{rt}=\emptyset$ and $D_{ps}\cup D_{rt}=D_t$. 
    %, and $n_{ps}+n_{rt}=n_t$. 
    %The pseudo-source domain $D_{ps}$ is constructed by partitioning $D_t$ based on sample self-entropy, ensuring both $D_{ps}$ and $D_{rt}$ are disjoint subsets of the target domain.

\subsection{Overview}

    As shown in \autoref{fig:framework}, SSA begins with data selection based on the source model's output probability distribution on target domain samples, forming the pseudo-source domain and the remaining target domain. Next, similarity alignment is performed using the source model and its pre-trained model to correct pseudo-source (S$\to$PS, where star represents pre-trained semantic PRE-S guiding pseudo-source semantic PS-S towards the moon representing target-domain semantic TD-S). Finally, the corrected pseudo-source semantics, now closer to target semantics, guide the remaining-target semantics RT-S that are farther away (PS$\to$RT), achieving overall semantic alignment in the target domain.

    In this process, \textit{Hierarchical Feature Aggregation} (HFA) and \textit{Confidence-Aware Complementary Learning} (CACL) serve as sub-modules for the \textit{feature extractor} and \textit{classifier}, respectively. HFA captures global and local semantic features using hierarchical windows and aggregates them through attention. CACL post-processes the model's output probability distribution, selecting negative labels by fully utilizing the relative changes in class probabilities within the distribution. We will first introduce the SSA method, followed by explanations of the HFA and CACL modules. 

    %During training, we compute the self-entropy of each sample and maintain an Entropy Bank via momentum-based updates. These entropy estimates are then used by \textbf{SDA} to partition the target domain into a pseudo-source subset $D_{ps}$ and a residual subset $D_{rt}$. To align the source and target domains, SDA first aligns pseudo-source features ($\text{HFA-PS-S}$) extracted by the task-specific $\text{HFA}^*$ with general semantics ($\text{PRE-S}$) from a frozen ImageNet-pretrained $\text{PRE-NET}^*$. This step refines the high-confidence pseudo-source features to better reflect transferable semantics. The corrected $\text{HFA-PS-S}$ then serves as a semantic bridge to guide low-confidence target features ($\text{HFA-RT-S}$) via a MixMatch-based semi-supervised strategy. This stepwise alignment enables phased adaptation from reliable to ambiguous regions, enhancing high-level semantic consistency under domain shift. \textbf{Pseudo-code} is provided in \autoref{sec:appendix_pseudo_code}.

\subsection{Stepwise Semantic Alignment}
    % During HFA module updates, the uncertainty of each target sample is measured by \textbf{self-entropy}. As the model is updated using Exponential Moving Average (EMA), self-entropy captures the distributional divergence between the target and source domains, with low-entropy samples being more similar to the source domain and high-entropy samples reflecting larger domain shifts.
    
    % To enable this partitioning, we maintain an \textit{Entropy Bank} that tracks the self-entropy of each sample throughout training. Similar to EMA, it updates the entropy value $\mathcal{H}_t(\boldsymbol{x})$ at each iteration $t$ by:
    
    \paragraph{Data Selection.} Similar to most SDE works, we first construct a pseudo-source domain using the source model and target domain samples. Inspired by previous works that use entropy to measure sample difficulty, we extend this approach to dynamic assessment. Specifically, we maintain an \textit{entropy memory} for sample during training, and update it across epochs $t$ via Exponential Moving Average (EMA):
    \begin{equation}
        \mathcal{H}_t(\boldsymbol{x}) = \alpha \cdot \mathcal{H}_{t-1}(\boldsymbol{x}) + (1 - \alpha) \cdot H(p(\boldsymbol{x})),
    \end{equation}
    
    where $H(p(\boldsymbol{x}))$ represents the current entropy computed from the predicted distribution $p(\boldsymbol{x})$, and $\alpha \in [0,1)$ is the momentum coefficient. This EMA-based update smooths entropy fluctuations, providing a stable estimate of sample uncertainty. After computing self-entropy, the former $\tau_{\text{par}}$ is assigned as the \textit{pseudo-source domain}, with the remaining samples as the \textit{reamining target domain}.
    
    %After constructing the pseudo-source domain, we employ a stepwise semantic alignment strategy. The \textit{pseudo-source domain} $D_{ps}$ is aligned with general semantics from source model's pretrained model $\mathcal{M}_{pre}$, refining high-confidence pseudo-source features. These corrected features then guide the adaptation of low-confidence \textit{remaining target domain} $D_{ps}$ samples, facilitating a smooth transition from reliable to ambiguous regions and enhancing semantic consistency under domain shift.
    
    \textbf{Pseudo-Source Semantic Correction.} To ensure semantic consistency and structural alignment between the $D_{s}$ and the $D_{ps}$, we introduce a feature alignment regularization that leverages universal semantic features extracted from a pretrained visual backbone. This approach guides the pseudo-source domain toward the semantic space of the source domain, promoting more coherent class alignment even in the presence of domain shifts.
    
    Given an input sample $\boldsymbol{x}$, let $\mathbf{f}(\boldsymbol{x}) \in \mathbb{R}^{C \times H \times W}$ represent the feature map generated by the current model, and $\mathbf{f}^{\text{pre}}(\boldsymbol{x}) \in \mathbb{R}^{C \times H \times W}$ the corresponding feature map extracted from a frozen pretrained model, where $\mathbf{f}(\cdot)$ represents the \textit{feature extractor} in \autoref{sec:hfa}. To enforce alignment between the features of the model and the pretrained model, we define a general feature alignment loss as follows:
    \begin{equation}
    	\label{eq_ldis}
    	\mathcal{L}_{\text{dis}} = \lambda_{\text{align}} \cdot \frac{1}{|\Omega|} \sum_{i \in \Omega} \text{Dist}\left(f(\boldsymbol{x})_i, f^{\text{pre}}(\boldsymbol{x})_i\right),
    \end{equation}
    
    where $\text{Dist}(\cdot,\cdot)$ denotes the cosine similarity between normalized feature vectors, used to measure feature-level affinity. The set $\Omega \subseteq \{1, \dots, H \times W\}$ denotes the indices of spatial locations considered for feature alignment, selected based on class-specific or task-specific criteria:
    \begin{equation}
    	\Omega = \left\{ i \,\middle|\, y_i \in \mathcal{C}_{\text{align}} \right\},
    \end{equation}
    where $y_i$ represents the available label or pseudo-label for location $i$, and $\mathcal{C}_{\text{align}}$ specifies the set of classes or regions targeted for feature alignment.

    This loss encourages the model to align its feature representations with those from the pretrained model, guiding the pseudo-source domain toward the source semantic structure and enhancing cross-domain feature consistency. For instance, if the model initially misclassifies a \textit{bus} as a \textit{truck}, alignment with general semantic features (e.g., \textit{vehicle}) from the pretrained model helps correct the representation. This guidance improves the reliability of the pseudo-source and provides a stable foundation for subsequent adaptation.
    
    \paragraph{Remaining-Target Semantic Alignment.} To facilitate semantic transfer from the $D_{ps}$ to the $D_{rt}$, we employ a class-aware feature mixing strategy within a semi-supervised learning framework. Specifically, we utilize pseudo-labels $y_{ps}$ and $y_{rt}$ generated for high- and low-confidence samples, respectively, to construct a class mask $\textbf{M}$ that defines region-wise semantic dominance. This mask enables interpolation between pseudo-source and target features at both the input and label levels. The resulting mixed sample $(\tilde{\boldsymbol{x}}_{\text{mix}}, \tilde{y}_{\text{mix}})$ is defined as:
    \begin{equation}
    	\label{eq_mixup}
        \begin{aligned}
    	\tilde{\boldsymbol{x}}_{\text{mix}} &= \textbf{M} \cdot \boldsymbol{x}_{ps} + (\textbf{1} - \textbf{M}) \cdot \boldsymbol{x}_{rt}, \\
        \tilde{y}_{\text{mix}} &= \textbf{M} \cdot y_{ps} + (\textbf{1} - \textbf{M}) \cdot y_{rt},
        \end{aligned}
    \end{equation}
    where $\boldsymbol{x}_{\text{ps}}$ and $\boldsymbol{x}_{\text{rt}}$ denote feature inputs from pseudo-source and target samples, and $y_{\text{ps}}$, $y_{\text{rt}}$ are their respective pseudo-labels. By enforcing prediction consistency on these mixed samples, the model is guided to propagate semantic structure from more reliable pseudo-source regions into uncertain target regions, improving decision boundary refinement and domain generalization.
    
    By leveraging the pseudo-source domain, which encapsulates high-confidence, semantically reliable regions, the mixed samples are infused with informative supervision that guides the learning of uncertain target representations. These mixed inputs are then used to optimize the model via a cross-entropy loss using CACL (\autoref{sec:cacl}):
    \begin{equation}
    	\label{eq_lce}
    	\mathcal{L}_{\text{mix}} = \text{CACL}(- \sum_{c} \tilde{y}_{\text{mix}}^{(c)} \log p(c \mid \tilde{\boldsymbol{x}}_{\text{mix}})),
    \end{equation}
    where $c$ indexes the semantic classes and $p(c \mid \tilde{\boldsymbol{x}}_{\text{mix}})$ denotes the predicted class probability from the model. This objective encourages consistency between the model's predictions and the interpolated supervision, refining decision boundaries under domain shift.
    
    Through this semi-supervised feature interpolation, the model progressively aligns uncertain target features with the more structured semantics of the pseudo-source. This process narrows the distributional gap and enhances generalization by leveraging pseudo-source guidance for the adaptation of less confident regions in the target domain.

\subsection{Hierarchical Feature Aggregation}
\label{sec:hfa}
    High-level semantic alignment is crucial for TTA, especially under large domain shifts with diverse object appearance and layout. Non-hierarchical representations, whether global or local, often miss important cues, hindering dense prediction. We introduce a \textbf{hierarchical feature aggregation} module that fuses local and global information across abstraction levels. A shared extractor encodes complementary spatial and semantic cues, producing both fine-grained and holistic representations.

    Formally, let $\boldsymbol{x} \in \mathcal{X}_t$ denote a target-domain input sample. To capture global semantic context, we downsample $\boldsymbol{x}$ and compute a coarse-grained prediction $P_{\text{global}}=\mathbf{g}(\mathbf{f}(\boldsymbol{x}))$, where $\mathbf{f}(\cdot)$ and $\mathbf{g}(\cdot)$ represents \textit{feature exactor} and \textit{classifier}. 
    
    To extract fine-grained local semantics, $\boldsymbol{x}$ is divided into overlapping local regions indexed by $\text{Grid}(\boldsymbol{x})$ in the spatial feature space, with each region processed by a shared $\mathbf{f}(\cdot)$ to extract localized features:
    \begin{equation}
    	\label{eq_flocal}
    	f_{\text{local}_i} = \left\{ \mathbf{f} \left(\boldsymbol{x}_{r_i} \right) \right\}, \quad r_i \in \text{Grid}(\boldsymbol{x}),
    \end{equation}
    
    Each patch-level prediction $P_i = \mathbf{g}(f_{\text{local}_i})$ is independently decoded using a forzen $\mathbf{g}(\cdot)$. These predictions are then aggregated to reconstruct a detailed, locally-aware output $P_{\text{local}}$. To properly handle overlapping regions, we maintain a count matrix that records the number of times each pixel is covered by a patch. The aggregated prediction is normalized by this count matrix $\boldsymbol{M}$ to ensure consistent pixel-wise contributions, where $\text{Pad}(\cdot)$ represents dimension padding alignment between $P_{i}$:
    \begin{equation}
    	\label{eq_local}
    	P_{\text{local}} = \frac{\sum_i \text{Pad}(P_i)}{\sum_i \boldsymbol{M}_i}.
    \end{equation}
    
    To effectively integrate both global and local semantic cues, we introduce a semantic-level attention mechanism to adaptively fuse $P_{\text{global}}$ and $P_{\text{local}}$. The fusion is defined as:
    \begin{equation}
    	\label{eq_fused}
    	P_{\text{fused}} = \textbf{A} \cdot P_{\text{local}} + (\mathbf{1} - \textbf{A}) \cdot \text{Align}(P_{\text{global}}).
    \end{equation}
    
    Here, $\textbf{A}_{ij} \in [0,1]$ denotes the attention weight that adaptively balances the contributions between global and local predictions. The operator $\text{Align}(\cdot)$ harmonizes their feature representations, where $P_{\text{global}}$ and $P_{\text{local}}$ jointly form the final prediction through this learned attention mechanism. % For more detailed implementation details, please refer to \autoref{sec:exp_details}.

    This framework fuses global context with fine-grained details, reinforcing semantic consistency and enabling robust cross-domain feature alignment for diverse visual tasks. 

\subsection{Confidence-Aware Complementary Learning}
\label{sec:cacl}
    To enhance high-level semantic alignment between pseudo-source and target domains, we propose a \textbf{confidence-aware complementary learning} strategy. Based on hierarchical pseudo-labels, we identify \textit{positive} classes with high confidence and \textit{negative} classes confidently rejected, providing complementary supervision that captures richer semantics and suppresses noisy predictions.
    
    In this learning strategy, let $C$ be the number of classes, and let $\boldsymbol{p} = [p_1, p_2, \dots, p_C] \in \mathbb{R}^C$ denote the predicted class probability distribution for a given input sample, satisfying:
    \begin{equation}
    	\sum_{i=1}^{C} p_i = 1, \quad p_i \geq 0.
    \end{equation}
    A confidence threshold $\tau_{\text{pos}}$ is introduced to identify positive predictions, where class $i$ is considered positive if $p_i \ge \tau_{\text{pos}}$. To justify the separation of predictions into positive and negative subsets, we introduce an entropy-based analysis in \autoref{theorem} (proof provided in \autoref{sec:theorem_proof}):
    
    \begin{theorem}%[Confidence-Separated Hypothesis Support Bound]
    	\label{theorem}
    	Let $\boldsymbol{p} \in \Delta^{C-1}$ be a categorical distribution over label space $\mathcal{Y} = \{1, \dots, C\}$, with entropy bounded by $\mathcal{H}(\boldsymbol{p}) \le H_0$. Then for any $\alpha \in (0,1)$, there exist thresholds $\tau_\alpha > \tau_\beta$ such that:
    	\begin{equation}
            \begin{aligned}
    		\mathcal{Y}_+, \mathcal{Y}_- &= \{c \mid p_c \ge \tau_\alpha\}, \{c \mid p_c \le \tau_\beta\},  \\
    		\mathcal{Y}_0 &= \mathcal{Y} \setminus (\mathcal{Y}_+ \cup \mathcal{Y}_-),
            \end{aligned}
    	\end{equation}
    	satisfying:
    	\begin{equation}
            \begin{aligned}
    		\mathbb{E}_{c \sim \boldsymbol{p}}[\log p_c \mid c \in \mathcal{Y}_+] &- \mathbb{E}_{c \sim \boldsymbol{p}}[\log(1 - p_c) \mid c \in \mathcal{Y}_-] \\
            &\ge \kappa(H_0, \tau_\alpha),
            \end{aligned}
    	\end{equation}
    	\begin{equation}
    		\sum_{c \in \mathcal{Y}_-} p_c \le \epsilon(H_0), \text{where } \epsilon(H_0) \to 0 \text{ as } H_0 \to 0.
    	\end{equation}
    \end{theorem}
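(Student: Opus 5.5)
The plan is to reduce everything to one structural consequence of the entropy bound: a low-entropy distribution has a single dominant class. Let $c^\star = \arg\max_c p_c$ and $p_{\max} = p_{c^\star}$. The first step is to show that $\mathcal{H}(\boldsymbol{p}) \le H_0$ forces $p_{\max}$ to be large. Since $-\log p_c \ge -\log p_{\max}$ for every $c$, we have $\mathcal{H}(\boldsymbol{p}) \ge -\log p_{\max}$, and hence
\begin{equation*}
p_{\max} \ge e^{-H_0}.
\end{equation*}
A sharper bound comes from Fano-type reasoning: $\mathcal{H}(\boldsymbol{p}) \ge h_b(1-p_{\max})$, where $h_b$ is the binary entropy. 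This gives $1 - p_{\max} \le \delta(H_0)$, with $\delta(H_0) := h_b^{-1}(H_0)$ taken on the branch $[0,1/2]$, and $\delta(H_0) \to 0$ as $H_0 \to 0$. In particular, the total mass of all non-maximal classes is at most $\delta(H_0)$.

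Next come the thresholds. We choose $\tau_\alpha$ with $e^{-H_0} \ge \tau_\alpha > 0$ so that $c^\star \in \mathcal{Y}_+$ and $\mathcal{Y}_+ \neq \emptyset$. Any $\tau_\alpha \in (0, e^{-H_0}]$ works, and we can tie it to $\alpha$, for instance $\tau_\alpha = \alpha e^{-H_0}$. We then take any $\tau_\beta < \tau_\alpha$, for instance $\tau_\beta = \min\{\tau_\alpha/2, \delta(H_0)\}$.

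The mass bound then follows. Every $c \in \mathcal{Y}_-$ has $p_c \le \tau_\beta < \tau_\alpha \le p_{\max}$, so $c \ne c^\star$. Therefore
\begin{equation*}
\sum_{c\in\mathcal{Y}_-} p_c \le 1 - p_{\max} \le \delta(H_0) =: \epsilon(H_0),
\end{equation*}
which tends to $0$ as $H_0 \to 0$.

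For the log gap, the conditional expectation over $\mathcal{Y}_+$ only averages $\log p_c$ with $p_c \ge \tau_\alpha$, so it is at least $\log \tau_\alpha$. On $\mathcal{Y}_-$ we have $p_c \le \tau_\beta$, so $\log(1-p_c) \le 0$, and the subtracted term is therefore at least $0$ after the sign flip. Combining,
\begin{equation*}
\mathbb{E}[\log p_c \mid \mathcal{Y}_+] - \mathbb{E}[\log(1-p_c) \mid \mathcal{Y}_-] \ge \log\tau_\alpha + \bigl(-\log(1-\tau_\beta)\bigr) \cdot 0 \ge \log \tau_\alpha,
\end{equation*}
and we set $\kappa(H_0,\tau_\alpha) := \log \tau_\alpha$. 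A tighter version uses $-\log(1-p_c) \ge p_c$ on $\mathcal{Y}_-$, but this only adds a nonnegative term, so it can be included as optional refinement. If $\mathcal{Y}_- = \emptyset$, we adopt the convention that the conditional expectation is $0$, which is consistent with the same bound. We would state this convention explicitly.

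The main obstacle is interpretive rather than technical. The statement leaves $\kappa$ unspecified, and the naive choice $\kappa = \log \tau_\alpha$ is nonpositive. To make the bound meaningful, I would also show that $\mathbb{E}[\log p_c \mid \mathcal{Y}_+] \ge \log(p_{\max}) \cdot \tfrac{\cdot}{\cdot}$-type estimates approach $0$. Concretely, as $H_0 \to 0$ the conditional law on $\mathcal{Y}_+$ concentrates on $c^\star$, and $\log p_{\max} \ge -H_0$. This yields $\kappa \to 0$ from below, so the positive and negative subsets become separated with vanishing uncertainty. Checking that the conditional weighting does not spoil this requires bounding the mass of $\mathcal{Y}_+ \setminus \{c^\star\}$ by $\delta(H_0)$ and each of those terms below by $\log\tau_\alpha$. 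That gives
\begin{equation*}
\kappa = (1-\delta)(-H_0) + \delta \log \tau_\alpha,
\end{equation*}
which is the form I would present.
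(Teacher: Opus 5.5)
Your proof is correct, and it takes a different route from the paper's.

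\textbf{How the routes differ.} The paper never isolates a dominant class. For the tail it uses a Markov-type bound on the surprisal, $\mathcal{H}(\boldsymbol{p})\ge -\log\tau_\beta\sum_{p_c\le\tau_\beta}p_c$. This gives $\epsilon(H_0)=H_0/(-\log\tau_\beta)$ for an arbitrary fixed $\tau_\beta\in(0,1)$. It then picks $\tau_\alpha$, depending on $\boldsymbol{p}$, just so that $\mathcal{Y}_+\neq\emptyset$.

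You instead derive $p_{\max}\ge e^{-H_0}$, and $h_b(1-p_{\max})\le H_0$ from the grouping identity, and bound all non-maximal mass at once. Your route buys three things:
\begin{itemize}
\item thresholds depending only on $(H_0,\alpha)$, with $\mathcal{Y}_+\neq\emptyset$ guaranteed for every admissible $\boldsymbol{p}$;
\item an $\epsilon$ that does not depend on $\tau_\beta$;
\item a refined $\kappa=-(1-\delta)H_0+\delta\log\tau_\alpha$ that really is a function of $(H_0,\tau_\alpha)$ alone and tends to $0$.
\end{itemize}
The paper's route buys a tail bound that decays as $\tau_\beta\to 0$ at any fixed $H_0$. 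Yours is insensitive to $\tau_\beta$ and is small only when $H_0$ is small. The threshold-dependent form is what the appendix inverts to get $\tau_\beta\le\exp(-H_0/\epsilon)$.

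\textbf{The gap term.} Your handling here is the sound one, and the paper's is not. The paper lower-bounds both $\mathbb{E}[\log p_c\mid c\in\mathcal{Y}_+]\ge\log\tau_\alpha$ and $\mathbb{E}[\log(1-p_c)\mid c\in\mathcal{Y}_-]\ge\log(1-\tau_\beta)$, then subtracts them to claim $\kappa=\log\tau_\alpha-\log(1-\tau_\beta)$. Subtracting a lower bound is not legitimate. For $\boldsymbol{p}=(0.5,0.49,0.01)$, $\tau_\alpha=0.5$, $\tau_\beta=0.1$, the left side is $\log 0.5-\log 0.99\approx -0.683$, which is below the claimed $\log(0.5/0.9)\approx -0.588$. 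Your observation that $\log(1-p_c)\le 0$ on $\mathcal{Y}_-$ gives $\kappa=\log\tau_\alpha$, which is what actually follows.

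\textbf{Points to tighten.}
\begin{itemize}
\item Inverting $h_b$ on $[0,1/2]$ needs $1-p_{\max}\le 1/2$. You get this from $p_{\max}\ge e^{-H_0}$ only when $H_0\le\log 2$; otherwise use $1-p_{\max}\le 1-e^{-H_0}$, or simply $\epsilon=1$.
\item Your convention for the conditional expectation must cover $\mathcal{Y}_-$ of zero $\boldsymbol{p}$-mass, not just $\mathcal{Y}_-=\emptyset$. At $H_0=0$ your choice gives $\tau_\beta=0$, which makes $\mathcal{Y}_-$ the nonempty set of zero-probability classes.
\item Replace the garbled estimate in your last paragraph with the explicit computation. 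With $P_+=\sum_{c\in\mathcal{Y}_+}p_c$, the conditional weight of $\mathcal{Y}_+\setminus\{c^\star\}$ is $1-p_{\max}/P_+\le 1-p_{\max}\le\delta$. Replacing that weight by $\delta$ is allowed because $\log\tau_\alpha\le -H_0$ for your choice $\tau_\alpha\le e^{-H_0}$.
\end{itemize}
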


%Building on this, the theorem demonstrates that low-entropy predictions allow confident partitioning of class probabilities into positive and negative subsets, providing a  foundation for selective learning from high-confidence predictions.  Beyond fixed thresholds, we propose an adaptive strategy that leverages the relative structure of $\boldsymbol{p}$.  Specially, high-confidence samples with a sharp decay in class probabilities are straightforward to classify, while low-confidence samples with flatter distributions introduce more ambiguity.  By exploiting these differences, the confidence-aware strategy leverages these differences to assign more negative labels to high-confidence samples and fewer to low-confidence ones, enabling robust learning from both confident and uncertain predictions.

    Building on \autoref{theorem}, the theorem demonstrates that low-entropy predictions allow confident partitioning of class probabilities into positive and negative subsets, providing a foundation for selective learning from high-confidence predictions. 
    
    Beyond fixed thresholds, we propose an adaptive strategy leveraging the relative structure of $\boldsymbol{p}$. Specially, high-confidence samples with sharp decay in class probabilities are straightforward to classify, while low-confidence samples with flatter distributions introduce ambiguity. By exploiting these differences, the confidence-aware strategy assigns more negative labels to high-confidence samples and fewer to low-confidence ones, enabling robust learning from both confident and uncertain predictions.

    To capture the relative confidence among class predictions, we first sort the predicted probabilities in descending order:
    \begin{equation}
    	\label{eq_psort}
    	\boldsymbol{p}_{\text{sorted}} = [p_{(1)}, p_{(2)}, \dots, p_{(C)}].
        % , p_{(1)} \ge p_{(2)} \ge \cdots \ge p_{(C)}.
    \end{equation}
    Based on the identified drop, we construct a ternary mask $\mathbf{m} \in \{-1, 0, 1\}^C$ to categorize each class $j$, and use it to define the \textit{complementary learning loss}:
    \begin{equation}
    	\label{eq_mask}
    	m_j = 
    	\begin{cases}
    		1, & \text{if } p_j \ge \tau_{\text{pos}}, \\
    		-1, & \text{if } j > i^*, \\
    		0, & \text{otherwise},
    	\end{cases}
    \end{equation}
    \begin{equation}
    	\label{eq_lcl}
        \begin{aligned}
        	\mathcal{L}_{CACL} = -\frac{1}{n_t} \sum_{i=1}^{n_t} \sum_{j=1}^{|\boldsymbol{x}_{t,i}|} \sum_{c=1}^{C} 
        	[
        	\boldsymbol{1}_\mathrm{(m_j = 1)} \log p_{\boldsymbol{x}_t}^{(i,j,c)} + \\
        	\boldsymbol{1}_\mathrm{(m_j = -1)} \log (1 - p_{\boldsymbol{x}_t}^{(i,j,c)})
        	],
        \end{aligned}
    \end{equation}
    where $i^*=\text{min}\{i|r_i\ge \tau_{\text{neg}}\}$, $\displaystyle \boldsymbol{1}_\mathrm{(\cdot)}$ denotes the indicator function, and $|\boldsymbol{x}_{t,i}|$ represents the number of prediction units (e.g., instances or positions) in the $i$-th target sample.
    
    By leveraging relative confidence gaps to distinguish confident predictions from rejections, this strategy improves semantic discrimination and enhances learning under domain shift. \textbf{Notably}, rather than treating unselected classes as positives, we apply an absolute threshold $\tau_{\text{pos}}$ to select confident positives. This approach avoids misclassifying ambiguous, hard-to-define classes as positives, ensuring reliable learning from clear, high-confidence labels.

\section{Experiments}
\label{experiments}

\begin{table*}[t]
	\caption{Semantic segmentation main results of SSA on different tasks. \textbf{SF} denotes the \textit{Source Free} method, which does not use source domain data during the alignment process. GTA5, Cityscapes, and ACDC share the same 19 categories, while SYNTHIA has only 16 of them, and its mIoU results are scaled accordingly.}
	\label{seg_results}
	\centering
	\small
	\resizebox{0.95\textwidth}{!}{%
		\begin{tabular}{l>{\centering\arraybackslash}p{1.6em}*{20}{>{\centering\arraybackslash}p{1.5em}}}
			\toprule
			Method & SF 
			& \rotatebox{90}{road} & \rotatebox{90}{side.} & \rotatebox{90}{build.}
			& \rotatebox{90}{wall} & \rotatebox{90}{fence} & \rotatebox{90}{pole}
			& \rotatebox{90}{light} & \rotatebox{90}{sign} & \rotatebox{90}{vege.}
			& \rotatebox{90}{terr.} & \rotatebox{90}{sky} & \rotatebox{90}{person}
			& \rotatebox{90}{rider} & \rotatebox{90}{car} & \rotatebox{90}{truck}
			& \rotatebox{90}{bus} & \rotatebox{90}{train} & \rotatebox{90}{motor.}
			& \rotatebox{90}{bike} & \rotatebox{90}{\textbf{mIoU}}\\
			\midrule
			\multicolumn{22}{c}{\textit{\textbf{GTA5 $\rightarrow$ Cityscapes (CS)}}}\\
			\midrule
			TransDA-B & \ding{55}  & 94.7 & 64.2  & 89.2   & 48.1 & 45.8  & 50.1 & 60.2  & 40.8 & 90.4  & 50.2  & 93.7 & 76.7   & 47.6  & 92.5 & 56.8  & 60.1 & 47.6  & 49.6  & 55.4 & 63.9 \\
			DAFormer  & \ding{55}  & 95.7 & 70.2  & 89.4   & 53.5 & 48.1  & 49.6 & 55.8  & 59.4 & 89.9  & 47.9  & 92.5 & 72.2   & 44.7  & 92.3 & 74.5  & 78.2 & 65.1  & 55.9  & 61.8 & 68.3 \\
			HRDA      & \ding{55}  & 96.4 & 74.4  & 91.0   & 61.6 & 51.5  & 57.1 & 63.9  & 69.3 & 91.3  & 48.4  & 94.2 & 79.0   & 52.9  & 93.9 & 84.1  & 85.7 & 75.9  & 63.9  & 67.5 & 73.8 \\
			IDM       & \ding{55}  & 97.2 & 77.1  & 89.8   & 51.7 & 51.7  & 54.5 & 59.7  & 64.7 & 89.2  & 45.3  & 90.5 & 74.2   & 46.6  & 92.3 & 76.9  & 59.6 & 81.2  & 57.3  & 62.4 & 69.5 \\
			\midrule
			DAFormer  & \ding{51}  & 87.7 & 33.4  & 83.9   & 28.1 & 27.5  & 35.9 & 42.9  & 28.7 & 82.4  & 28.6  & 83.1 & 65.0   & 37.0  & 85.8 & 53.9  & 46.3 & 31.8  & 23.6  & 36.8 & 49.6 \\
			HRDA     & \ding{51}  & 83.3 & 28.2  & 83.3   & 43.3 & 22.2  & 42.9 & 47.7  & 38.2 & 87.2  & 40.0  & 81.6 & 69.5   & 35.9  & 84.8 & 42.7  & 50.4 & 41.2  & 33.7  & 29.6 & 51.9\\
			IDM       & \ding{51}  & 93.9 & 59.1  & 86.6   & 35.3 & 30.4  & 42.2 & 45.1  & 57.8 & 88.4  & 35.1  & 89.4 & 69.7   & 39.8  & 89.1 & 66.8  & 46.0 & 13.5  & 41.1  & 61.2 & 57.4 \\
			ATP      & \ding{51}  & \textcolor{red}{\textbf{96.6}} & \textcolor{red}{\textbf{75.3}}  & \textcolor{red}{\textbf{89.4}}  & 50.2 & \textcolor{red}{\textbf{41.5}}  & 47.5 & 48.6  & 61.1 & 89.8  & \textcolor{red}{\textbf{48.3}}  & \textcolor{red}{\textbf{93.4}} & 70.4   & 40.1  & 89.8 & 66.7  & 58.2 & 30.3  & 53.4  & \textcolor{red}{\textbf{65.6}} & 64.0 \\
			%\midrule
			\rowcolor{gray!30}
			\textbf{SSA}     & \ding{51}  & 93.0 & 61.0  & 88.8   & \textcolor{red}{\textbf{51.8}} & 33.9  & \textcolor{red}{\textbf{54.3}} & \textcolor{red}{\textbf{62.5}}  & \textcolor{red}{\textbf{66.5}} & \textcolor{red}{\textbf{89.8}}  & 43.3  & 92.5 & \textcolor{red}{\textbf{78.1}}   & \textcolor{red}{\textbf{47.2}}  & \textcolor{red}{\textbf{93.0}} & \textcolor{red}{\textbf{78.6}}  & \textcolor{red}{\textbf{79.6}} & \textcolor{red}{\textbf{74.6}}  & \textcolor{red}{\textbf{63.8}}  & 63.4  & \textcolor{red}{\textbf{69.2}} \\
			
			\midrule
			\multicolumn{22}{c}{\textit{\textbf{SYNTHIA $\rightarrow$ Cityscapes (CS)}}}\\
			\midrule
			TransDA-B & \ding{55}  & 90.4 & 54.8  & 86.4   & 31.1 & 1.7  & 53.8 & 61.1  & 37.1 & 90.3  & - & 93.0  & 71.2 & 25.3   & 92.3   & - & 66.0 & -  & 44.4  & 49.8 & 59.3  \\
			DAFormer  & \ding{55}  & 84.5 & 40.7  & 88.4   & 41.5 & 6.5  & 50.0 & 55.0  & 54.6 & 86.0 & - & 89.8  & 73.2 & 48.2   & 87.2  & - & 53.2 & - & 53.9  & 61.7 & 60.9  \\
			HRDA      & \ding{55}  & 85.2 & 47.7  & 88.8   & 49.5 & 4.8  & 57.2 & 65.7  & 60.9 & 85.3  & - & 92.9  & 79.4 & 52.8   & 89.0   & - & 64.7 & - & 63.9  & 64.9 & 65.8  \\
			\midrule
			DAFormer  & \ding{51}  & 64.3 & 25.1  & 78.5   & 23.8 &\textcolor{red}{ \textbf{1.9}}  & 37.3 & 29.7  & 22.8 & 80.4  & - & 83.0  & 65.1 & 26.6   & 69.8  & - & 38.3 & - & 22.7   & 32.8 & 43.8  \\
			HRDA     & \ding{51}  & 72.2 & 26.6  & 80.8   & 23.0 & 0.5  & 42.5 & 41.0  & 31.5 & 84.3  & - & 86.2  & 64.3 & 29.3   & 73.5  & -  & 28.8 & - & 12.4  & 41.6 & 46.1  \\
			IDM       & \ding{51}  & 82.2 & 37.9  & 83.5   & 20.3 & 1.5  & 47.3 & 41.7  & 25.6 & 84.4  & - & 86.8  & 61.6 & 25.0   & 87.6  & - & 43.7 & - & 30.2  & 36.4 & 49.7  \\
			MISFIT    & \ding{51}  & 80.2 & 38.5  & 85.9   & 30.3 & 1.2  & 52.3 & 56.8  & 29.0 & \textcolor{red}{\textbf{89.9}}  & - & 88.3  & 68.1 & 10.8   & \textcolor{red}{\textbf{92.1}}  & - & \textcolor{red}{\textbf{69.0}} & - & 26.3  & 52.6 & 54.5  \\
			ATP       & \ding{51}  & \textcolor{red}{\textbf{90.6}} & \textcolor{red}{\textbf{54.4}}  & 86.7   & 28.5 & 0.5  & 50.3 & 52.4  & 50.5 & 87.4  & - & \textcolor{red}{\textbf{93.4}}  & 70.2 & 35.8   & 89.6 & -  & 53.5 & - & 50.6  & 51.1 & 59.1  \\
			%\midrule
			\rowcolor{gray!30}
			\textbf{SSA}     & \ding{51}  & 90.0 & 51.9  & \textcolor{red}{\textbf{87.0}}   & \textcolor{red}{\textbf{35.4}} & 1.0  & \textcolor{red}{\textbf{56.7}} & \textcolor{red}{\textbf{64.6}} & \textcolor{red}{\textbf{53.6}} & 88.5 & - & 92.3 & \textcolor{red}{\textbf{78.3}} & \textcolor{red}{\textbf{44.4}}   & 89.8  & - & 63.4 & - & \textcolor{red}{\textbf{62.8}}  & \textcolor{red}{\textbf{65.1}} & \textcolor{red}{\textbf{64.1}}  \\
			
			\midrule
			\multicolumn{22}{c}{\textit{\textbf{Cityscapes (CS) $\rightarrow$ ACDC}}}\\
			\midrule
			TENT      & \ding{51}  & 85.3 & 50.2  & 85.4   & 45.4 & 32.7  & 50.4 & 59.4  & 66.1 & 86.4  & 45.7  & 97.5 & 57.9   & 53.8  & 84.7 & 51.0  & 66.9 & 72.4  & 40.2  & 50.1 & 62.2 \\
			CoTTA    & \ding{51}  & 85.7 & 50.9  & \textcolor{red}{\textbf{85.9}}   & 45.9 & 33.6  & 54.8 & 62.3  & \textcolor{red}{\textbf{69.9}} & 87.1  & 45.7  & \textcolor{red}{\textbf{97.7}} & 63.3   & \textcolor{red}{\textbf{59.4}}  & 85.1 & 52.8  & 68.0 & 74.1  & 44.9  & 55.1 & 64.4 \\
			DePT    & \ding{51}  & 85.0 & 50.6  & 85.5   & 45.7 & 33.2  & 53.9 & 61.6  & 69.4 & 86.7  & 45.5  & 97.4 & 62.6   & 59.2  & 85.1 & 52.5  & 68.0 & 73.7  & 44.3  & 54.3 & 63.9 \\
			VDP      & \ding{51}  & 85.7 & 50.9  & \textcolor{red}{\textbf{85.9}}   & 45.9 & 33.6  & 54.8 & 62.2  & \textcolor{red}{\textbf{69.9}} & 87.0  & 45.7  & 97.6 & 63.3   & 59.2  & 85.1 & 52.8  & 68.1 & 74.1  & 44.8  & \textcolor{red}{\textbf{54.9}} & 64.3 \\
			IDM      & \ding{51}  & \textcolor{red}{\textbf{88.8}} & \textcolor{red}{\textbf{63.2}}  & 85.8   & 45.5 & 30.3  & 42.1 & 69.7  & 62.7 & \textcolor{red}{\textbf{87.4}}  & \textcolor{red}{\textbf{51.7}}  & 96.5 & 61.8   & 29.3  & 86.5 & 80.5  & 68.9 & 70.1  & \textcolor{red}{\textbf{57.0}}  & 52.8 & 64.8 \\
			%\midrule
			\rowcolor{gray!30}
			\textbf{SSA}     & \ding{51}  & 84.1 & 54.7  & 84.6   & \textcolor{red}{\textbf{51.7}} & \textcolor{red}{\textbf{43.7}}  & \textcolor{red}{\textbf{61.5}} & \textcolor{red}{\textbf{72.5}}  & 59.5 & 72.2  & 38.5  & 79.1 & \textcolor{red}{\textbf{63.8}}   & 44.6  & \textcolor{red}{\textbf{88.0}} & \textcolor{red}{\textbf{84.4}}  & \textcolor{red}{\textbf{76.6}} & \textcolor{red}{\textbf{78.3}}  & 48.2  & 53.4 & \textcolor{red}{\textbf{65.2}} \\
			\bottomrule
	\end{tabular}}
	\vspace{-5pt}
\end{table*}

\subsection{Datasets and evaluation mtrics}
    We evaluate SSA on semantic segmentation and image classification (single/multi-label) benchmarks. For segmentation, with its dense pixel-level supervision and higher cross-domain difficulty, provides a rigorous testbed. We use GTA5$\rightarrow$Cityscapes, SYNTHIA$\rightarrow$Cityscapes, and Cityscapes$\rightarrow$ACDC: GTA5~\cite{GTA5} and SYNTHIA~\cite{SYNTHIA} are synthetic datasets with pixel-level labels, while Cityscapes~\cite{Cityscapes} and ACDC~\cite{ACDC} contain real urban scenes, the latter emphasizing challenging conditions (e.g., night, fog). Performance is measured by mean IoU. For single-label classification, we use Office-31~\cite{Office-31}, Office-Home, VisDA-C, and DomainNet-126 covering datasets from small- to large-scale with diverse domain shifts, and report Top-1 accuracy. For multi-label classification, we use COCO2014/2017~\cite{COCO} and VOC2007/2012~\cite{VOC}. Note that the main experiment focuses on single-label classification, while multi-label classification is used to analyze the semantic-intensive scaling effects of SSA provided in \autoref{sec:scaling_ana}. More comparison methods and dataset information are provided in \autoref{sec:competitors}.
    
\subsection{Implementation details}
\label{imp_details}
    For semantic segmentation, we mainly use MiT-B5~\cite{segformer} as the backbone, and ResNet-50/101 for classification. Segmentation is implemented in MMSeg~\cite{mmseg2020}, while classification follows the SHOT~\cite{SHOT} codebase. Optimization employs SGD with learning rate $2.5 \times 10^{-4}$, momentum $0.9$, and weight decay $5 \times 10^{-4}$, using a \textit{poly} schedule $(1 - \frac{\textit{iter}}{\textit{max\_iter}})^{0.9}$. During entropy minimization the classifier is frozen; otherwise it is updated with learning rate $2.5 \times 10^{-3}$. \textbf{Note that} the Test-Time Domain Adaptation (TTDA), also known as Source-Free Domain Adaptation (SFDA)~\cite{liang2025comprehensive}. Detailed implementation details for methods in \autoref{method} are provided in \autoref{sec:exp_details}, parameter analysis provided in \autoref{sec:parameter_ana}, computational analysis provided in \autoref{sec:appendix_cost_ana}.
    %with online TTA results referenced in the \autoref{otta} (more description in \autoref{sec:exp_details}).
    
\subsection{Main Results of Semantic Segmentation}
    As shown in \autoref{seg_results}, we evaluate SSA on three UDA benchmarks: GTA5$\to$Cityscapes, SYNTHIA$\to$Cityscapes, and Cityscapes$\to$ACDC. Specially, on GTA5$\to$ task, SSA attains 69.2 mIoU, $+$5.2 over the prior best, with gains in frequent classes (\textit{car}, \textit{road}) and fine-grained or ambiguous ones (\textit{person}, \textit{bus}, \textit{train}). On SYNTHIA$\to$Cityscapes tsak, despite larger content/layout gaps, SSA delivers 64.1 mIoU (16-class), surpassing all source-free methods. Better scores on \textit{pole} (56.7), \textit{rider} (44.4), and \textit{motorcycle} (62.8) indicate that semantic alignment alleviates boundary instability typical of shallow features. On Cityscapes$\to$ACDC task, under fog, night, and rain, SSA reaches 65.2 mIoU, preserving semantic separability even when low-level cues degrade.
    
    %Overall, SHLSA’s semantic-aware stepwise alignment reduces \textbf{class confusion} and \textbf{unstable boundaries} beyond SDE, enabling reliable performance across diverse urban scenes (visualization and qualitative analyses in \autoref{seg_analysis_effect}). 

\subsection{Main Results of Single-label Image Classification}
    \begin{table*}[t]
	\centering
	\caption{Single-Label image classification performance of SSA on different tasks. \textbf{SF} denotes the \textit{Source Free} method.}
	\label{cls_results}
	\small
	\resizebox{0.9\textwidth}{!}{
		\begin{tabular}{
				l
				>{\centering\arraybackslash}p{2em}
				*{13}{>{\centering\arraybackslash}p{2.5em}}
			}
			\toprule
			% ==== 第三块 ====
			\multicolumn{15}{c}{\textit{\textbf{Office-31 (ResNet-50)}}}\\
			\cmidrule(lr){1-15}
			Method & SF &  \multicolumn{2}{c}{A$\to$D} & \multicolumn{2}{c}{A$\to$W} & \multicolumn{2}{c}{D$\to$A} & \multicolumn{2}{c}{D$\to$W} & \multicolumn{2}{c}{W$\to$A} & \multicolumn{2}{c}{W$\to$D} & Avg\\
			\midrule
			SHOT & \ding{51} & \multicolumn{2}{c}{93.7} & \multicolumn{2}{c}{91.1} & \multicolumn{2}{c}{74.2} & \multicolumn{2}{c}{98.2} & \multicolumn{2}{c}{74.6} & \multicolumn{2}{c}{\textcolor{red}{\textbf{100.}}} & 88.6 \\
			%			ELR & \ding{51} & \multicolumn{2}{c}{93.8} & \multicolumn{2}{c}{93.3} & \multicolumn{2}{c}{76.2} & \multicolumn{2}{c}{98.0} & \multicolumn{2}{c}{76.9} & \multicolumn{2}{c}{\textcolor{red}{\textbf{100.}}} & 89.6 \\
			DIFO & \ding{51} & \multicolumn{2}{c}{\textcolor{red}{\textbf{97.2}}} & \multicolumn{2}{c}{95.5} & \multicolumn{2}{c}{83.0} & \multicolumn{2}{c}{97.2} & \multicolumn{2}{c}{83.2} & \multicolumn{2}{c}{98.8} & 92.5 \\
			ProDe & \ding{51} & \multicolumn{2}{c}{96.8} & \multicolumn{2}{c}{96.4} & \multicolumn{2}{c}{\textcolor{red}{\textbf{83.1}}} & \multicolumn{2}{c}{97.0} & \multicolumn{2}{c}{\textcolor{red}{\textbf{82.5}}} & \multicolumn{2}{c}{99.8} & 92.6 \\
			%\midrule
			\rowcolor{gray!30}
			\textbf{SSA}      & \ding{51} & \multicolumn{2}{c}{97.0} & \multicolumn{2}{c}{\textcolor{red}{\textbf{96.9}}} & \multicolumn{2}{c}{82.6} & \multicolumn{2}{c}{\textcolor{red}{\textbf{98.2}}} & \multicolumn{2}{c}{82.1} & \multicolumn{2}{c}{99.8} & \textcolor{red}{\textbf{92.8}} \\
			
			% ==== 第二块 ====
			\midrule
			\multicolumn{15}{c}{\textit{\textbf{Office-Home (ResNet-50)}}}\\
			\cmidrule(lr){1-15}
			Method & SF & Ar$\to$Cl & Ar$\to$Pr & Ar$\to$Rw & Cl$\to$Ar & Cl$\to$Pr & Cl$\to$Rw &	Pr$\to$Ar & Pr$\to$Cl & Pr$\to$Rw & Rw$\to$Ar & Rw$\to$Cl & Rw$\to$Pr & Avg.\\
			\midrule
			PDA & \ding{55} & 55.4 & 85.1 & 85.8 & 75.2 & 85.2 &  85.2 & 74.2 & 55.2 & 85.8 & 74.7 & 55.8 & 86.3 & 75.3 \\
			DAMP & \ding{55} & 59.7 & 88.5 & 86.8 & 76.6 & 88.9 &  87.0 & 76.3 & 59.6 & 87.1 & 77.0 & 61.0 & 89.9 & 78.2 \\
			\midrule
			SHOT & \ding{51} & 57.1 & 78.1 & 81.5 & 68.0 & 78.2 & 78.1 & 67.4 & 54.9 & 82.2 & 73.3 & 58.8 & 84.3 & 71.8 \\
			%			ATP & \ding{51} & 59.3 & 78.5 & 82.6 & 69.4 & 82.4 & 77.7 & 68.0 & 56.3 & 82.3 & 74.3 & 62.3 & 84.5 & 73.1 \\
			DIFO & \ding{51} & 70.6 & 90.6 & 88.8 & 82.5 & 90.6 & 88.8 & 80.9 & 70.1 & 88.9 & 83.4 & 70.5 & 91.2 & 83.1 \\
			ProDe & \ding{51} & 72.7 & \textcolor{red}{\textbf{92.3}} & 90.5 & 82.5 & 91.5 & 90.7 & 82.5 & 72.5 & \textcolor{red}{\textbf{90.8}} & 83.0 & 72.6 & 92.2 & 84.5 \\
			%\midrule
			\rowcolor{gray!30}
			\textbf{SSA}      & \ding{51} & \textcolor{red}{\textbf{73.1}} & 91.9 & \textcolor{red}{\textbf{91.2}} & \textcolor{red}{\textbf{84.0}} & \textcolor{red}{\textbf{91.6}} & \textcolor{red}{\textbf{90.8}} & \textcolor{red}{\textbf{82.8}} & \textcolor{red}{\textbf{73.7}} & 90.7 & \textcolor{red}{\textbf{83.4}} & \textcolor{red}{\textbf{74.7}} & \textcolor{red}{\textbf{92.4}} & \textcolor{red}{\textbf{85.0}} \\
			
			% ==== 第一块 ====
			\midrule 
			\multicolumn{15}{c}{\textit{\textbf{VisDA-C (ResNet-101)}}}\\
			\cmidrule(lr){1-15}
			Method & SF & plane & bike & bus & car & horse & knife & mcycle & person & plant & sktbrd & train & truck & Avg.\\
			\midrule
			STAR & \ding{55}  & 95.0 & 84.0 & 84.6 & 73.0 & 91.6 & 91.8 & 85.9 & 78.4 & 94.4 & 84.7 & 87.0 & 42.2 & 82.7 \\
			RWOT & \ding{55}  & 95.1 & 80.3 & 83.7 & 90.0 & 92.4 & 68.0 & 92.5 & 82.2 & 87.9 & 78.4 & 90.4 & 68.2 & 84.0 \\
			\midrule
			SHOT & \ding{51}  & 94.3 & 88.5 & 80.1 & 57.3 & 93.1 & 94.9 & 80.7 & 80.3 & 91.5 & 89.1 & 86.3 & 58.2 & 82.9 \\
			%PLUE & \ding{51}  & 97.3 & 96.2 & 90.5 & \textcolor{red}{\textbf{91.8}} & 90.0 & 94.2 & 87.4 & 87.7 & 97.0 & 84.3 & 93.0 & 81.0 & 90.0 \\
			ATP & \ding{51}  & 97.6 & 91.8 & \textcolor{red}{\textbf{88.7}} & 73.1 & 97.6 & 92.9 & 92.0 & \textbf{95.7} & 93.4 & 89.0 & 87.9 & 71.3 & 89.3 \\
			ProDe & \ding{51}  & 98.3 & \textcolor{red}{\textbf{92.4}} & 86.6 & 80.5 & 98.1 & \textcolor{red}{\textbf{98.0}} & 92.3 & 84.3 & 94.7 & \textcolor{red}{\textbf{97.0}} & 94.1 & 75.6 & 91.0 \\
			%\midrule
			\rowcolor{gray!30}
			\textbf{SSA}   & \ding{51}  & \textcolor{red}{\textbf{98.3}} & 90.0 & 88.0 & \textcolor{red}{\textbf{87.2}} & \textcolor{red}{\textbf{98.3}} & 97.3 & \textcolor{red}{\textbf{93.5}} & 86.3 & \textcolor{red}{\textbf{97.7}} & 96.6 & \textcolor{red}{\textbf{95.7}} & \textcolor{red}{\textbf{76.8}} & \textcolor{red}{\textbf{92.1}} \\
    		\midrule
            \multicolumn{15}{c}{\textit{\textbf{DomainNet-126 (ResNet-101)}}}\\
            \midrule
    		SHOT  & \ding{51} & 63.5 & 78.2 & 59.5 & 67.9 & 81.3 & 61.7 & 67.7 & 67.6 & 57.8 & 70.2 & 64.0 & 78.0 & 68.1 \\ 
    		ProDe & \ding{51} & 79.3 & 91.0 & 75.3 & 80.0 & \textbf{\textcolor{red}{90.9}} & 75.6 & 80.4 & 78.9 & 75.4 & \textbf{\textcolor{red}{80.4}} & 79.2 & 91.0 & 81.5 \\ 
    		%\midrule
            \rowcolor{gray!30}
    		\textbf{SSA} & \ding{51} & \textbf{\textcolor{red}{81.2}} & \textbf{\textcolor{red}{92.3}} & \textbf{\textcolor{red}{77.1}} & \textbf{\textcolor{red}{83.2}} & \textbf{\textcolor{red}{90.9}} & \textbf{\textcolor{red}{77.4}} & \textbf{\textcolor{red}{82.3}} & \textbf{\textcolor{red}{79.2}} & \textbf{\textcolor{red}{77.3}} & 80.1 & \textbf{\textcolor{red}{82.3}} & \textbf{\textcolor{red}{93.4}} & \textbf{\textcolor{red}{83.1}} \\ 
			\bottomrule
		\end{tabular}
	}
\end{table*}

    As shown in \autoref{cls_results}, we evaluate SSA on Office-31, Office-Home, VisDA-C and DomainNet-126 under source-free settings to assess its ability. Specially, on Office-31 SSA reaches 92.8\% across 6 shifts. On Office-Home, SSA attains 85.0\% average accuracy over 12 domain shifts, while on VisDA-C and DomainNet-126, SSA achieves 92.1\% and 83.1\% average accuracy respectively, outperforming prior methods through multi-level semantic alignment. Strong results on \textit{plane}, \textit{horse}, and \textit{train} show how hierarchical feature aggregation improves semantic separability.

\section{Analysis}
\label{sec:analysis}

\subsection{Ablation Study}
    %As shown in \autoref{albation}, introducing \textbf{HFA} alone boosts segmentation performance from 44.5 to 57.5 mIoU by aggregating local and global features, which enhances high-level semantic representations; however, it shows minimal impact on classification (84.6\%), suggesting spatial context is more crucial in dense prediction tasks (more analysis is in the \autoref{scaling}). Adding \textbf{CACL} further improves both segmentation (65.6 mIoU) and classification (84.7\%) by leveraging high-confidence predictions and uncertain regions to provide complementary supervision. Finally, \textbf{SDA} brings consistent gains across tasks, with segmentation mIoU reaching 69.2 and classification accuracy improving to 85.0\%, highlighting the effectiveness of entropy-aware domain partitioning and progressive alignment. These results collectively demonstrate the complementary strengths of the three modules and the versatility of SHLSA across heterogeneous tasks. More detailed results are in the \autoref{appendix_albation}.

    As shown in \autoref{albation}, we evaluate the interactions among modules in SSA using GTA5$\to$Cityscapes. Specially, introducing \textbf{HFA} alone boosts segmentation performance from 44.5 to 57.5 mIoU by aggregating local and global features, enhancing high-level semantic representations. Adding \textbf{CACL} further improves segmentation (65.6 mIoU) by leveraging high-confidence predictions and uncertain regions for complementary supervision. Finally, \textbf{SDA} brings gains  with segmentation mIoU reaching 69.2, highlighting the effectiveness of stepwise alignment. Furthermore, these findings demonstrate that the three components exhibit strong inter-dependencies, with HFA serving as the foundation for enhancing the effectiveness of both CACL and SSA, while CACL provide complementary improvements that collectively contribute to the superior performance of the complete SSA framework. More detailed analysis and implementation details provided in \autoref{sec:appendix_albation}.
    \begin{table}[h]
	\centering
    \vskip -0.1in
	\caption{Ablation Studies on GTA5$\to$Cityscapes.}
	\label{albation}
	%\small
    \resizebox{0.35\textwidth}{!}{
	\begin{tabular}{ccccc}
		\toprule
		Method & HFA & CACL & SSA & mIoU \\ 
		\midrule
		Baseline &  &  &  & 44.5 \\ 
		(a) & \ding{51} &  &  & 57.5 \\ 
		(b) &  & \ding{51} &  & 50.2 \\ 
		(c) &  &  & \ding{51} & 53.4 \\ 
		(d) & \ding{51} & \ding{51} &  & 65.6 \\ 
		(e) & \ding{51} &  & \ding{51} & 66.7 \\ 
		(f) &  & \ding{51} & \ding{51} & 60.3 \\ 
		(g) & \ding{51} & \ding{51} & \ding{51} & 69.2 \\ 
		\bottomrule
	\end{tabular}
	}
    \vskip -0.1in
\end{table}

\subsection{SSA Effectiveness Visualization}
    To more intuitively demonstrate the effectiveness of SSA on downstream tasks, we present the visualization results of semantic segmentation. As shown in \autoref{fig:seg_effect}, compared to the baseline, SSA significantly improves semantic structure and boundary accuracy. It recovers fine details in challenging regions such as traffic participants (\textit{person}, \textit{rider}) and urban infrastructure (\textit{pole}, \textit{traffic sign}), which are often under-segmented or misclassified by source models.

    \begin{figure}[H]
    	\centering
        \vskip -0.1in
    	\includegraphics[width=0.88\columnwidth]{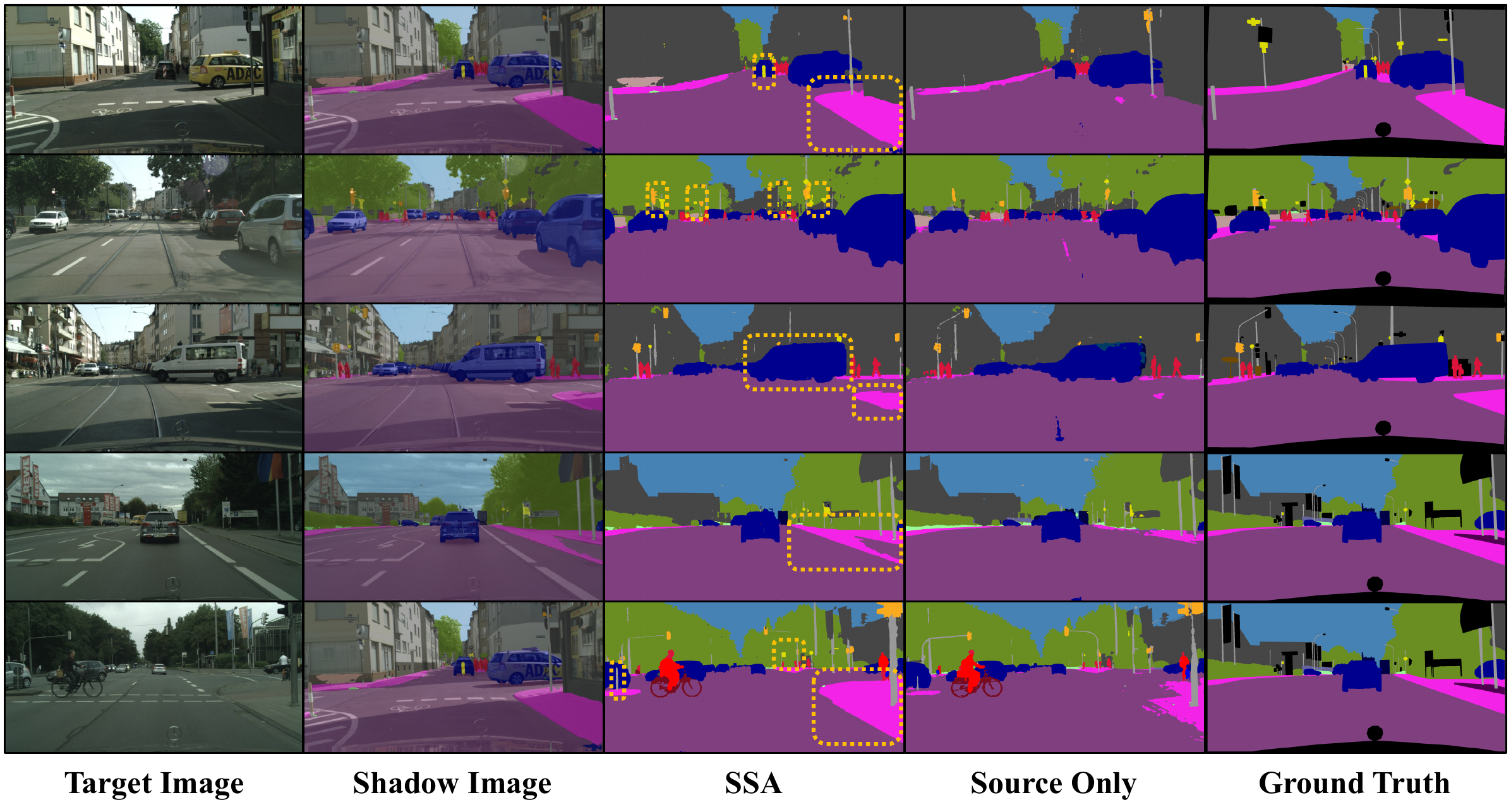}{}
    	\caption{SSA segmentation visualization on GTA5$\to$Cityscapes.}
        \vskip -0.1in
    	\label{fig:seg_effect}
        \vskip -0.1in
    \end{figure}

    \begin{figure}[ht]
    	\centering
        \vskip -0.1in
        \begin{subfigure}[b]{0.40\columnwidth}
            \centering
            \includegraphics[width=\textwidth]{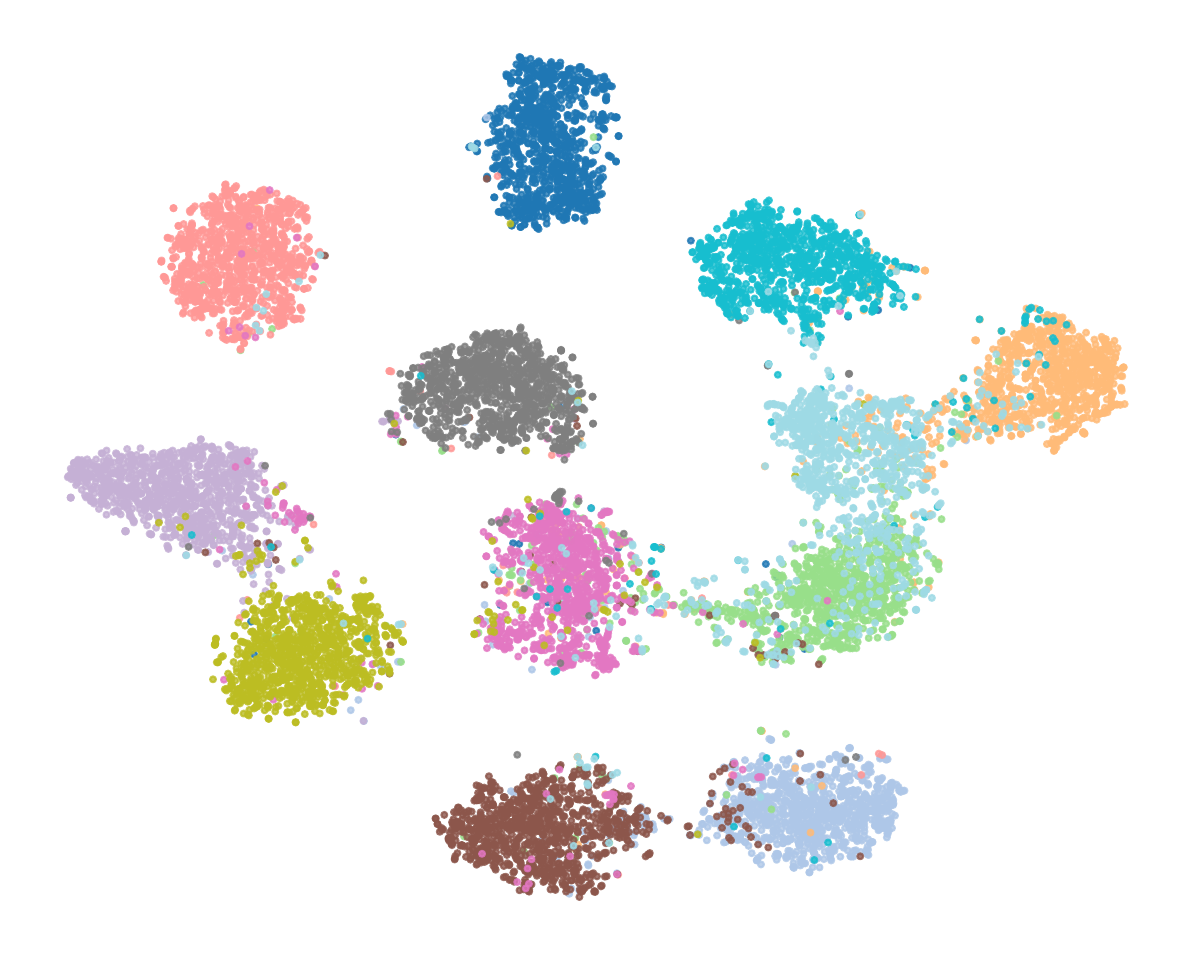}
            \caption{SHOT}
        \end{subfigure}
        \begin{subfigure}[b]{0.40\columnwidth}
            \centering
            \includegraphics[width=\textwidth]{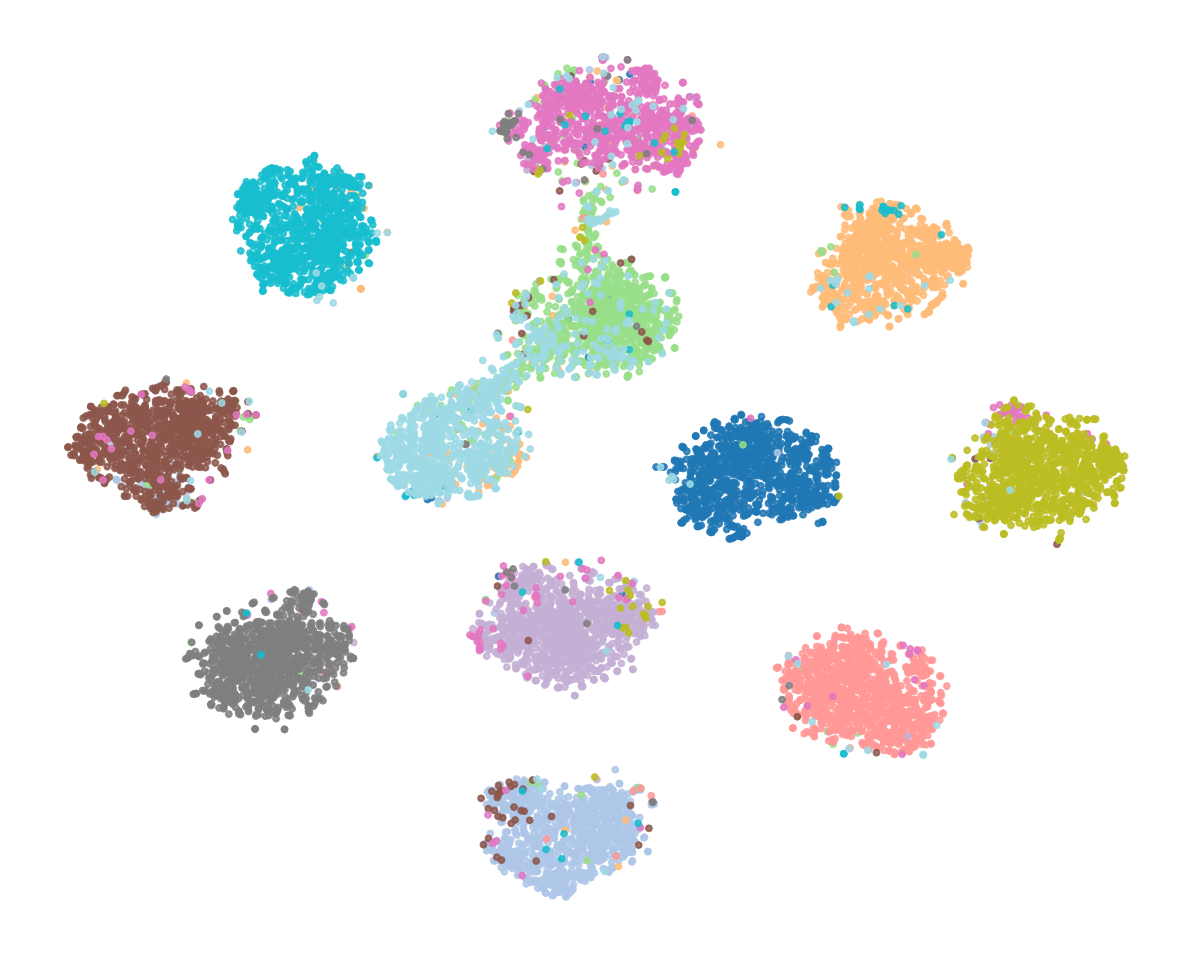}
            \caption{SSA}
        \end{subfigure}
        %\vskip -0.1in
        \vspace{-2pt}
        \caption{SSA t-SNE visualization on VisDA-C.}
        \label{fig:tsne_ssa}
        \vskip -0.2in
    \end{figure}

    Additionally, we also conducted t-SNE visualization on image classification tasks. As shown in \autoref{fig:tsne_ssa}, compared to SHOT, SSA generates the more distinct and well-separated clusters, providing strong empirical support for SSA's ability to handle challenging domain shifts better than conventional one-step alignment methods. More results provided in \autoref{sec:appendix_visual_res_ana}.

\subsection{Effectiveness of SSA in More Realistic Scenario}
    To test SSA's adaptive performance in realistic scenarios, we tested SSA in online scenarios. As shown in \autoref{tab:online_adaptation}, the method achieves an average accuracy of 76.62\% for Office-Home and 82.58\% for VisDA-C across domain transfer tasks. These results are noteworthy given the challenge of online adaptation, where the model adapts incrementally without the complete target dataset.
    
    \begin{table}[H]
	\centering
    \vskip -0.1in
	\caption{Online scenarios results of SSA in on Image Classification.}
    % \vskip -0.1in
    \vspace{-2pt}
	\label{tab:online_adaptation}
	%\small
    \resizebox{0.35\textwidth}{!}{
	\begin{tabular}{lccc}
		\toprule
		Method & Office-Home & VisDA-C & Avg.  \\ 
		\midrule
		TENT & 67.27 & 75.43 & 71.35 \\ 
		SHOT & 67.89 & 76.69 & 72.29 \\
        \rowcolor{gray!30}
		\textbf{SSA} & \textbf{\textcolor{red}{76.62}} & \textbf{\textcolor{red}{82.58}} & \textbf{\textcolor{red}{79.60}} \\
		\bottomrule
	\end{tabular}
    }
    \vskip -0.1in
\end{table}
    \begin{table}[H]
	\centering
    %\vskip -0.1in
    % \vspace{-5pt}
    \caption{Multi-source/target results of SSA on Office-Home.}
    \label{msda_mtda}
    \scriptsize
    \centering
    \begin{tabular}{c c c c c c c}
        \toprule
        Task & Method & Ar  & Cl & Pr & Rw  & Avg. \\
        \midrule
        \multirow{3}{*}{\textbf{MTDA}} 
        & CoNMix & 75.6 & 81.4 & 71.4 & 73.4 & 75.4\\
        & ProDe & 83.3 & 89.2 & 80.9 & 81.2 & 83.6\\
        & \cellcolor{gray!30}\textbf{SSA} &
        \cellcolor{gray!30}\color{red}\textbf{84.1} &
        \cellcolor{gray!30}\color{red}\textbf{89.4} &
        \cellcolor{gray!30}\color{red}\textbf{81.6} &
        \cellcolor{gray!30}\color{red}\textbf{81.6} &
        \cellcolor{gray!30}\color{red}\textbf{84.2} \\
        \midrule
        \multirow{4}{*}{\textbf{MSDA}} 
        & SHOT-Ens & 82.9 & 82.8 & 59.3 & 72.2 & 74.3\\
        & DECISION & 83.6 & 84.4 & 59.4 & 74.5 & 75.5\\
        & ProDe & 91.1 & 92.5 & 73.4 & 83.0 & 85.0\\
        & \cellcolor{gray!30}\textbf{SSA} &
        \cellcolor{gray!30}\color{red}\textbf{91.7} &
        \cellcolor{gray!30}\color{red}\textbf{93.1} &
        \cellcolor{gray!30}\color{red}\textbf{75.2} &
        \cellcolor{gray!30}\color{red}\textbf{83.6} &
        \cellcolor{gray!30}\color{red}\textbf{85.9} \\
        \bottomrule
    \end{tabular}
    % \vskip -0.1in
\end{table}

    Furthermore, we conducted tests on TTA for multi-source domains (MSDA) and multi-target domains (MTDA). As shown in \autoref{msda_mtda}, SSA demonstrates excellent performance in both multi-source TTA scenarios, which are susceptible to source domain interference, and multi-target domain scenarios, which are prone to multiple distribution shifts.

\subsection{Rationality of Using Pre-Trained Models}
    To explore the core role of the source model's pre-trained model in the SSA process, we designed additional comparative experiments to determine whether it introduces extra domain prior information. Specifically, we compared the performance of a pre-trained model that was not trained on the source domain with a trained source model on the same domain adaptation tasks. Note that since the classifier of the pre-trained model does not match the category dimensions of the target domain, we replaced it with the classifier of the source model while retaining its feature extractor.

    \begin{table}[h]
	\centering
    % \vskip -0.1in
	\caption{Source and pre-trained models results on Office-Home.}
	\label{pretrain_model_ana}
	%\scriptsize
    \resizebox{0.42\textwidth}{!}{
	\begin{tabular}{l c c c c c c}
		\toprule
        \multirow{2}{*}{Model} & \multirow{2}{*}{$\mathcal{D}_s$}
		& \multicolumn{4}{c}{$\mathcal{D}_t$}
		& \multirow{2}{*}{Avg.} \\
		\cmidrule(lr){3-6}
		& & Ar & Cl & Pr & Rw & \\
        \midrule
        \multirow{4}{*}{Pre-Trained}
		& Ar   & 90.32 & 33.17 & 63.62 & 71.72 & 64.71 \\
		& Cl  & 44.26	& 87.39	& 59.04	& 54.52	& 61.30 \\
		& Pr  & 44.68	& 32.46	& 91.28	& 71.38	& 59.95 \\
		& Rw & 54.28	& 35.60	& 71.57	& 91.05	& 63.13 \\
        \midrule
        \multirow{4}{*}{Source}
        & Ar   & 97.50  & 44.80 & 65.88 & 72.11 & \color{red}\textbf{70.07} \\
		& Cl  & 46.86 & 97.06& 59.69 & 61.07 & \color{red}\textbf{66.17} \\
		& Pr  & 49.91 & 40.20	& 99.27	& 71.55	& \color{red}\textbf{65.23} \\
		& Rw & 62.43 & 46.44 & 76.80 & 98.02 & \color{red}\textbf{70.92}\\
		\bottomrule
	\end{tabular}
    }
\end{table}

    As shown in \autoref{pretrain_model_ana}, in all 12 adaptation tasks on the Office-Home dataset, the source model significantly outperformed the pre-trained model. This indicates that \textit{\textbf{the pre-trained model does not provide extra domain priors beyond the source model}}, its role in SSA is more about providing general semantics to prevent the reference model from severely deviating from the original semantics during updates (the semantic categories of the source and target domains are the same, only features such as style differ).

\subsection{SSA's Scaling Effect with Semantic-Intensive}
\label{sec:scaling_ana}
    %During the experiments, we found that the performance improvement of SSA varies across different tasks. This difference is specifically manifested as becoming more significant with the increase in the semantic density of the task (classifier dimensions and the number of samples). It is intuitively shown that the improvement in semantic segmentation tasks is greater than in image classification tasks. To further verify this observation, we additionally supplemented the experiments with a multi-label image classification task, which has a semantic density between semantic segmentation and single-label image classification. As shown in \autoref{tab:multilabel_performance} and \autoref{fig:ssa_scaling}, the performance improvement is higher than that of single-label image classification but lower than that of semantic segmentation.

    During the experiments, we found that SSA's performance improvement varies across tasks, becoming more significant with increased semantic density (\textit{classifier} dimensions and sample numbers). Intuitively, the improvement in semantic segmentation tasks is greater than in image classification tasks. To verify this, we added a multi-label image classification task, which has a semantic density between semantic segmentation and single-label image classification. As shown in \autoref{tab:multilabel_performance} and \autoref{fig:ssa_scaling}, the performance improvement is higher than single-label image classification but lower than semantic segmentation.

    \begin{table}[h]
	\centering
	\caption{Multi-label classification of SSA.}
	\label{tab:multilabel_performance}
	%\small
    \resizebox{0.48\textwidth}{!}{
	\begin{tabular}{lccccc}
		\toprule
		Method & COCO2014 & COCO2017 & VOC2007 & VOC2012 & mAcc \\ 
		\midrule
		CLIP   & 47.53 & 47.32 & 75.91 & 74.25 & 61.25 \\ 
		TPT    & 48.52 & 48.51 & 75.54 & 73.92 & 61.62 \\ 
		ML-TTA & 51.58 & 51.39 & 78.62 & 76.63 & 64.56 \\
        \midrule
		\rowcolor{gray!30} 
		\textbf{SSA}  & \textbf{\textcolor{red}{53.27}} & \textbf{\textcolor{red}{52.96}} & \textbf{\textcolor{red}{79.37}} & \textbf{\textcolor{red}{77.29}} & \textbf{\textcolor{red}{65.72}} \\ 
		\bottomrule
	\end{tabular}
    }
\end{table}

    \begin{figure}[ht]
    	\centering
    	\includegraphics[width=0.95\columnwidth]{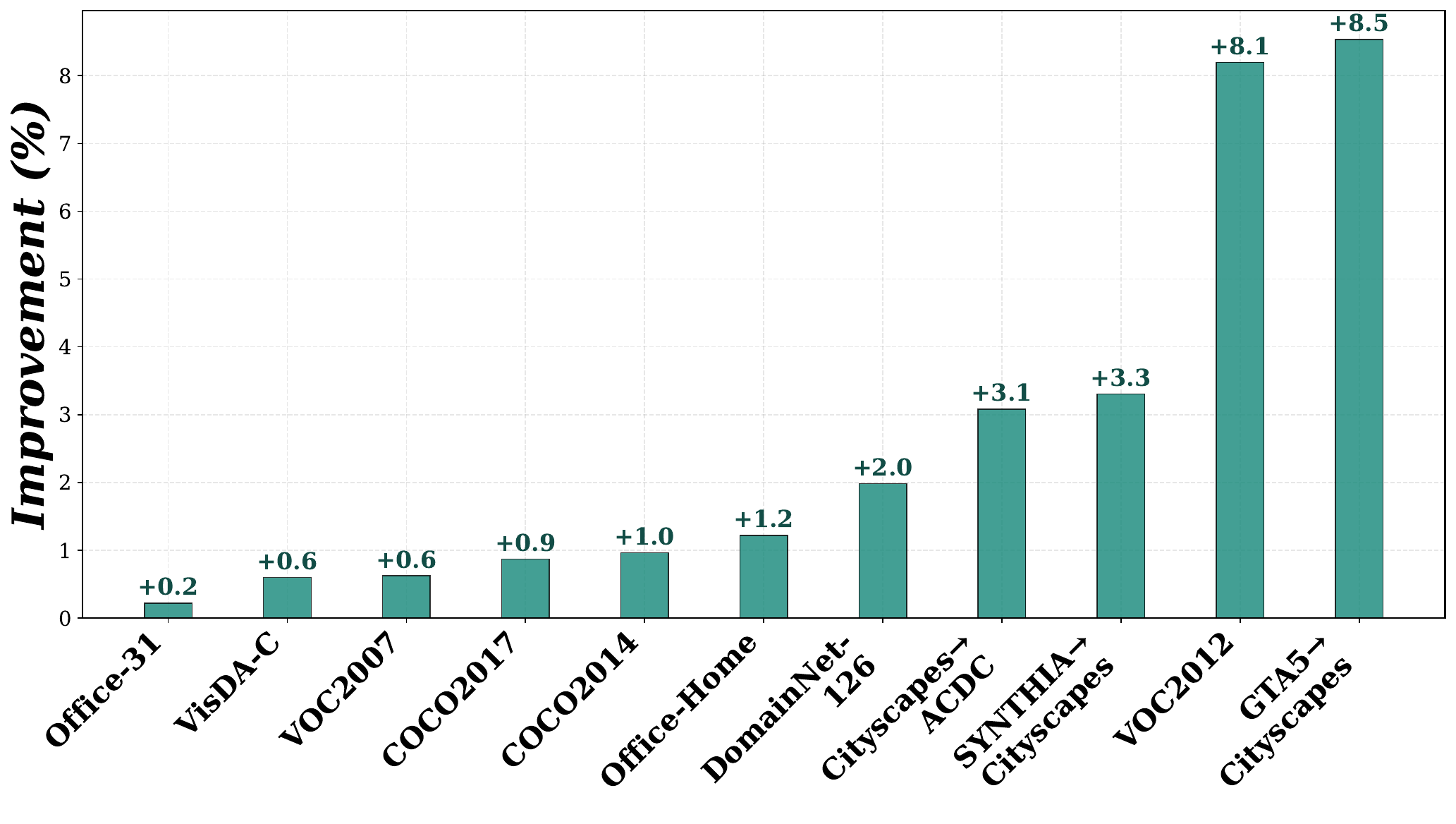}{}
    	\caption{Scaling effect of SSA's gains in different tasks.}
        % \vskip -0.1in
    	\label{fig:ssa_scaling}
    \end{figure}

\section{Conclusion}

In this paper, we propose SSA, a \textit{stepwise semantic alignment} framework for test-time adaptation that focuses on high-level semantic alignment. By combining \textit{hierarchical semantic aggregation} (HFA) with \textit{confidence-aware complementary learning} (CACL), SSA enhances semantic consistency and robustness under domain shifts. The stepwise process uses universal semantic priors to progressively bridge source, pseudo-source, and target domains, effectively narrowing domain gaps. The approach’s effectiveness is validated through extensive experiments on synthetic and real-world TTA datasets.

\paragraph{Limitations and Future Work.}  %Despite strong results, SSA has limitations. The fixed local-to-global context ratio may restrict adaptability across tasks with varying semantic granularity and is less effective for single-label classification where spatial context is less informative. Future work could explore adaptive aggregation or extend SSA to more complex multi-label scenarios. Additionally, the fixed self-entropy threshold $\tau_{\text{par}}$ for pseudo-source domain construction may not generalize well across domains, highlighting the need for more flexible, data-driven thresholding. 
Although SSA significantly improves performance in semantic segmentation and single/multi-label image classification, its impact is less pronounced in tasks with sparse semantic information, such as those with fewer categories or examples. Despite reduced annotation needs for similar gains in semantic-intensive tasks, semantic-sparse scenarios remain common. Future work should explore extending stepwise approaches in these tasks to meet SSA's test-time adaptation needs.

\section*{Impact Statement}
    This is propose a stepwise semantic alignment method that enhances the model's adaptive capability in test-time, whose goal is to advance the field of Machine Learning. There are many potential societal consequences of our work, none which we feel must be specifically highlighted here.

\bibliography{shlsa}
\bibliographystyle{icml2026}

%%%%%%%%%%%%%%%%%%%%%%%%%%%%%%%%%%%%%%%%%%%%%%%%%%%%%%%%%%%%%%%%%%%%%%%%%%%%%%%
%%%%%%%%%%%%%%%%%%%%%%%%%%%%%%%%%%%%%%%%%%%%%%%%%%%%%%%%%%%%%%%%%%%%%%%%%%%%%%%
% APPENDIX
%%%%%%%%%%%%%%%%%%%%%%%%%%%%%%%%%%%%%%%%%%%%%%%%%%%%%%%%%%%%%%%%%%%%%%%%%%%%%%%
%%%%%%%%%%%%%%%%%%%%%%%%%%%%%%%%%%%%%%%%%%%%%%%%%%%%%%%%%%%%%%%%%%%%%%%%%%%%%%%

\newpage
\appendix
\onecolumn

\section{Pseudo-code of the proposed method}
\label{sec:appendix_pseudo_code}

\begin{algorithm}[htb]
\caption{Stepwise Semantic Alignment (SSA)}
\label{alg:shlsa}
\begin{algorithmic}
\STATE {\bfseries Input:} Target dataset $D_t$, source model $\mathcal{M}_s = (\mathbf{f_s}, \mathbf{g_s})$
\STATE {\bfseries Output:} Adapted model $\mathcal{M}_t$
\STATE
\STATE {\color{gray} \textit{// Stage1: Update HFA module}}
\WHILE{not converged}
    \FORALL{$\boldsymbol{x} \in D_t$}
        \STATE {\color{gray} \textit{// Hierarchical Feature Aggregation \& Entropy Estimation}}
        \STATE Extract hierarchical semantic features $\{\mathbf{f}^{(l)}(\boldsymbol{x}),\mathbf{f}^{(g)}(\boldsymbol{x})\}$ from $\mathbf{f_s}$
        \STATE Fuse local-global features via attention: $f_{\text{hfa}}(x) \gets \text{AttnFusion}(\{\mathbf{f}^{(l)}(\boldsymbol{x}),\mathbf{f}^{(g)}(\boldsymbol{x})\})$
        \STATE Generate pseudo-probabilities $p(\boldsymbol{x}) \gets \text{softmax}(\mathbf{g_s}(f_{\text{hfa}}(\boldsymbol{x})))$
        \STATE Update entropy bank: $\mathcal{H}(\boldsymbol{x}) \gets \alpha \cdot \mathcal{H}(\boldsymbol{x}) + (1-\alpha) \cdot H(p(\boldsymbol{x}))$
        \STATE {\color{gray} \textit{// Confidence-Aware Complementary Learning}}
        \STATE Build confidence mask $m$ from ranked logits (\autoref{eq_mask})
        \STATE Compute $\mathcal{L}_{\text{cacl}}$ from confident and uncertain regions (\autoref{eq_lcl})
        \STATE Update model using $\mathcal{L}_{\text{cacl}}$
    \ENDFOR
\ENDWHILE
\STATE
\STATE {\color{gray} \textit{// Stage2: Stepwise Semantic Alignment}}
\STATE Partition $D_t$ into $D_{ps}$ and $D_{rt}$ using entropy threshold $\tau_{\text{par}}$
\WHILE{not converged}
    \FORALL{$(\boldsymbol{x}_{ps}, \boldsymbol{x}_{rt}) \in \mathcal{D}_{ps} \times \mathcal{D}_{rt}$}
        \STATE {\color{gray} \textit{// STEP1: Pseudo-Source Semantic Correction}}
        \STATE Extract features $f_{\text{ps}}(\boldsymbol{x}^{ps})$ via HFA; extract $f^{\text{pre}}(\boldsymbol{x}_{ps})$ via frozen pretrained net
        \STATE Compute semantic alignment loss $\mathcal{L}_{\text{dis}}$ (\autoref{eq_ldis})
        \STATE {\color{gray} \textit{// STEP2: Remaining-Target Semantic Alignment}}
        \STATE Generate class-masked mixed sample $(\tilde{\boldsymbol{x}}, \tilde{y})$ (\autoref{eq_mixup})
        \STATE Compute cross-entropy loss $\mathcal{L}_{\text{mix}}$ on mixed sample using CACL (\autoref{eq_lce})
        \STATE Update model using $\mathcal{L}_{\text{dis}} + \mathcal{L}_{\text{mix}}$
    \ENDFOR
\ENDWHILE
\STATE
\STATE \textbf{return} $\mathcal{M}_t$
\end{algorithmic}
\end{algorithm}

\section{Theorem Proof}
\label{sec:theorem_proof}

\subsection{Proof of \autoref{theorem}}
\begin{proof}
	Let $\boldsymbol{p} = (p_1, \dots, p_C) \in \Delta^{C-1}$ be a categorical distribution with entropy $\mathcal{H}(\boldsymbol{p}) = -\sum_{c=1}^C p_c \log p_c \le H_0$. Fix any $\alpha \in (0,1)$. We aim to construct thresholds $\tau_\alpha > \tau_\beta$ such that the sets
	\begin{equation}
		\mathcal{Y}_+ := \{c \mid p_c \ge \tau_\alpha\}, \quad
		\mathcal{Y}_- := \{c \mid p_c \le \tau_\beta\},
	\end{equation}
	satisfy the stated bounds.
	
	We begin by bounding the low-confidence tail. For any $\tau \in (0,1)$, define $\mathcal{Y}_{<\tau} := \{c \mid p_c \le \tau\}$. Then
	\begin{equation}
		\mathcal{H}(\boldsymbol{p}) \ge -\sum_{c \in \mathcal{Y}_{<\tau}} p_c \log p_c \ge -\log \tau \sum_{c \in \mathcal{Y}_{<\tau}} p_c,
	\end{equation}
	which implies
	\begin{equation}
		\sum_{c \in \mathcal{Y}_{<\tau}} p_c \le \frac{H_0}{-\log \tau}. \label{eq:tail-bound}
	\end{equation}
	Setting $\tau_\beta := \tau$, this yields the desired upper bound on $\sum_{c \in \mathcal{Y}_-} p_c$ with $\epsilon(H_0) := H_0 / -\log \tau_\beta \to 0$ as $H_0 \to 0$.
	
	Next, for the high-confidence region, choose $\tau_\alpha \in (0,1)$ such that $\mathcal{Y}_+$ is nonempty. Then, for all $c \in \mathcal{Y}_+$, $p_c \ge \tau_\alpha$ implies $\log p_c \ge \log \tau_\alpha$, and for $c \in \mathcal{Y}_-$, $p_c \le \tau_\beta$ implies $1 - p_c \ge 1 - \tau_\beta$, so $\log(1 - p_c) \ge \log(1 - \tau_\beta)$. Thus,
	\begin{align}
		\mathbb{E}_{c \sim \boldsymbol{p}}[\log p_c \mid c \in \mathcal{Y}_+] 
		&\ge \log \tau_\alpha, \\
		\mathbb{E}_{c \sim \boldsymbol{p}}[\log(1 - p_c) \mid c \in \mathcal{Y}_-] 
		&\ge \log(1 - \tau_\beta),
	\end{align}
	and therefore
	\begin{equation}
		\mathbb{E}_{c \sim \boldsymbol{p}}[\log p_c \mid c \in \mathcal{Y}_+] - 
		\mathbb{E}_{c \sim \boldsymbol{p}}[\log(1 - p_c) \mid c \in \mathcal{Y}_-] 
		\ge \log \tau_\alpha - \log(1 - \tau_\beta) := \kappa(H_0, \tau_\alpha). \label{eq:kappa-bound}
	\end{equation}
	This concludes the proof.
\end{proof}

\subsection{Proof of independence for parameter tuning}
\label{paramstuning_proof}
% \subsection{Proof of independent tunability}
% \label{paramstuning_proof}
\begin{proof}
	Let $\boldsymbol{p} = (p_1, \dots, p_C) \in \Delta^{C-1}$ be the predicted softmax vector for a sample with entropy $\mathcal{H}(\boldsymbol{p}) = -\sum_{c=1}^C p_c \log p_c$. Let $\boldsymbol{p}_{\text{sorted}} = [p_{(1)}, p_{(2)}, \dots, p_{(C)}]$ denote the sorted version of $\boldsymbol{p}$ as same as \autoref{eq_psort}.
	
	The threshold $\tau_{\text{pos}}$ is used to identify confident predictions:
	\begin{equation}
		\mathcal{Y}_+ := \{c \mid p_c \ge \tau_{\text{pos}}\}.
	\end{equation}
	The choice of $\tau_{\text{pos}}$ depends only on the absolute values of $\boldsymbol{p}$ and does not affect its ordering or entropy.
	
	The threshold $\tau_{\text{neg}}$ is based on the relative drop between adjacent sorted entries:
	\begin{equation}
		r_i := \frac{p_{(i)} - p_{(i+1)}}{p_{(i)}}, \quad
		i^* := \min \{ i \mid r_i \ge \tau_{\text{neg}} \},
	\end{equation}
	from which we define the low-confidence region:
	\begin{equation}
		\mathcal{Y}_- := \{c \mid c > i^*\}.
	\end{equation}
	Note that $\tau_{\text{neg}}$ depends only on the relative differences in the ordered vector $\boldsymbol{p}_{\text{sorted}}$ and is independent of any fixed threshold such as $\tau_{\text{pos}}$.
	
	Now consider the partition threshold $\tau_{\text{par}}$, which operates at the sample level. Let $\mathcal{X}_{\text{ps}} := \{x \mid \mathcal{H}(\boldsymbol{p}_x) \le \tau_{\text{par}} \}$ denote the subset of target samples with low entropy. Since $\mathcal{H}(\boldsymbol{p})$ is a symmetric function of the distribution and depends only on the global shape of $\boldsymbol{p}$ rather than pointwise values or ordering, the partitioning induced by $\tau_{\text{par}}$ is independent of the selection mechanisms for class-level masks.
	
	To formalize this independence, observe that the three thresholding operations in our framework correspond to distinct mappings:
	\begin{equation}
		\boldsymbol{p} \mapsto \mathcal{Y}_+, \quad
		\boldsymbol{p}_{\text{sorted}} \mapsto \mathcal{Y}_-, \quad
		\boldsymbol{p} \mapsto \mathcal{H}(\boldsymbol{p}).
	\end{equation}
	Here, $\mathcal{Y}_+$ is determined by a fixed threshold on the unnormalized softmax scores ($\tau_{\text{pos}}$), $\mathcal{Y}_-$ is based on relative gaps in the sorted probability vector using $\tau_{\text{neg}}$, and $\mathcal{H}(\boldsymbol{p})$ aggregates the entire distribution through a permutation-invariant functional. Each of these quantities responds only to changes in its respective threshold.
	
	Therefore, varying $\tau_{\text{pos}}$ affects only the set $\mathcal{Y}_+$, changing $\tau_{\text{neg}}$ influences only the structure of $\mathcal{Y}_-$, and adjusting $\tau_{\text{par}}$ modifies only the partitioning of samples into $\mathcal{X}_{\text{ps}}$ and $\mathcal{X}_{\text{rt}}$. These quantities are used in distinct components of the framework. Specifically, $\mathcal{Y}_+$ and $\mathcal{Y}_-$ are involved in confidence-aware complementary learning, while $\mathcal{X}_{\text{ps}}$ is used in the progressive domain alignment module. As a result, the thresholds $\tau_{\text{pos}}$, $\tau_{\text{neg}}$, and $\tau_{\text{par}}$ are functionally independent and can be tuned separately without mutual interference. 
\end{proof}

\newpage
\section{Experiment Description}
\label{sec:appendix_exp_describ}

\subsection{Baseline and Competitors}
\label{sec:competitors}
    We adopt SHOT~\cite{SHOT} as the baseline for semantic segmentation, which utilizes source hypothesis transfer for source-free domain adaptation with CrossEntropyLoss, implementing model consistency regularization to prevent excessive deviation between target and source models. For single-label classification, we select ProDe~\cite{ProDe} as the baseline. We conducted comprehensive experiments across different tasks, datasets, and architectures, encompassing both source-available and source-free domain adaptation scenarios.
    
    \textbf{Semantic Segmentation Methods.} For source-available adaptation, we compared against TransDA-B~\cite{TransDA-B}, DAFormer~\cite{DAFormer}, HRDA~\cite{HRDA}, and IDM~\cite{IDM}. For source-free adaptation, we evaluated against DAFormer~\cite{DAFormer}, HRDA~\cite{HRDA}, IDM~\cite{IDM}, ATP~\cite{ATP}, MISFIT~\cite{MISFIT}, TENT~\cite{TENT}, CoTTA~\cite{CoTTA}, DePT~\cite{DePT}, VDP~\cite{VDP}, SFKT~\cite{SFKT}, and SFDA-Seg~\cite{SFDA-Seg}.
    
    \textbf{Classification Methods.} In single-label image classification, we compared with source-available methods including PDA~\cite{PDA}, DAMP~\cite{DAMP}, STAR~\cite{STAR}, and RWOT~\cite{RWOT}, and source-free methods including SHOT~\cite{SHOT}, DIFO~\cite{DIFO}, ProDe~\cite{ProDe}, ATP~\cite{ATP}, PLUE~\cite{PLUE}, and SFDA+~\cite{SFDA+}.
    
    \textbf{Network Architectures.} Our experiments span multiple architectures of varying scales, including Segformer-B0/B1/B2/B3/B4/B5, P2T-Base, ResNet-50/101, and VGG16, demonstrating the generalizability of our approach across different network designs.
    
    \textbf{Semantic Segmentation Datasets.} We evaluated on three challenging adaptation scenarios: GTA5$\to$Cityscapes, SYNTHIA$\to$Cityscapes, and Cityscapes$\to$ACDC. The synthetic dataset GTA5~\cite{GTA5} contains 24,966 annotated images with a resolution of 1914×1052, taken from the famous game Grand Theft Auto with ground truth generated by game rendering. SYNTHIA~\cite{SYNTHIA} is another synthetic dataset containing 9,400 fully annotated images with a resolution of 1280×760. Cityscapes~\cite{Cityscapes} consists of 2,975 annotated training images and 500 validation images with a resolution of 2048×1024. The Adverse Conditions Dataset (ACDC)~\cite{ACDC} contains four different adverse visual conditions: Fog, Night, Rain, and Snow, sharing the same semantic classes with Cityscapes. These datasets enable evaluation of source-free adaptation from synthetic to real domains and from normal to adverse conditions.
    
    \textbf{Classification Datasets.} For image classification, we selected four datasets of varying scales: Office-31~\cite{Office-31} is a small-scale dataset including three domains (Amazon, Webcam, and Dslr) with 4,652 images of 31 categories taken in office environments. Office-Home~\cite{Office-Home} is a medium-scale dataset containing 15k images belonging to 65 categories from four domains: Artistic images, Clip Art, Product images, and Real-world images. VisDA-C~\cite{VisDA-C} is a large-scale dataset with synthetic to real transfer tasks, containing 152k synthetic source images and 55k real target images from COCO. DomainNet~\cite{DomainNet} is a challenging large-scale dataset created by removing noisy labels from the original version, containing 600k images of 345 classes from 6 domains with varying image styles.

\subsection{Implementation Details}
\label{sec:exp_details}
    All experiments were conducted on a single NVIDIA GeForce RTX 3090. Some necessary experimental parameters have already been provided in \autoref{imp_details}, and here we mainly supplement the implementation details of the Method in \autoref{method}.
    
    \paragraph{SSA Implementation Details.} For the $\text{Grid}(\boldsymbol{x})$ operation in \autoref{eq_flocal}, we divide the input along the $x$ and $y$ directions with a fixed step to obtain the grid $\{y_{i1}, y_{i2}, x_{i1}, x_{i2}\}$, where each $r_i$ corresponds to a patch in $\boldsymbol{x}$ defined by four coordinates. For the \text{Pad}($P_i$) operation in \autoref{eq_local}, we apply zero padding. For the \text{Mask}$_i$ operation in the same equation, we compute it by counting the number of pixels covered by the $i$-th local patch. For the $\boldsymbol{A}$ operation in \autoref{eq_fused}, we obtain it through an attention module. In semantic segmentation tasks, this module outputs a pixel-level attention map to represent the importance of local semantics, while in image classification tasks, it directly averages local and global semantics. The $\text{Align}(\cdot)$ operation in the same equation is implemented using a simple bilinear interpolation-based \textit{resize} to align the sizes of local and global features. The resulting $P_{\text{fused}}$ is directly used as the input to CACL for post-processing. For $\boldsymbol{M}$ in \autoref{eq_mixup}, we adopt the MixMatch operation to generate semantic-level mixed data, following the procedure described in~\cite{mixmatch}, which we keep consistent with.

    \paragraph{Online TTA Setting.} The online test-time adaptation setting differs significantly from traditional offline scenarios. Specifically, the batch size was adjusted to appropriate mini-batches (e.g., from 64 to 128), and the testing process was modified to ensure streaming data processing. In a single epoch, the model outputs results immediately after training on each mini-batch, ensuring that data flows in a streaming manner where each sample participates in gradient updates only once. This contrasts with offline scenarios where each sample participates in multiple gradient updates and testing occurs only after complete model training.

\section{Parameter Sensitivity Analysis and Mitigation}
\label{sec:parameter_ana}

\subsection{Sensitivity of Key Parameters}
We conduct comprehensive sensitivity analysis on several key parameters in SSA, including the positive threshold $\tau_{\text{pos}}$, the negative threshold $\tau_{\text{neg}}$, the split ratio $\tau_{\text{par}}$, and the local-global hyperparameters $\tau_{\text{lg}}$ (ratio) and $\tau_{\text{lw}}$ (local window size).

\begin{figure}[H]
	%\vspace{-5pt}
	\centering
	\small
	\begin{subfigure}[b]{0.32\textwidth}
		\vspace{-5pt}
		\centering
		\includegraphics[width=\textwidth]{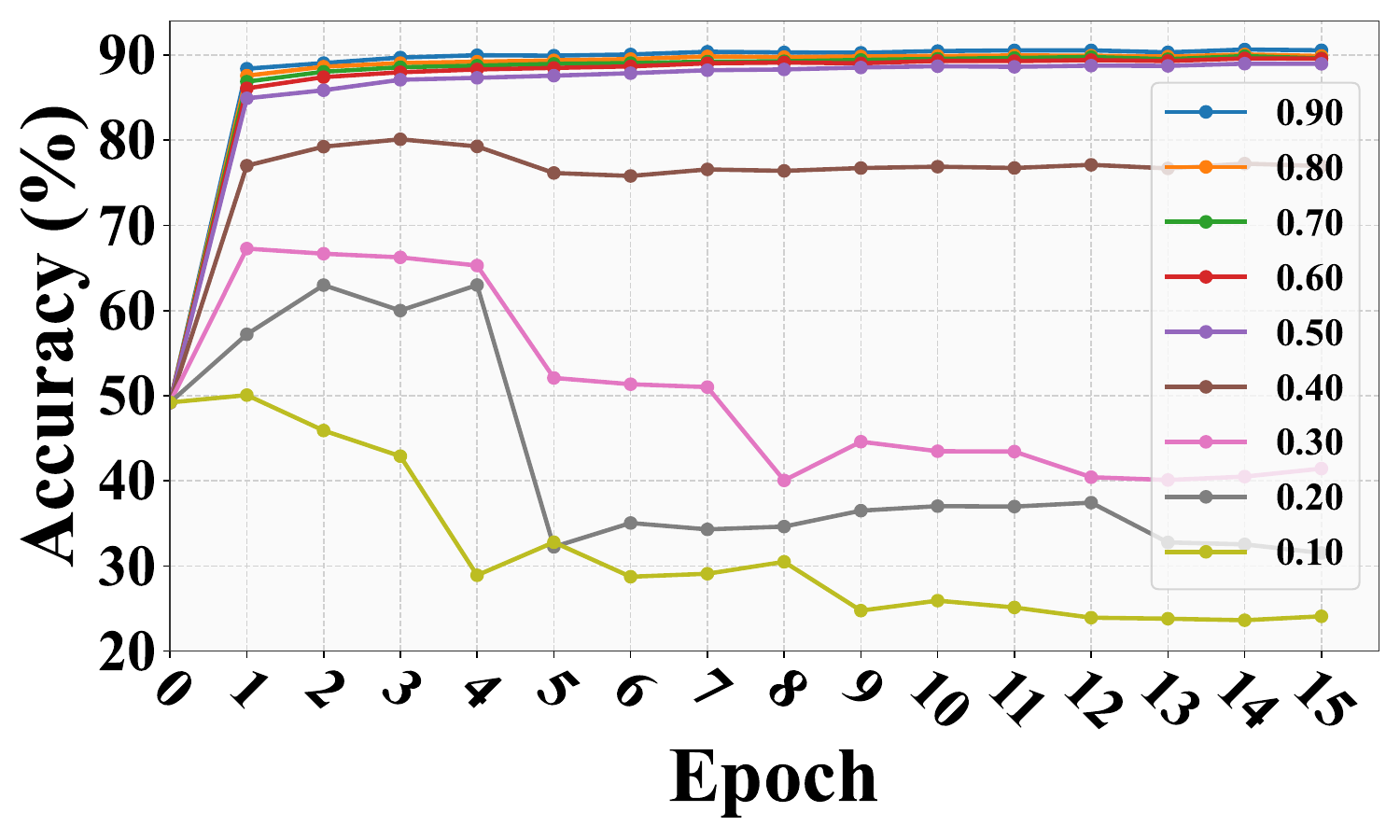}
		\caption{$\tau_{\text{neg}}$ sensitivity}
		\label{neg_threshold_plot}
	\end{subfigure}
	\begin{subfigure}[b]{0.32\textwidth}
		\centering
		\includegraphics[width=\textwidth]{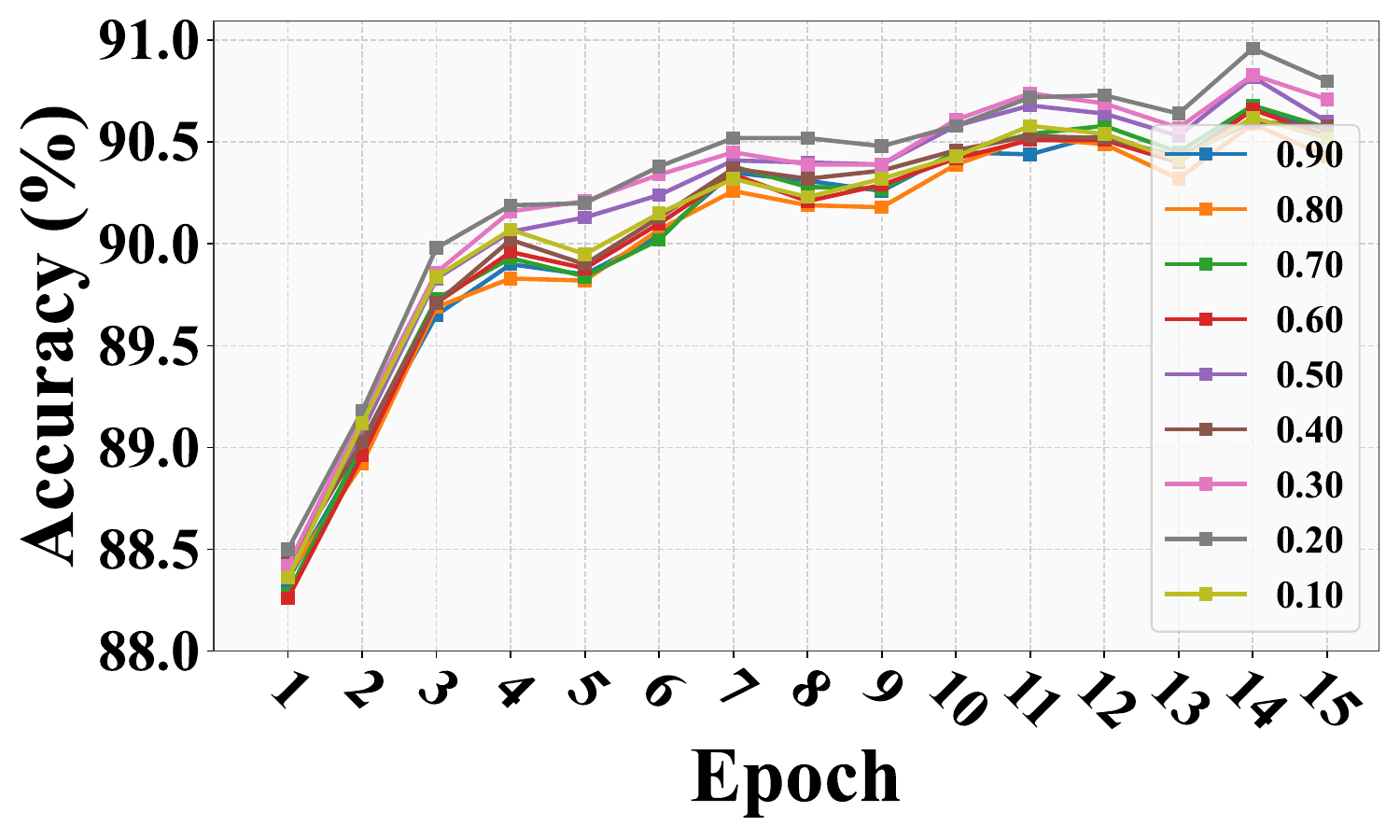}
		\caption{$\tau_{\text{pos}}$ sensitivity}
		\label{pos_threshold_plot}
	\end{subfigure}
	\begin{subfigure}[b]{0.32\textwidth}
		\centering
		\includegraphics[width=\textwidth]{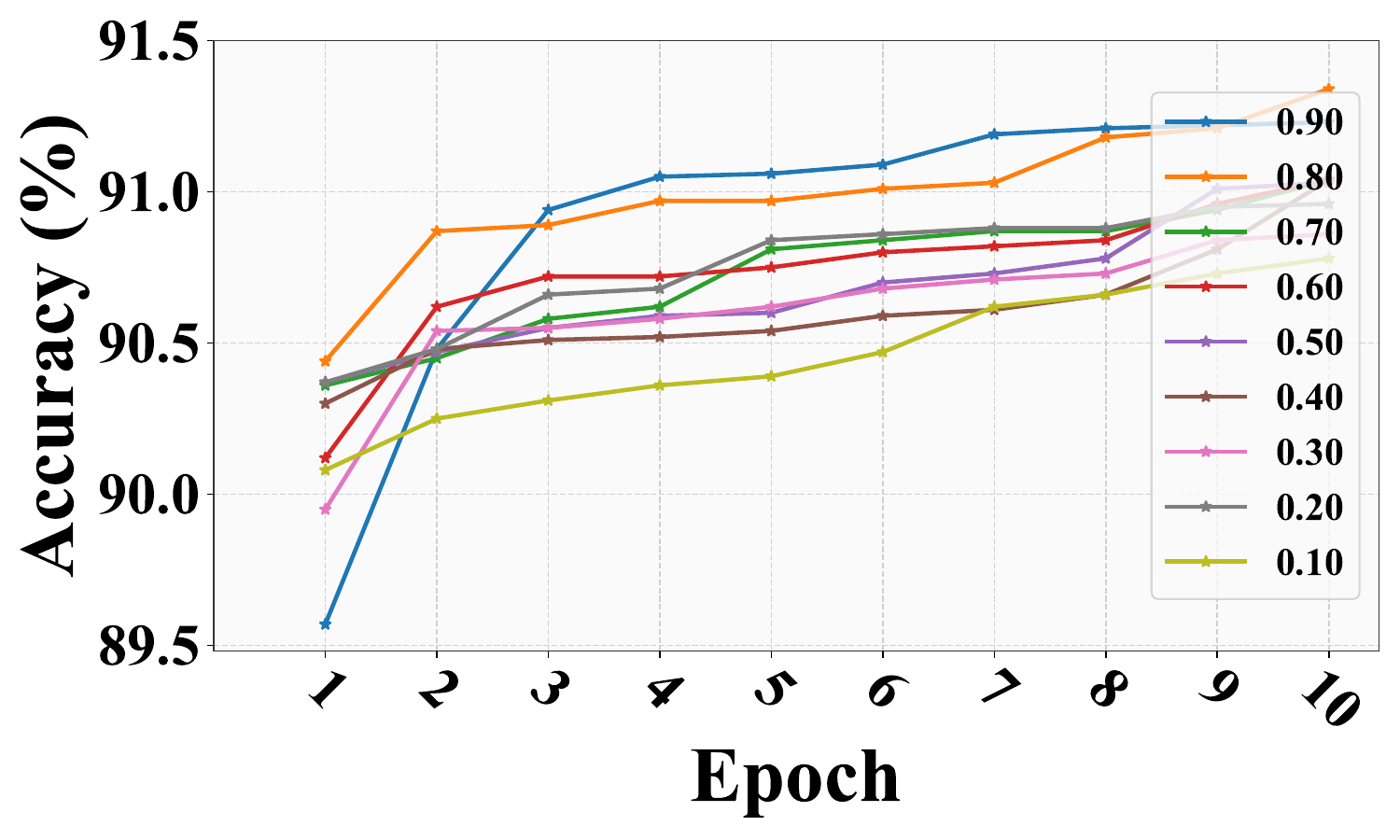}
		\caption{$\tau_{\text{par}}$ sensitivity}
		\label{par_threshold_plot}
	\end{subfigure}
	
	\caption{Sensitivity analysis for the thresholds.}
	\label{sensitivity_plot}
	\vspace{-12pt}
\end{figure}

\textbf{Positive threshold $\tau_{\text{pos}}$} serves as a filter for probability distributions, treating labels with probabilities above $\tau_{\text{pos}}$ as positive while ignoring others. As demonstrated in \autoref{sensitivity_plot}, $\tau_{\text{pos}}$ exhibits relatively low sensitivity, primarily because it only influences the inclusion or exclusion of a limited number of positive labels without substantially affecting the overall learning dynamics.

\textbf{Negative threshold $\tau_{\text{neg}}$} emerges as the most sensitive parameter, critically controlling the distinction between \textit{likely correct} and \textit{unlikely correct} predictions. Within the CACL framework, predictions undergo descending probability sorting, with entries exceeding the gradient threshold $\tau_{\text{neg}}$ being designated as negative pseudo-labels. This heightened sensitivity stems from several interconnected factors: (1) $\tau_{\text{neg}}$ directly governs negative pseudo-label quality within CACL; (2) During early training phases characterized by high model uncertainty and relatively flat probability distributions, minor parameter adjustments can dramatically alter negative label quantities; (3) Excessively low $\tau_{\text{neg}}$ values introduce erroneous negative labels, causing noise accumulation and performance degradation, which is the primary source of error propagation compared to $\tau_{\text{pos}}$.

\textbf{Split ratio $\tau_{\text{par}}$} regulates the proportion of target domain data allocated to the pseudo-source domain within SDA. For tasks exhibiting substantial domain shifts (e.g., VisDA-C), employing fewer low-entropy samples as pseudo-source domain through reduced $\tau_{\text{par}}$ values enhances sample quality. Conversely, for tasks with moderate shifts (e.g., Office-31), elevated $\tau_{\text{par}}$ values enable better exploitation of source-like samples within the target domain. \autoref{tab:tau_par_analysis} validates that confidence-aware dynamic $\tau_{\text{par}}$ configuration consistently outperforms fixed threshold approaches across diverse tasks. Specifically, we followed the idea of CACL and selected the points with large variations in self-entropy of the samples in the target domain as the partition points.

\begin{table}[h]
	\centering
	\caption{Impact of Different $\tau_{\text{par}}$ Parameters on Single-label Classification}
	\label{tab:tau_par_analysis}
	\scriptsize
	\begin{tabular}{lccccc}
		\toprule
		Method & $\bar{\tau_{\text{par}}}$ & Office-31 (0.1335) & Office-Home (0.2491) & VisDA-C (0.1677) & DomainNet (0.3478) \\
		\midrule
		\multirow{4}{*}{Conf-Aware} & 0.92 & 92.8 & - & - & - \\
		 & 0.87 & - & 85.0 & - & - \\
		 & 0.79 & - & - & 91.3 & - \\
		 & 0.44 & - & - & - & 83.7 \\
		 \cmidrule{1-6}
		\multirow{9}{*}{Fixed} & 0.10 & 92.5 & 84.7 & 90.9 & 83.4 \\
		 & 0.20 & 92.5 & 84.6 & 91.2 & 83.5 \\
		 & 0.30 & 92.6 & 84.7 & 91.2 & 83.5 \\
		 & 0.40 & 92.5 & 84.7 & 91.1 & 83.7 \\
		 & 0.50 & 92.7 & 84.8 & 91.3 & 83.7 \\
		 & 0.60 & 92.6 & 84.9 & 91.1 & 83.5 \\
		 & 0.70 & 92.8 & 85.0 & 91.3 & 83.3 \\
		 & 0.80 & 92.8 & 85.0 & 91.3 & 83.1 \\
		 & 0.90 & 92.8 & 84.9 & 91.2 & 82.9 \\
		\bottomrule
	\end{tabular}
\end{table}

\textbf{Local--global ratio $\tau_{\text{lg}}$ and local window size $\tau_{\text{lw}}$} demonstrate distinct sensitivity patterns across different tasks and window configurations. As shown in \autoref{tab:local_global_sensitivity}, segmentation tasks (GTA5$\to$Cityscapes) exhibit significantly higher parameter sensitivity compared to classification tasks (VisDA-C). For segmentation tasks, sensitivity varies considerably with window size: large windows ($1024\times1024$) show minimal sensitivity to $\tau_{\text{lg}}$ with performance variations of only 0.4\% (68.8-69.2 mIoU), while smaller windows demonstrate progressively increased sensitivity, where512×512 windows show 0.7\% variation (68.4-69.1 mIoU) and 256×256 windows exhibit the highest sensitivity with 1.3\% variation (67.9-68.8 mIoU). This pattern indicates that smaller windows lack sufficient semantic context and become more dependent on the precise balance between local and global information. In contrast, classification tasks (VisDA-C) demonstrate remarkably stable performance across all parameter combinations, with minimal variations of 0.2\% for 256×256 windows (91.1-91.3 mAcc) and 0.8\% for 128×128 windows (90.6-91.4 mAcc). This low sensitivity reflects that classification tasks with sparse semantic structures are less dependent on fine-grained local-global feature alignment and primarily rely on global semantic representations.

\begin{table}[h]
	\centering
	\caption{Sensitivity analysis of $\tau_{\text{lg}}$ with varying $\tau_{\text{lw}}$ on different datasets.}
	\label{tab:local_global_sensitivity}
	\small
	\begin{tabular}{cccccc}
		\toprule
		\multirow{3}{*}{$\tau_{\text{lg}} /\ \tau_{\text{lw}}$} & \multicolumn{3}{c}{GTA5 $\rightarrow$ Cityscapes} & \multicolumn{2}{c}{VisDA-C} \\
		\cmidrule(lr){2-4} \cmidrule(lr){5-6}
		& \textbf{1024×1024} & \textbf{512×512} & \textbf{256×256} & \textbf{256×256} & \textbf{128×128} \\
		\midrule
		0.1 & 68.8 & 68.9 & 68.1 & 91.1 & 91.3 \\
		0.2 & 68.9 & 68.8 & 68.1 & 91.1 & 91.4 \\
		0.3 & 69.1 & 68.9 & 68.2 & 91.2 & 90.9 \\
		0.4 & 69.0 & 68.7 & 68.5 & 91.1 & 91.0 \\
		0.5 & 69.2 & 69.1 & 68.7 & 91.3 & 91.1 \\
		0.6 & 68.8 & 68.9 & 68.8 & 91.2 & 90.9 \\
		0.7 & 68.8 & 68.6 & 68.4 & 91.3 & 90.7 \\
		0.8 & 68.8 & 68.5 & 68.3 & 91.2 & 90.8 \\
		0.9 & 68.8 & 68.4 & 67.9 & 91.2 & 90.6 \\
		\bottomrule
	\end{tabular}
\end{table}

\subsection{Mitigation for Highly Sensitive Parameters}
For parameters with low sensitivity ($\tau_{\text{pos}}$, $\tau_{\text{par}}$, $\tau_{\text{lg}}$, $\tau_{\text{lw}}$), fixed values can be set based on empirical results. For the highly sensitive parameter $\tau_{\text{neg}}$, we propose adaptive strategies to mitigate error accumulation: \textbf{Conservative initialization} sets $\tau_{\text{neg}}$ to a high value initially to prevent large fluctuations; \textbf{Early-stage restriction} constrains $\tau_{\text{neg}}$ to a narrow range in early epochs; \textbf{Dynamic adjustment} adapts $\tau_{\text{neg}}$ based on gradient stability; \textbf{Uncertainty-aware tuning} uses entropy metrics to balance precision and recall.

We evaluate three implementations: \textbf{Strategy 1} fixes $\tau_{\text{neg}}=0.9$ for the first 5 epochs; \textbf{Strategy 2} keeps $\tau_{\text{neg}}\in[0.6,0.9]$ initially then expands to $[0.4,0.9]$; \textbf{Strategy 3} dynamically adjusts $\tau_{\text{neg}}$ every two epochs based on gradient variance. \autoref{tab:parameter_tau_neg} shows that these dynamic strategies effectively mitigate sensitivity and improve SSA's stability and performance.

\begin{table}[h]
	\centering
	\caption{Performance comparison of mitigation strategies for $\tau_{\text{neg}}$ across epochs.}
	\label{tab:parameter_tau_neg}
	\small
	\begin{tabular}{ccccc}
		\toprule
		\textbf{Epoch} & \textbf{Baseline} & \textbf{Strategy 1} & \textbf{Strategy 2} & \textbf{Strategy 3} \\
		\midrule
		1  & 88.74 & 88.90 & 89.00 & 89.20 \\
		2  & 89.73 & 89.91 & 90.00 & 90.10 \\
		3  & 90.06 & 90.15 & 90.20 & 90.30 \\
		4  & 90.20 & 90.31 & 90.35 & 90.50 \\
		5  & 90.34 & 90.47 & 90.50 & 90.70 \\
		6  & 90.51 & 90.63 & 90.65 & 90.90 \\
		7  & 90.75 & 90.81 & 90.83 & 91.05 \\
		8  & 90.54 & 90.70 & 90.95 & 91.20 \\
		9  & 90.63 & 90.85 & 91.00 & 91.25 \\
		10 & 90.76 & 90.80 & 91.08 & 91.21 \\
		\bottomrule
	\end{tabular}
\end{table}

\subsection{Theoretical Guidance for Parameter Selection}

\paragraph{Bridging Theory and Practice.} \autoref{theorem} establishes the theoretical foundation for threshold selection by proving that when the entropy of output probability distributions is sufficiently low ($\mathcal{H}(p) \leq H_0$), there exist two thresholds $\tau_{\alpha} > \tau_{\beta}$ such that the high-confidence positive set $\mathcal{Y}_+$ and low-confidence negative set $\mathcal{Y}_-$ can be well-separated in expectation, with the probability mass of $\mathcal{Y}_-$ being strictly controlled to prevent excessive noise from negative pseudo-labels.

While \autoref{theorem} does not directly specify the experimental parameters $\tau_{\text{pos}}$ and $\tau_{\text{neg}}$, it provides theoretical guidance for their selection. The correspondence between theoretical and practical parameters is established as follows: $\tau_{\alpha} = \tau_{\text{pos}}$ represents the absolute probability threshold for positive pseudo-labels, while $\tau_{\beta}$ corresponds to the probability of the first category satisfying the gradient condition, defined as:
\begin{equation}
	\tau_{\beta} = p_{\arg\max_i \{|p^*_i - p^*_{i-1}|\} \geq \tau_{\text{neg}}},
\end{equation}

where $p^*_i$ denotes the probability of the $i$-th category after descending sorting.

\paragraph{Derivation of Threshold Bounds.} Given a target distribution $\mathbf{p} = (p_1, \ldots, p_C)$ with entropy $\mathcal{H}(\mathbf{p}) \leq H_0$, we define $G(t) = \sum_{c: p_c \leq t} p_c$ as the cumulative probability mass below threshold $t$. To ensure the total probability mass of the negative set does not exceed $\epsilon > 0$, we require $G(\tau_{\beta}) \leq \epsilon$.

From Theorem 1's inequality, we derive the upper bound:
\begin{equation}
	\tau_{\beta} \leq \exp\left(-\frac{H_0}{\epsilon}\right).
\end{equation}

This constrains the feasible range of $\tau_{\beta}$ to $(0, \exp(-H_0/\epsilon))$. Smaller tolerance $\epsilon$ results in tighter upper bounds for $\tau_{\beta}$.

For practical implementation with $H_0 = 0.5$ and tolerance $\epsilon = 0.05$ (5\%), we obtain:
\begin{equation}
	\tau_{\beta} \leq \exp\left(-\frac{0.5}{0.05}\right) = \exp(-10) \approx 4.5 \times 10^{-5}.
\end{equation}

\paragraph{Practical Parameter Derivation.} To derive $\tau_{\text{neg}}$ from the theoretical bounds, we control the quality of tail negative pseudo-labels using $\sum_{i>k} p_i \leq \epsilon$. Given the gradient definition $p_{k+1} \leq p_k - \tau_{\text{neg}}$ and assuming the worst-case scenario where $p_{k+1}$ is the maximum in the tail:
\begin{equation}
	\sum_{i>k} p_i \leq (C-k)p_{k+1} \leq (C-k)(p_k - \tau_{\text{neg}}).
\end{equation}
With $\tau_{\alpha} = \tau_{\text{pos}} = 0.9$, we have:
\begin{equation}
	\sum_{i>k} p_i \leq (C-k)(0.9 - \tau_{\text{neg}}).
\end{equation}
To ensure the right side satisfies $\leq \epsilon$:
\begin{equation}
	0.9 - \tau_{\text{neg}} \leq \frac{\epsilon}{C-k} \Rightarrow \tau_{\text{neg}} \geq 0.9 - \frac{\epsilon}{C-k}.
\end{equation}
For typical SSA scenarios with $C \approx 10$, $k > 1$, and $\epsilon = 0.05$:
\begin{equation}
	\tau_{\text{neg}} \geq 0.9 - \frac{0.05}{10-1} \approx 0.9 - 0.0056 \approx 0.8944.
\end{equation}
	
Therefore, setting $\tau_{\text{pos}} = 0.9$ and controlling negative pseudo-label quality error to $\epsilon \leq 0.05$ yields $\tau_{\text{neg}} \approx 0.9$, which aligns with our experimental findings in the parameter sensitivity analysis.

\paragraph{Handling High-Entropy Samples.} \autoref{theorem} assumes $\mathcal{H}(p) \leq H_0$. When significant distribution shifts occur in the target domain, high-entropy samples become prevalent, making $\mathcal{H}(p) \approx H_0$ and potentially violating the theoretical assumptions. However, CACL maintains pseudo-label reliability through several mechanisms:

\begin{itemize}
	\item \textbf{Positive Pseudo-label Filtering.} The absolute probability threshold $\tau_{\text{pos}}$ automatically filters high-entropy samples. Positive pseudo-labels are accepted only when $p_i \geq \tau_{\text{pos}}$ (e.g., 0.9), while high-entropy samples typically have maximum probabilities well below 0.9 and are thus rejected. This ensures positive pseudo-labels are extracted only from low-entropy, confident samples.
	
	\item \textbf{Stable Negative Pseudo-label Selection.} Negative pseudo-labels rely on probability gradient ratios $\tau_{\text{neg}}$, which are more stable than absolute probabilities. For high-entropy samples with moderate maximum probabilities, tail categories still exhibit clear decreases. High entropy typically results from multiple possible categories with several high $p_i$ values, but tail probability differences remain more stable than global entropy, enabling reliable negative pseudo-label generation.
	
	\item \textbf{Automatic Weight Reduction.} CACL automatically reduces the influence of high-entropy samples during training. Both contrastive loss and pseudo-label loss depend on prediction probabilities, meaning high-entropy samples with flatter probability distributions contribute smaller gradients (based on CrossEntropyLoss). Low-entropy confident samples provide stronger supervision through pseudo-labels. This mechanism ensures that erroneous high-entropy pseudo-labels do not destabilize training, as reliable low-entropy pseudo-labels dominate gradient updates, guaranteeing training stability.
\end{itemize}

\section{Computational cost analysis}
\label{sec:appendix_cost_ana}

To comprehensively evaluate the computational efficiency of SSA, we conduct both qualitative and quantitative analyses of the resource requirements. We first analyze the theoretical complexity of each component, then perform empirical evaluations across different architectures, and finally explore optimization strategies.

\subsection{Qualitative Analysis of Computational Complexity}

To identify the factors contributing to the increased computational cost of SSA, we analyze its framework, which consists of \textit{Stepwise Domain Alignment} (SDA), \textit{Hierarchical Feature Alignment} (HFA), and \textit{Contrastive Adaptive Confidence Learning} (CACL). The CACL component, being a post-processing method, requires only $O(n\log n)$ sorting and $O(n)$ first-order gradient computation, thus introducing minimal computational overhead.

The computational overhead primarily arises from the SDA and HFA components. In the SDA stage, \textit{pseudo-source domain} samples undergo two passes through the HFA module: the first pass aligns with common semantics before mixing, and the second pass guides the \textit{remaining target domain} post-mixing. Consequently, the main computational cost increase is attributed to the MixMatch operations in SDA and the design of the HFA module.

For the MixMatch operations, the time complexity is determined by the number of data augmentations $K$, the batch size of labeled data (\textit{pseudo-source domain}) $L$, and the batch size of unlabeled data (\textit{remaining target domain}) $U$. Thus, the overall time complexity is $O(K \cdot L \cdot U)$.

For the HFA module, the complexity arises from two parts: the feature encoder and the attention module.

\textbf{Feature Encoder.} The feature encoder processes local and global information across different abstraction levels to capture semantics at different granularities. The time complexity impact is similar to MobileViT~\cite{MobileViT}'s local attention, reducing the original $O(n^2)$ operations to $O\left(k(\frac{n}{k})^2\right) = O\left(\frac{n^2}{k}\right)$. However, HFA additionally computes global attention to obtain coarse-grained semantic information, resulting in a complexity of:
\begin{equation}
	O\left((k+1)(\frac{n}{k})^2\right) = O\left(\frac{(k+1) \cdot n^2}{k^2}\right).
\end{equation}
 Here, $k$ denotes the number of local patches, and global semantic information is obtained by resizing the original image to local size, thus having a complexity of $O\left(\left(\frac{n}{k}\right)^2\right)$. In practice, $k \in [4, 9]$, ensuring $O\left(\frac{k+1}{k^2} \cdot n^2\right) < O(n^2)$. The main computational cost increase occurs in the subsequent attention fusion operations.

\textbf{Attention Module.} This module performs attention fusion on $(\frac{n}{k})^2$ pixels across $k+1$ patches, resulting in a complexity of $O\left(\frac{(k+1)^2}{k^2} \cdot n^2\right)$. Notably, this computational overhead is significant in semantic segmentation tasks. For single-image classification tasks, pixel-level attention scores for $k+1$ patches are not required, reducing the complexity to $O((k+1)^2)$, which is manageable under the condition $k \in [4, 9]$. Combining the complexities of the feature encoder and attention module, the overall time complexity of HFA is:
\begin{equation}
	O\left(\frac{k+1}{k^2} \cdot n^2 + \frac{(k+1)^2}{k^2} \cdot n^2\right) = O\left(\frac{(k+1)(k+2)}{k^2} \cdot n^2\right) > O(n^2).
\end{equation}
The computational cost increase is primarily due to the attention fusion process across different semantic levels in semantic segmentation tasks.

\subsection{Quantitative Experiments on Computational Cost}
We conduct comprehensive experiments to quantify the computational impact of different components. All experiments are performed on a single NVIDIA GeForce RTX 3090 (24GB) GPU. For semantic segmentation, we evaluate on SYNTHIA$\to$Cityscapes with batch size 2, local window size $\tau_{\text{lw}} = 1024 \times 1024$, and 40,000 iterations. For classification, we use Office-Home with batch size 64, $\tau_{\text{lw}} = 256 \times 256$, and 10 epochs.

\begin{table}[h]
	\centering
	\caption{Performance and Computational Cost Analysis on Semantic Segmentation.}
	\label{computational_segmentation}
	\small
	\resizebox{0.95\textwidth}{!}{
	\begin{tabular}{l c c c c c c}
		\toprule
		Method & GPU Memory (GB)  & Time-Item (s/item)& Time-Train (GPU·h)  & mIoU (\%) & Backbone & Model Size (MB) \\
		\midrule
		SSA & 22.4 & 2.40 & 26.7 & 64.1 & Segformer-B5 & 313.1 \\
		SSA w/o CACL & 22.4 & 2.40 & 26.4 & 63.4 & Segformer-B5 & 313.1 \\
		SSA w/o HFA & 20.2 & 1.52 & 18.8 & 61.3 & Segformer-B5 & 313.1 \\
		SSA w/o SDA & 19.9 & 2.40 & 14.3 & 62.7 & Segformer-B5 & 313.1 \\
		SSA-B4 & 20.4 & 2.12 & 22.1 & 63.5 & Segformer-B4 & 234.4 \\
		SSA-B3 & 17.0 & 1.84 & 17.4 & 62.3 & Segformer-B3 & 170.3 \\
		SSA-B2 & 13.1 & 1.66 & 12.4 & 61.2 & Segformer-B2 & 94.4 \\
		SSA-B1 & 12.6 & 1.50 & 11.7 & 54.3 & Segformer-B1 & 52.2 \\
		SSA-B0 & 11.7 & 1.47 & 11.2 & 48.9 & Segformer-B0 & 13.7 \\
		SSA-P2T & 12.3 & 1.49 & 11.5 & 60.2 & P2T-Base & 144.5 \\
		SSA-ResNet & 13.2 & 1.55 & 11.3 & 58.2 & ResNet-101 & 178.9 \\
		\cmidrule(lr){1-7}
		ATP & 20.1 & 1.56 & 19.6 & 63.7 & Segformer-B5 & 313.1 \\
		ATP-ResNet & 12.5 & 1.51 & 11.0 & 57.6 & ResNet-101 & 178.9 \\
		ATP-P2T & 11.9 & 1.32 & 9.7 & 59.6 & P2T-Base & 144.5 \\
		SFKT & 10.8 & 1.25 & 8.3 & 45.9 & ResNet-101 & 178.9 \\
		SFDA-Seg & 11.3 & 1.27 & 8.5 & 48.9 & ResNet-101 & 178.9 \\
		\bottomrule
	\end{tabular}
	}
\end{table}

\begin{table}[h]
	\centering
	\caption{Performance and Computational Cost Analysis on Classification.}
	\label{computational_classification}
	\small
	\resizebox{0.95\textwidth}{!}{
	\begin{tabular}{l c c c c c c}
		\toprule
		Method & GPU Memory (GB) & Time-Item (s/item) & Time-Train (GPU·h) & mAcc (\%) & Backbone & Model Size (MB) \\
%		& (GB) & (s/item) & (GPU·h) & (\%) &  & (MB) \\
		\midrule
		SSA & 10.8 & 0.65 & 0.17 & 85.0 & ResNet-50 & 102.6 \\
		SSA w/o CACL & 10.8 & 0.65 & 0.16 & 84.9 & ResNet-50 & 102.6 \\
		SSA w/o HFA & 9.7 & 0.57 & 0.15 & 84.6 & ResNet-50 & 102.6 \\
		SSA w/o SDA & 8.2 & 0.65 & 0.13 & 84.7 & ResNet-50 & 102.6 \\
		SSA-RN101 & 13.6 & 0.67 & 0.18 & 89.3 & ResNet-101 & 178.9 \\
		SSA-VGG16 & 13.9 & 0.67 & 0.18 & 86.7 & VGG16 & 533.4 \\
		\cmidrule(lr){1-7}
		SHOT & 7.6 & 0.52 & 0.12 & 71.8 & ResNet-50 & 102.6 \\
		SHOT-RN101 & 10.4 & 0.55 & 0.14 & 78.9 & ResNet-101 & 178.9 \\
		SHOT-VGG16 & 10.6 & 0.56 & 0.15 & 76.3 & VGG16 & 533.4 \\
		TENT & 6.9 & 0.49 & 0.11 & 61.7 & ResNet-50 & 102.6 \\
		TENT-RN101 & 9.7 & 0.52 & 0.13 & 70.2 & ResNet-101 & 178.9 \\
		TENT-VGG16 & 10.2 & 0.54 & 0.14 & 68.3 & VGG16 & 533.4 \\
		SFDA+ & 7.9 & 0.55 & 0.14 & 73.4 & ResNet-50 & 102.6 \\
		\bottomrule
	\end{tabular}
}
\end{table}

The computational cost analysis provides insights into SSA's efficiency characteristics across different components and architectures. Here, SSA represents our proposed method, while SSA w/o CACL/HFA/SDA denote variants excluding respective components. CACL operates as post-processing during training without affecting inference, HFA involves architectural modifications impacting both training and inference, and SDA is a training methodology affecting only the training process.

\textbf{Component-wise Impact.} From \autoref{computational_segmentation}, HFA contributes most significantly to computational overhead. Comparing SSA (22.4 GB, 26.7 GPU hours) with SSA w/o HFA (20.2 GB, 18.8 GPU hours), HFA increases memory usage by 10.9\% and training time by 42.0\% due to pixel-level attention fusion requirements. SDA also substantially impacts training efficiency, with SSA w/o SDA requiring only 14.3 GPU hours compared to 26.7 hours for the full method (46.4\% reduction). CACL shows minimal computational impact, with negligible differences in memory (22.4 GB) and training time (26.4 vs. 26.7 GPU hours).

\textbf{Architecture Scalability.} \autoref{computational_segmentation} and \autoref{computational_classification} demonstrate clear scalability trends. In segmentation, Segformer variants show progressive resource requirements: B0 (11.7 GB, 11.2 GPU hours, 48.9\% mIoU) to B5 (22.4 GB, 26.7 GPU hours, 64.1\% mIoU). Alternative architectures like P2T-Base (12.3 GB, 11.5 GPU hours, 60.2\% mIoU) and ResNet-101 (13.2 GB, 11.3 GPU hours, 58.2\% mIoU) offer different performance-efficiency trade-offs. For classification, SSA with ResNet-101 achieves higher accuracy (89.3\% vs. 85.0\%) at increased cost (13.6 GB vs. 10.8 GB).

\subsection{Optimization Strategies}

Based on the computational analysis, three optimization strategies are proposed to improve the efficiency of SSA while maintaining competitive performance:

\textbf{Lightweight Attention Mechanisms.} Analysis reveals that the attention fusion operations in HFA are the primary cause of significant computational overhead increases. Therefore, employing more lightweight attention mechanisms can effectively reduce SSA's computational cost. As demonstrated in \autoref{optimization_results}, replacing the attention mechanism in HFA with more efficient MobileViT operations reduces GPU memory from 22.4GB to 21.2GB while maintaining comparable performance (64.1\% vs 63.9\% mIoU for semantic segmentation tasks).

\textbf{Simplifying SDA's Two-Step Alignment.} When domain shift is relatively small, direct alignment between the pseudo-source domain and remaining target domain can reduce HFA usage by one step while avoiding PRE-NET storage and computation, theoretically improving training efficiency. We propose using self-entropy as an indicator for domain shift assessment.

As shown in \autoref{optimization_results}, when the average entropy is below 0.5 during source model testing on the target domain, simplifying SDA's two-step alignment can improve training efficiency without significantly affecting model performance. For semantic segmentation tasks with dense semantics (SYNTHIA$\to$Cityscapes: 0.5396, GTA5$\to$Cityscapes: 0.7239), removing SDA results in substantial performance degradation (1.4\% and 3.6\% mIoU drop respectively). In contrast, for image classification tasks with sparse semantics (entropy < 0.35), removing SDA provides significant computational savings (reducing GPU memory from 10.8GB to 8.2GB) with minimal performance impact (< 1.2\% accuracy drop). This suggests that the entropy threshold of 0.5 serves as an effective criterion for determining when to simplify SDA operations.

\textbf{Simplifying HFA for Classification Tasks.} For single-label classification tasks with sparse semantics, HFA's hierarchical feature alignment provides limited benefits as these tasks primarily rely on global semantic features rather than fine-grained local patterns. As demonstrated in \autoref{optimization_results}, removing HFA from classification tasks reduces GPU memory usage by 24\% (from 10.8GB to 8.2GB) and training time by 20-25\% while maintaining comparable performance. The performance drops are minimal: 0.2\% for Office-31, 0.3\% for Office-Home, 1.0\% for VisDA-C, and 1.7\% for DomainNet. This suggests that for classification tasks, especially those with low semantic complexity, HFA can be safely omitted to achieve better computational efficiency. This suggests that the entropy threshold of 0.5 serves as an effective criterion for determining when to simplify SDA operations, while HFA simplification is recommended for all single-label classification tasks to balance performance and efficiency.

\begin{table}[h]
	\centering
	\caption{Performance Optimization Results Across Different Tasks.}
	\label{optimization_results}
	\small
	\resizebox{0.95\textwidth}{!}{
		\begin{tabular}{l c c c c c c}
			\toprule
			Method & GPU Memory (GB) & Time-Train (GPU·h)  & mIoU/mAcc  (\%) & Task & Dataset & Backbone \\
			\midrule
			SSA & 22.4 & 26.5 & 64.1 & \multirow{6}{*}{Segmentation} & {SYNTHIA$\to$Cityscapes} (0.5396)&  \multirow{6}{*}{Segformer-B5} \\
			SSA+MobileViT & 21.2 & 24.2 & 63.9 &  & SYNTHIA$\to$Cityscapes (0.5396) &  \\
			SSA w/o SDA & 19.9 & 14.5 & 62.7 &  & SYNTHIA$\to$Cityscapes (0.5396)&  \\
			SSA & 22.4 & 26.7 & 69.2 &  & GTA5$\to$Cityscapes (0.7239)&  \\
			SSA+MobileViT & 21.2 & 24.3 & 68.9 &  & GTA5$\to$Cityscapes (0.7239)&  \\
			SSA w/o SDA & 19.9 & 14.3 & 65.6 &  & GTA5$\to$Cityscapes (0.7239)&  \\
			\cmidrule(lr){1-7}
			SSA & 10.8 & 0.06 & 92.8 & \multirow{8}{*}{Classification} & Office-31 (0.1335) & \multirow{8}{*}{ResNet-50} \\
			SSA w/o HFA & 8.2 & 0.04 & 92.6 &  & Office-31 (0.1335) &  \\
			SSA w/o SDA & 8.2 & 0.04 & 92.7 &  & Office-31 (0.1335) &  \\
			SSA & 10.8 & 0.17 & 85.0 &  & Office-Home (0.2491) &  \\
			SSA w/o HFA & 8.2 & 0.13 & 84.7 &  & Office-Home (0.2491) &  \\
			SSA w/o SDA & 8.2 & 0.13 & 84.7 &  & Office-Home (0.2491) &  \\
			SSA & 10.8 & 1.72 & 91.3 &  & VisDA-C (0.1677) &  \\
			SSA w/o HFA & 8.2 & 1.36 & 90.3 &  & VisDA-C (0.1677) &  \\
			SSA w/o SDA & 8.2 & 1.36 & 90.6 &  & VisDA-C (0.1677) &  \\
			SSA & 10.8 & 1.98 & 83.1 &  & DomainNet (0.3478) &  \\
			SSA w/o HFA & 8.2 & 1.53 & 81.4 &  & DomainNet (0.3478) &  \\
			SSA w/o SDA & 8.2 & 1.53 & 81.9 &  & DomainNet (0.3478) &  \\
			\bottomrule
		\end{tabular}
	}
\end{table}

\subsection{Computational Cost vs Performance Trade-off.} The balance between computational cost and performance is critical in TTA tasks. \autoref{tab:tradeoff_analysis} presents a comprehensive trade-off analysis using various Segformer architectures (B0-B5), demonstrating an inverse relationship between computational cost and segmentation performance. Considering diminishing performance gains as computational cost increases, Segformer-B2 represents a balanced choice.

SSA achieves a favorable trade-off between efficiency and performance. For classification tasks with sparse semantics, removing SDA provides significant computational savings (from 10.8GB to 8.2GB GPU memory) with minimal performance degradation. For semantic segmentation tasks with dense semantics, lightweight attention mechanisms offer a balanced solution. While SSA requires more resources than lightweight methods like TENT, the significant performance improvements justify the additional computational cost for accuracy-prioritized applications.

\begin{table}[h!]
	\centering
	\renewcommand{\arraystretch}{1.2}
	\caption{Trade-off Analysis Between Computational Cost and Segmentation Performance}
	\label{tab:tradeoff_analysis}
	\small
	\resizebox{0.8\textwidth}{!}{
	\begin{tabular}{lccccc}
		\toprule
		Method & GPU Memory (GB) & Time-Item (s/item) & Time-Train (GPU$\cdot$h) & mIoU (\%) & Backbone \\ 
		\midrule
		SSA-B0 & 11.7 (+0.0) & 1.47 (+0.00) & 11.2 (+0.0) & 48.9 (+0.0) & Segformer-B0 \\ 
		SSA-B1 & 12.6 (+0.9) & 1.50 (+0.03) & 11.7 (+0.5) & 54.3 (+5.4) & Segformer-B1 \\ 
		SSA-RN101 & 13.2 (+1.5) & 1.55 (+0.08) & 11.3 (+0.1) & 58.2 (+9.3) & ResNet-101 \\ 
		SSA-P2T & 12.3 (+0.6) & 1.49 (+0.02) & 11.5 (+0.3) & 60.2 (+11.3) & P2T-Base \\ 
		SSA-B2 & 13.1 (+1.4) & 1.66 (+0.09) & 12.4 (+1.2) & 61.2 (+12.3) & Segformer-B2 \\ 
		SSA-B3 & 17.0 (+5.3) & 1.84 (+0.37) & 17.4 (+6.2) & 62.3 (+13.4) & Segformer-B3 \\ 
		SSA-B4 & 20.4 (+8.7) & 2.12 (+0.65) & 22.1 (+10.9) & 63.5 (+14.6) & Segformer-B4 \\ 
		SSA-B5 & 22.4 (+10.7) & 2.40 (+0.93) & 26.7 (+15.5) & 64.1 (+15.2) & Segformer-B5 \\ 
		\bottomrule
	\end{tabular}
}
\end{table}

\newpage
\section{Supplementary Experiment}
\label{sec:appendix_exp}

\subsection{Cross-Module Dependency Analysis}
\label{sec:appendix_albation}
To address concerns regarding insufficient analysis of cross-module dependencies, comprehensive ablation experiments were conducted to analyze the relationships between HFA, CACL, and SDA components. The results are presented in \autoref{tab:cross_module_ablation}.

The baseline represents source-free domain adaptation using SHOT-based source hypothesis transfer. Configuration (a) modifies the backbone encoder to HFA, (b) adds CACL post-processing to the baseline, (c) performs stepwise alignment on the baseline, (d) removes SDA operations from SSA for direct one-step alignment, (e) removes CACL post-processing from SSA, (f) replaces the SSA encoder with the backbone encoder, and (g) represents the complete SSA method.

The experimental results reveal several key insights regarding cross-module dependencies:

\begin{itemize}
	\item \textbf{Individual Component Contributions.} From experiments (a), (b), and (c), HFA contributes most significantly to SSA performance improvement, followed by SDA, with CACL showing the weakest individual contribution.
	
	\item \textbf{HFA-CACL Synergy.} Comparing baseline with (b) and (a) with (d), CACL demonstrates higher performance gains when HFA is present (57.5$\to$65.6) compared to without HFA (44.5$\to$50.2). This indicates that HFA enhances semantic feature quality, thereby improving CACL's label quality.
	
	\item \textbf{HFA-SDA Interaction.} Similarly, comparing baseline with (c) and (a) with (e), SDA shows more pronounced performance gains with HFA (57.5$\to$66.7) than without (44.5$\to$53.4).
	
	\item \textbf{CACL-SDA Relationship.} Comparing baseline with (b) and (c) with (f), CACL demonstrates more effective performance when SDA is present (44.5$\to$50.2 vs. 53.4$\to$60.3). Conversely, CACL and SDA also promote HFA's performance enhancement (from 44.5$\to$57.5 to 50.2$\to$65.6 and 53.4$\to$66.7). However, CACL's promotion effect on SDA is less pronounced (44.5$\to$53.4 vs. 50.2$\to$60.3).
	
	\item \textbf{Diminishing Returns.} Under conditions of HFA+CACL, CACL+SDA, or HFA+SDA, the performance gains of the remaining modules are relatively lower compared to implementing only one module or the baseline. This phenomenon may be attributed to diminishing marginal effects as segmentation performance improves.
\end{itemize}

These findings demonstrate that the three components exhibit strong interdependencies, with HFA serving as the foundation for enhancing the effectiveness of both CACL and SDA, while CACL and SDA provide complementary improvements that collectively contribute to the superior performance of the complete SSA framework.

\subsection{Result Visualization Analysis}
\label{sec:appendix_visual_res_ana}

\subsubsection{Confusion matrix for image classification}
\label{sec:appendix_confusion_matrix}

To better understand the behavior of SSA under different classification scenarios, we conduct a confusion matrix analysis across three datasets in decreasing order of scale and complexity: VisDA-C, Office-Home, and Office-31. This analysis complements the quantitative accuracy metrics by highlighting which classes are most frequently confused and why. It also provides insight into the method’s fine-grained discriminative ability and its robustness to inter-class semantic proximity or visual ambiguity under domain shift.

\paragraph{Confusion Analysis on Office-Home.} Office-Home is a medium-scale benchmark spanning 4 domains and 65 categories. As shown in \autoref{cm_officehome}, confusion in the \textit{Ar}$\rightarrow$\textit{Rw} task is more dispersed due to the dataset’s fine-grained label space and higher visual variability. For instance, \textit{monitor} is often misclassified as \textit{computer} (12 times), \textit{bottle} as \textit{soda} (15 times), and \textit{pen} as \textit{marker}, reflecting the challenges in distinguishing semantically related and visually similar objects. While SSA mitigates domain shift through hierarchical integration of local and global semantics, its fixed fusion strategy may struggle with class-specific ambiguity, particularly in densely populated and complex label spaces. This highlights the potential benefit of adopting more adaptive aggregation mechanisms that better capture intermediate semantic complexity and enhance discrimination.

\begin{figure}[H]
    \centering
    % -------- Row 1 --------
    \begin{subfigure}[t]{0.30\textwidth}
        \includegraphics[width=\textwidth]{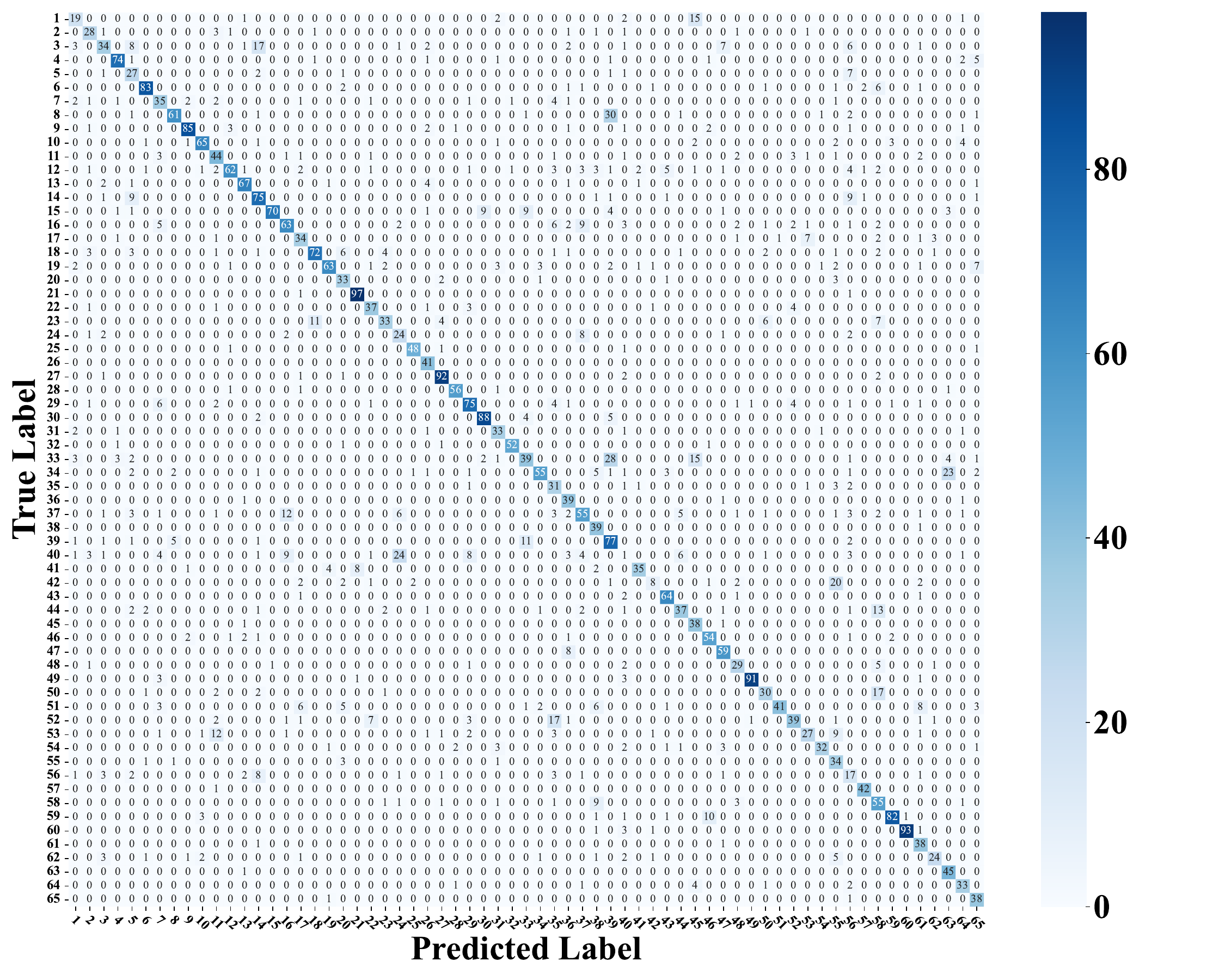}
        \caption{Ar$\to$Cl}
    \end{subfigure}
    \hfill
    \begin{subfigure}[t]{0.30\textwidth}
        \includegraphics[width=\textwidth]{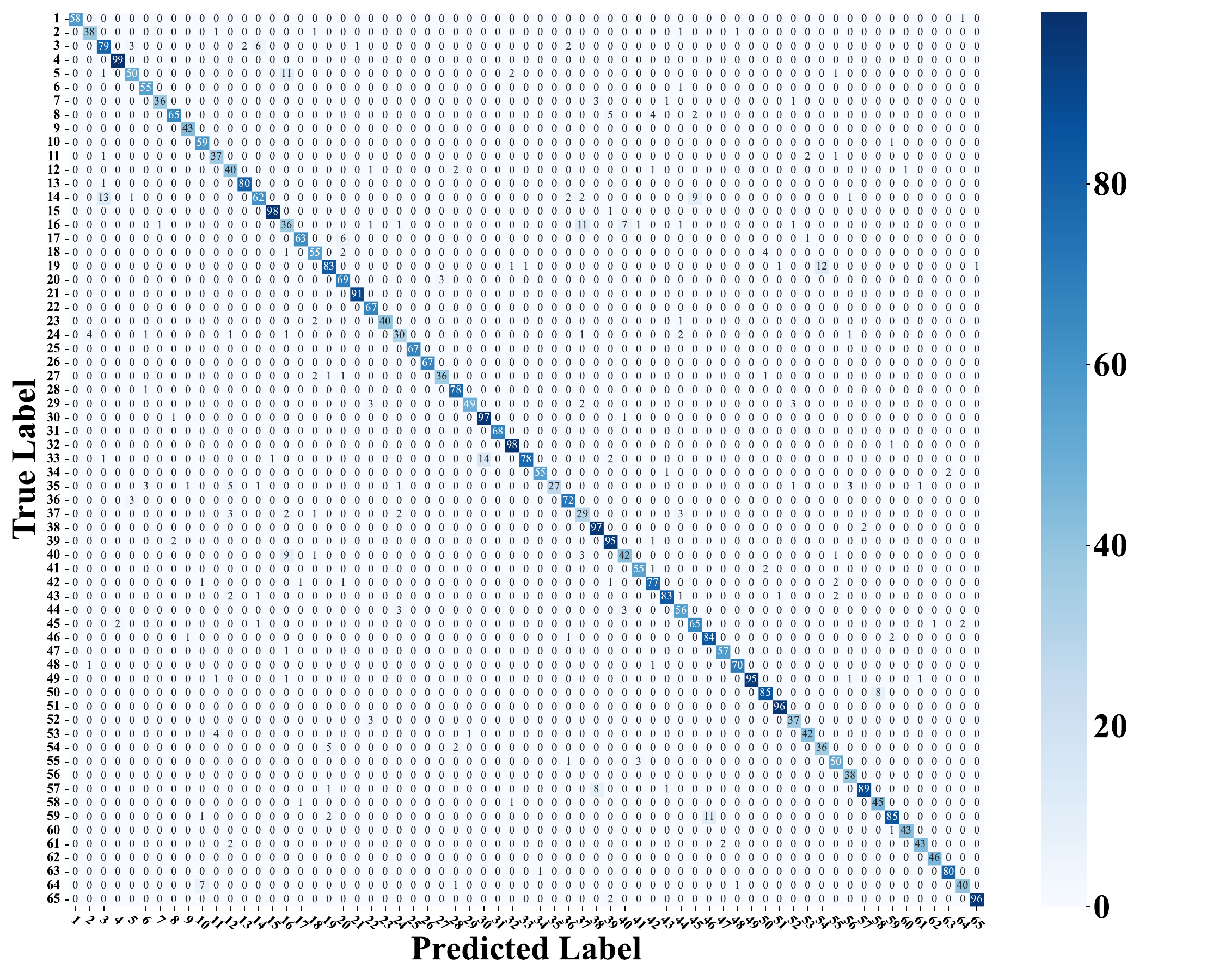}
        \caption{Ar$\to$Pr}
    \end{subfigure}
    \hfill
    \begin{subfigure}[t]{0.30\textwidth}
        \includegraphics[width=\textwidth]{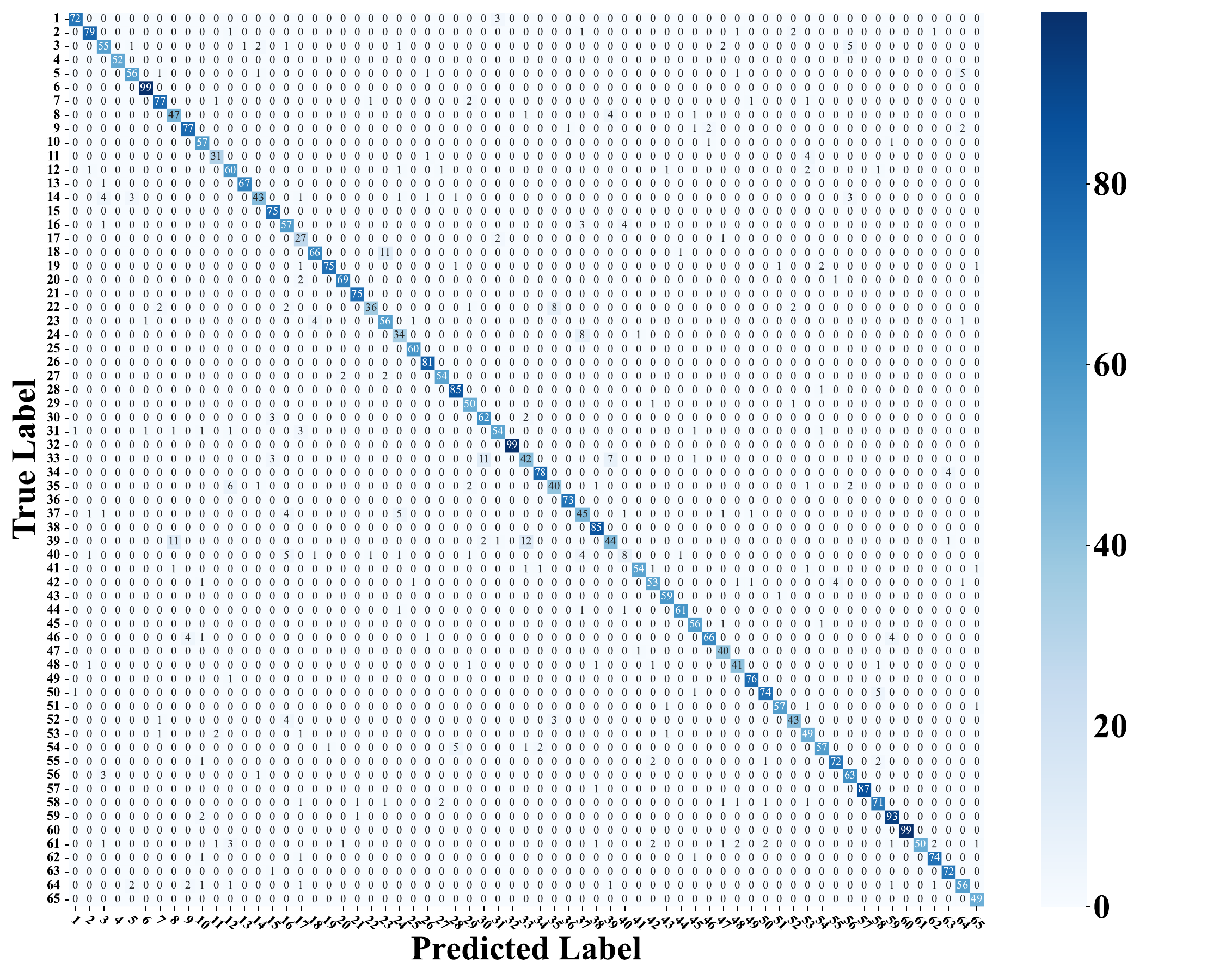}
        \caption{Ar$\to$Rw}
    \end{subfigure}
    
    % -------- Row 2 --------
    \begin{subfigure}[t]{0.30\textwidth}
        \includegraphics[width=\textwidth]{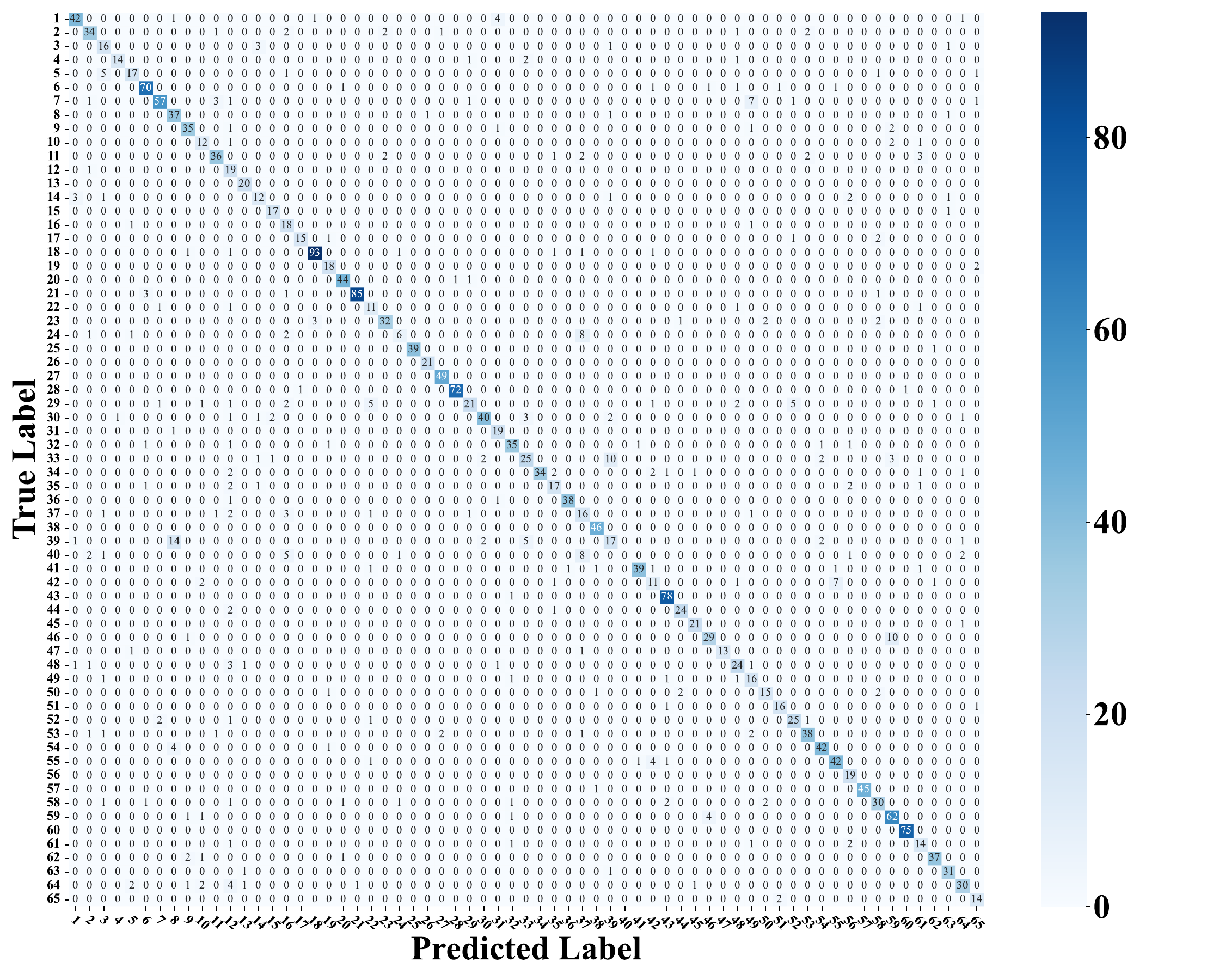}
        \caption{Cl$\to$Ar}
    \end{subfigure}
    \hfill
    \begin{subfigure}[t]{0.30\textwidth}
        \includegraphics[width=\textwidth]{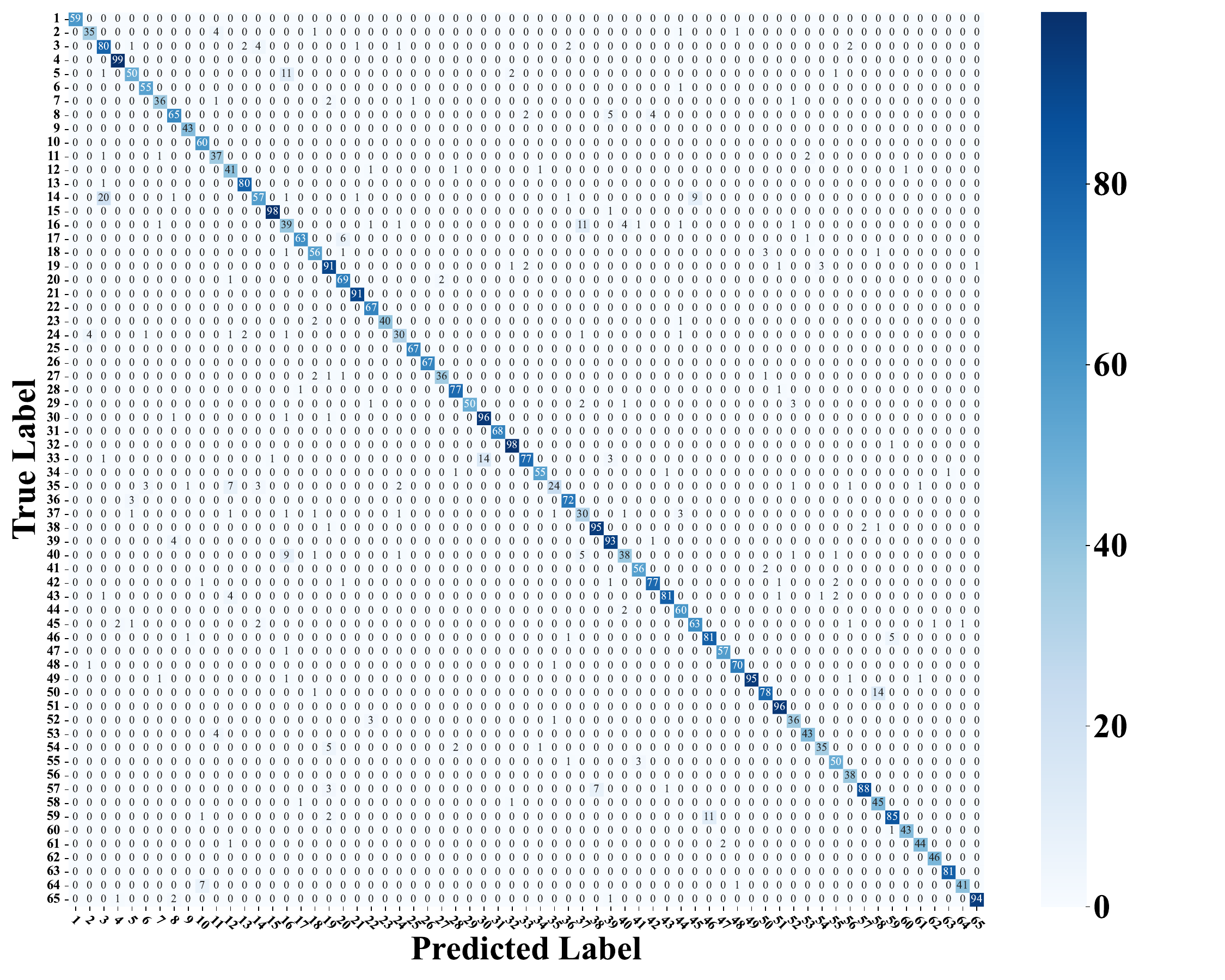}
        \caption{Cl$\to$Pr}
    \end{subfigure}
    \hfill
    \begin{subfigure}[t]{0.30\textwidth}
        \includegraphics[width=\textwidth]{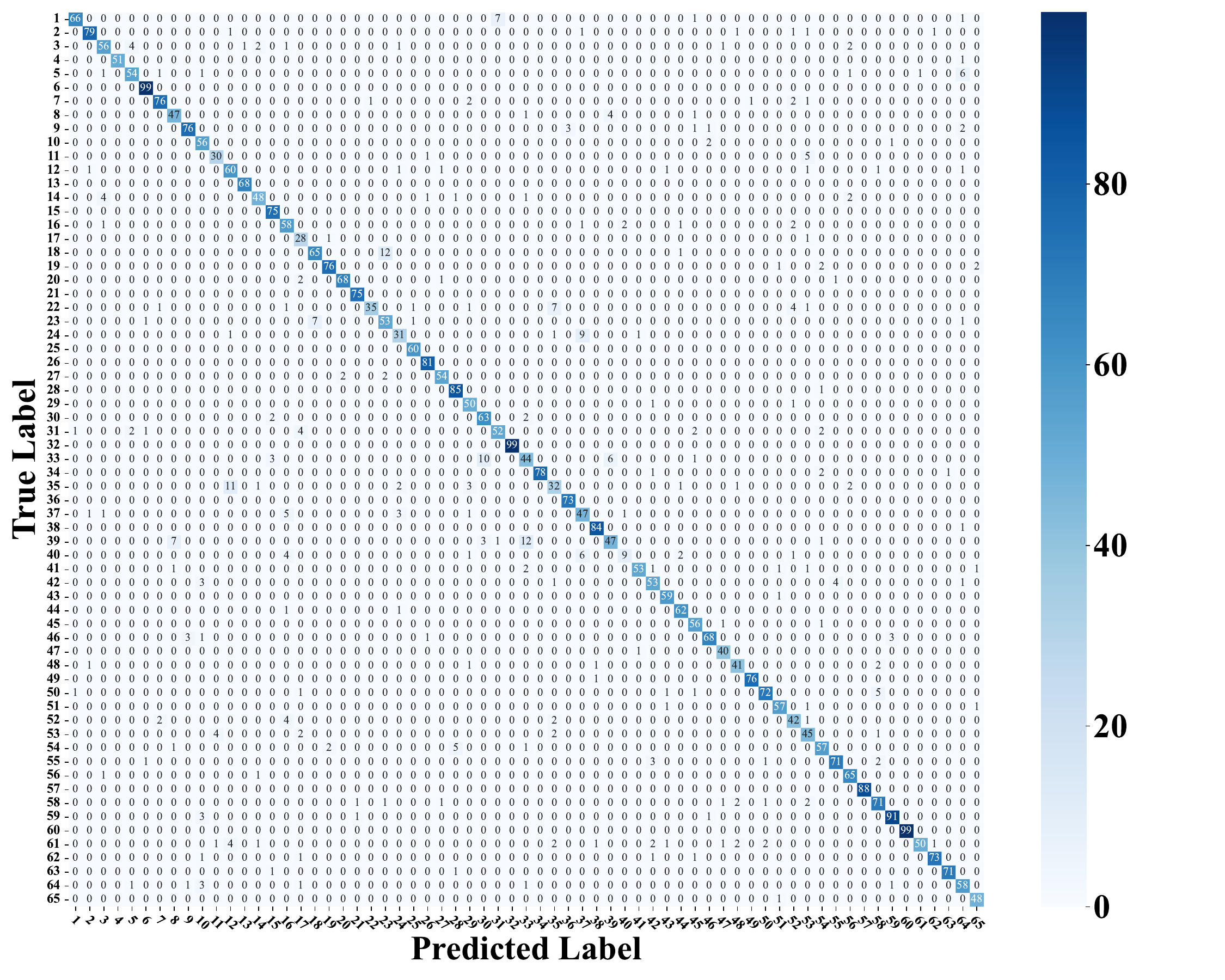}
        \caption{Cl$\to$Rw}
    \end{subfigure}
    
    % -------- Row 3 --------
    \begin{subfigure}[t]{0.30\textwidth}
        \includegraphics[width=\textwidth]{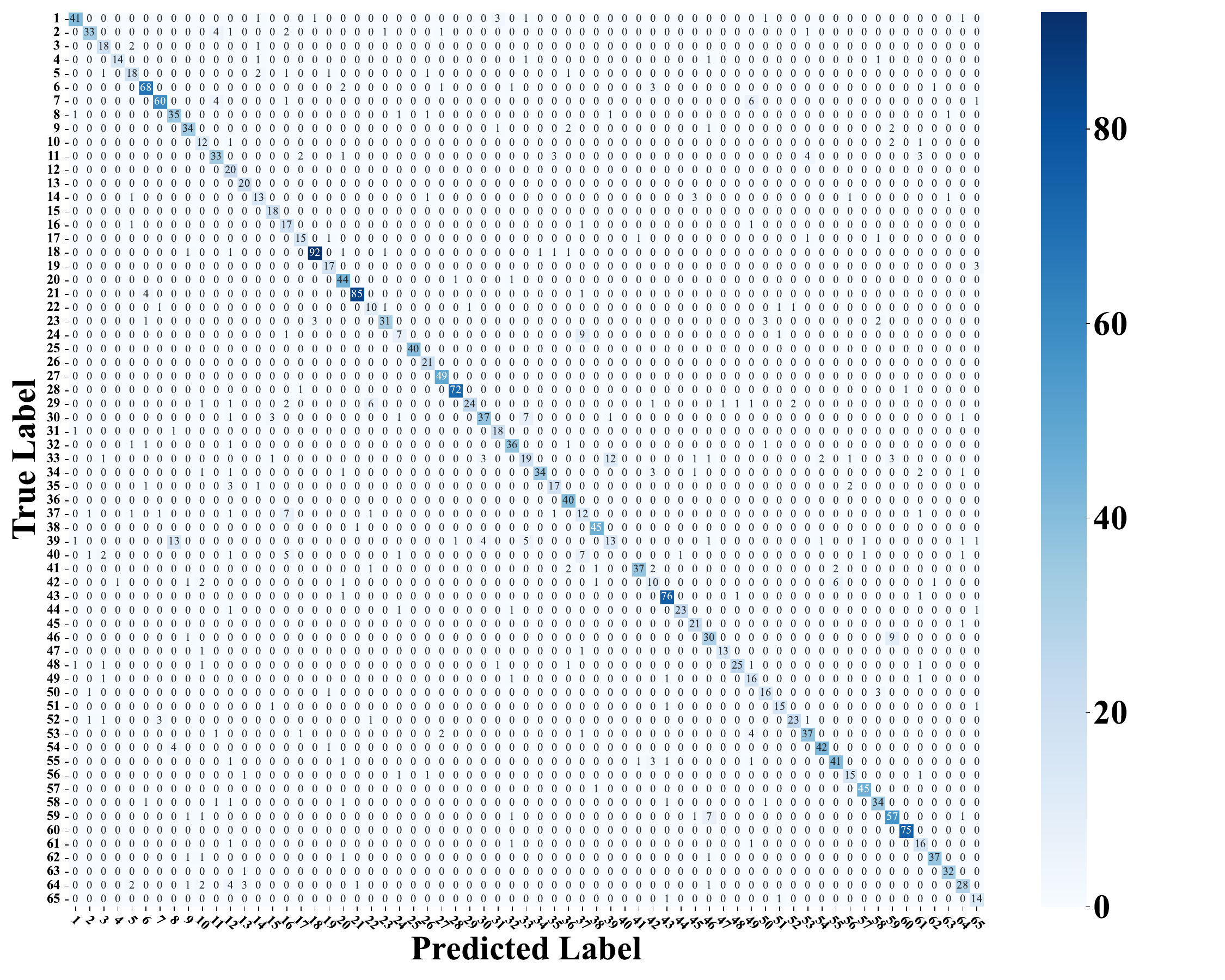}
        \caption{Pr$\to$Ar}
    \end{subfigure}
    \hfill
    \begin{subfigure}[t]{0.30\textwidth}
        \includegraphics[width=\textwidth]{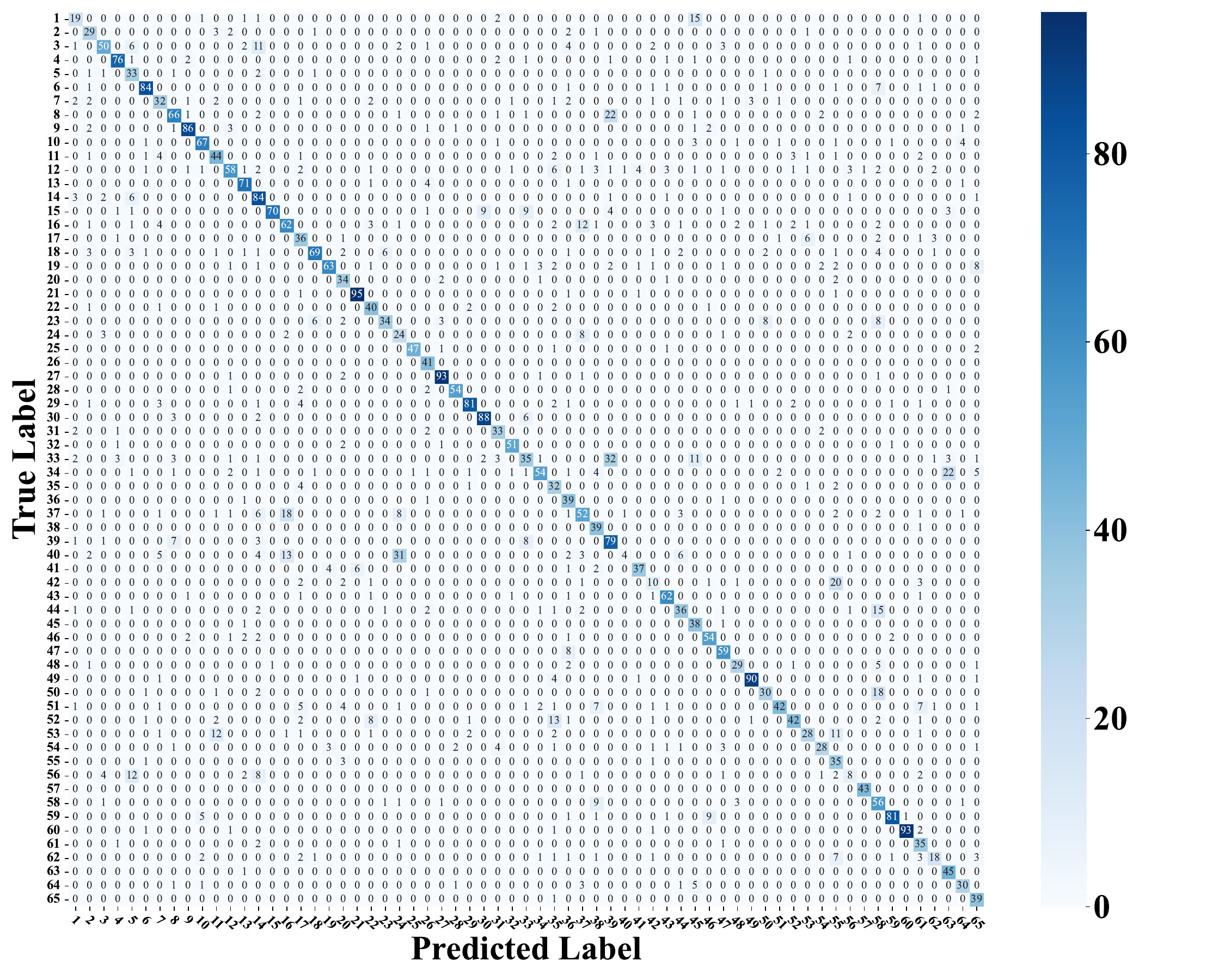}
        \caption{Pr$\to$Cl}
    \end{subfigure}
    \hfill
    \begin{subfigure}[t]{0.30\textwidth}
        \includegraphics[width=\textwidth]{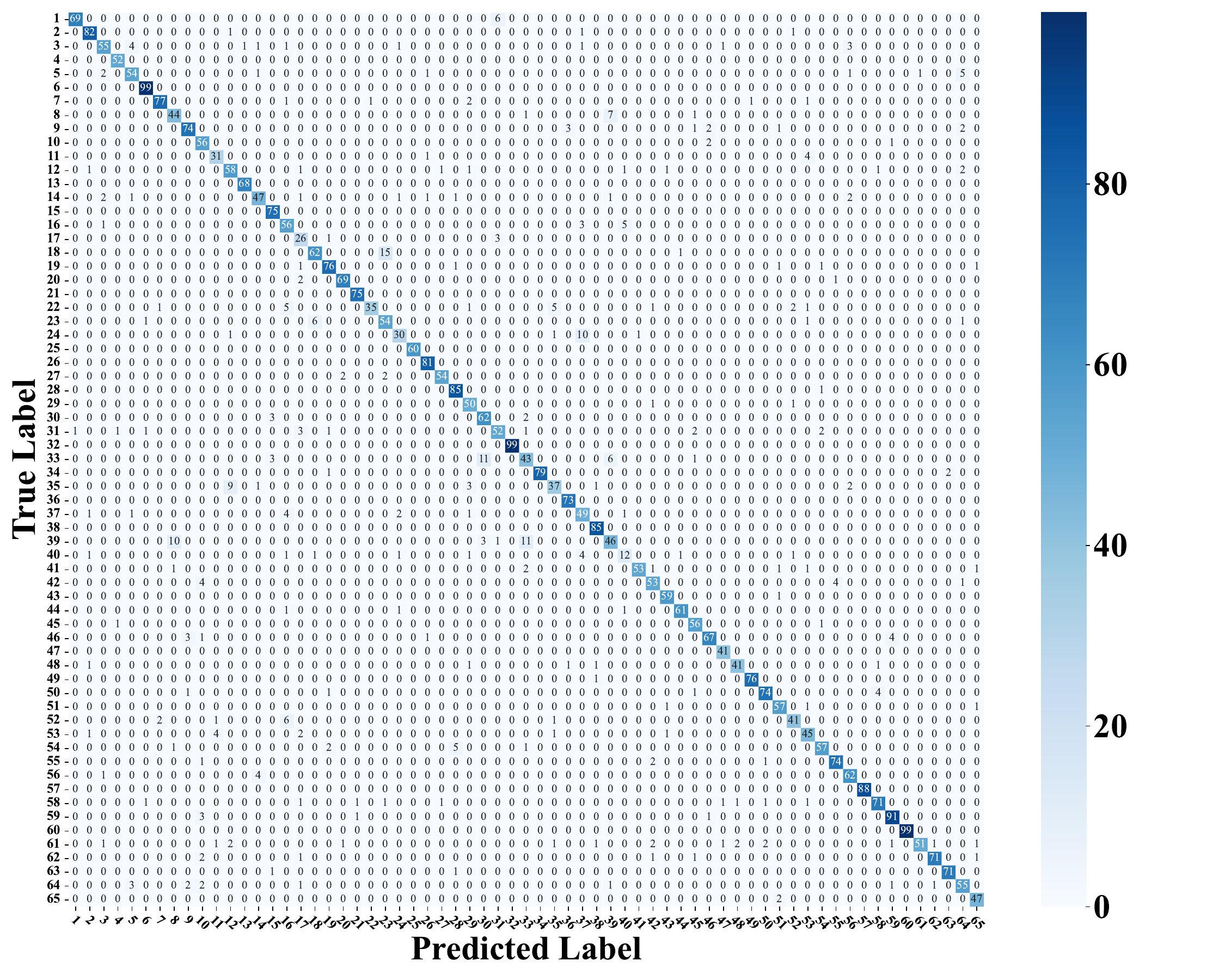}
        \caption{Pr$\to$Rw}
    \end{subfigure}
    
    % -------- Row 4 --------
    \begin{subfigure}[t]{0.30\textwidth}
        \includegraphics[width=\textwidth]{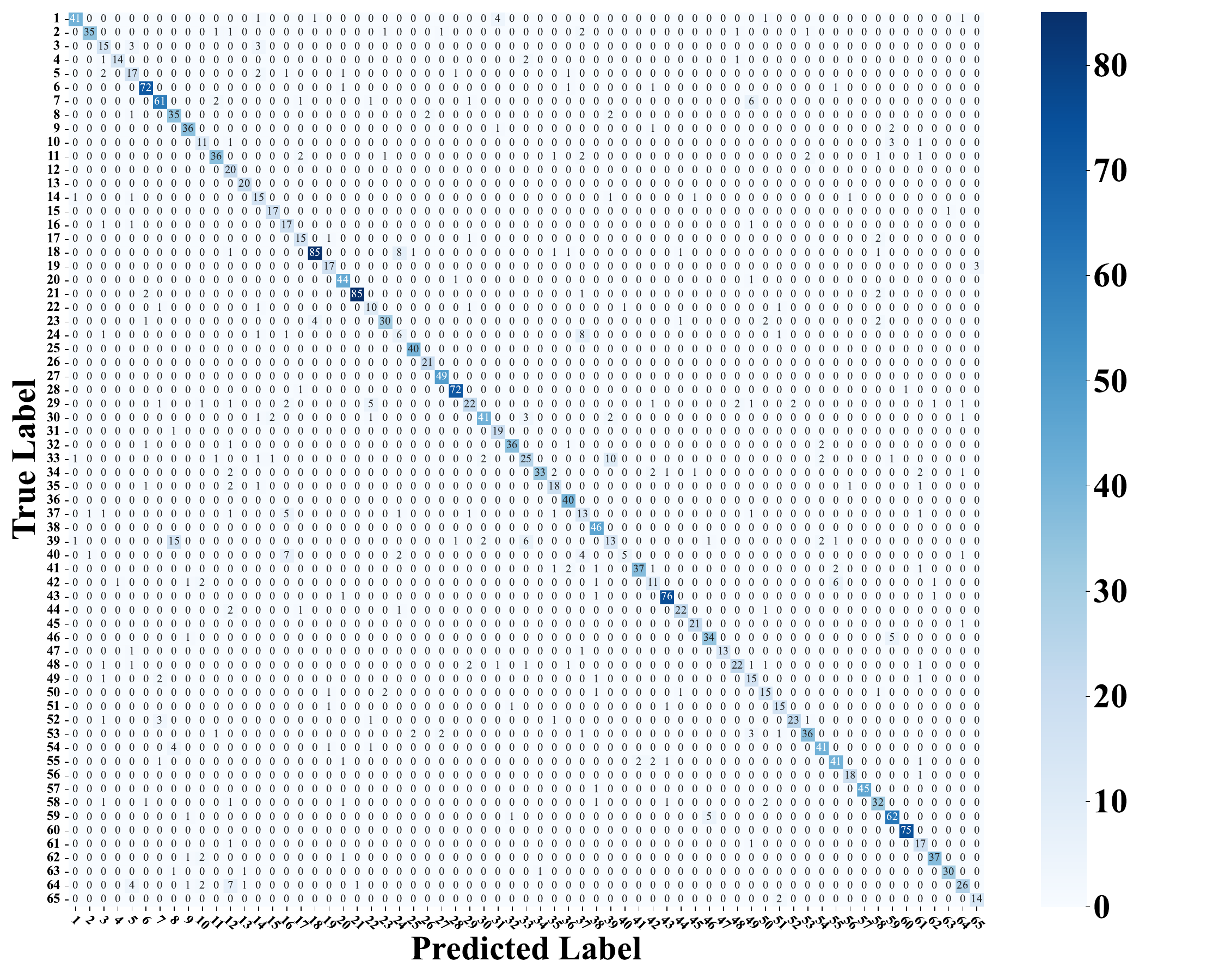}
        \caption{Rw$\to$Ar}
    \end{subfigure}
    \hfill
    \begin{subfigure}[t]{0.30\textwidth}
        \includegraphics[width=\textwidth]{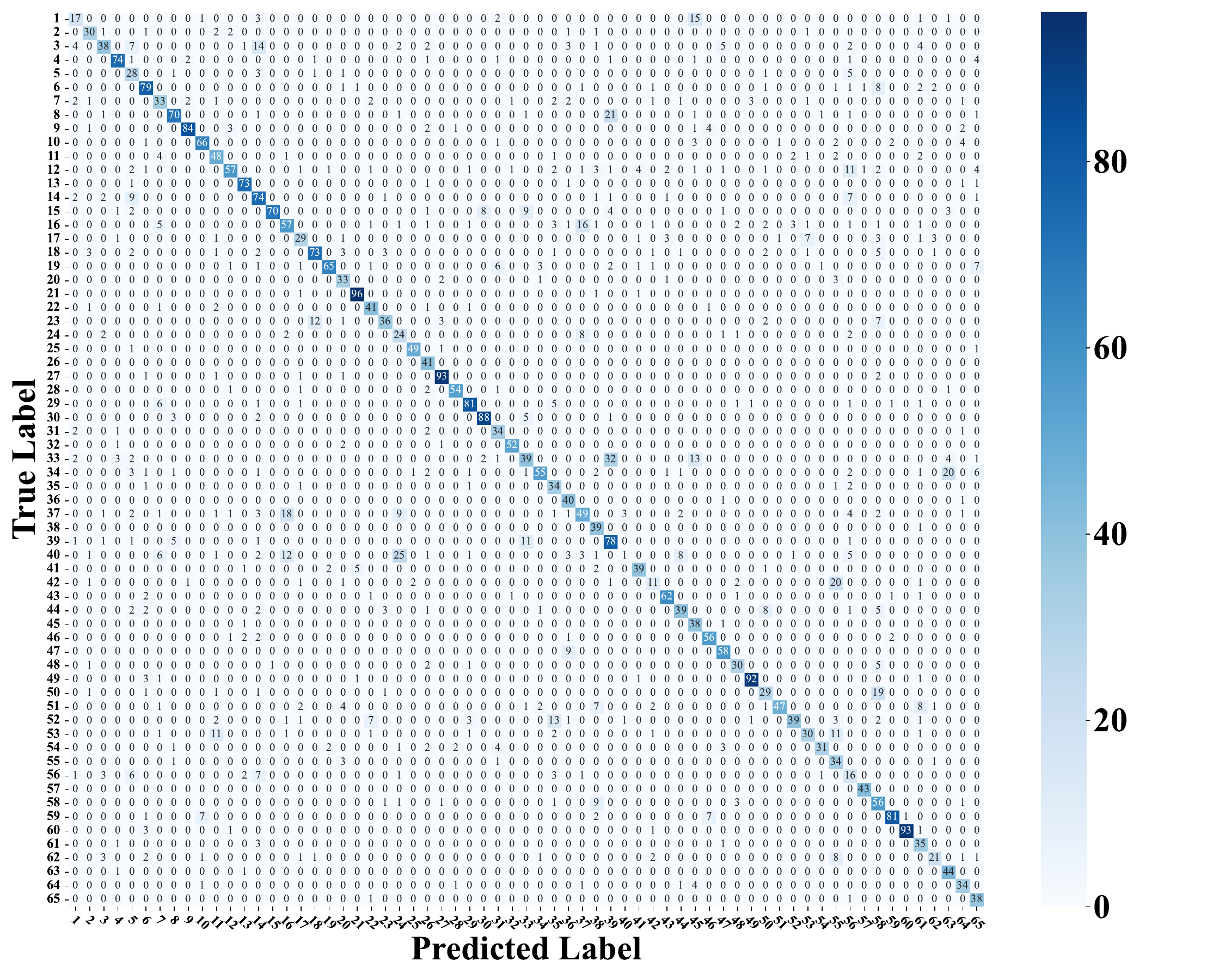}
        \caption{Rw$\to$Cl}
    \end{subfigure}
    \hfill
    \begin{subfigure}[t]{0.30\textwidth}
        \includegraphics[width=\textwidth]{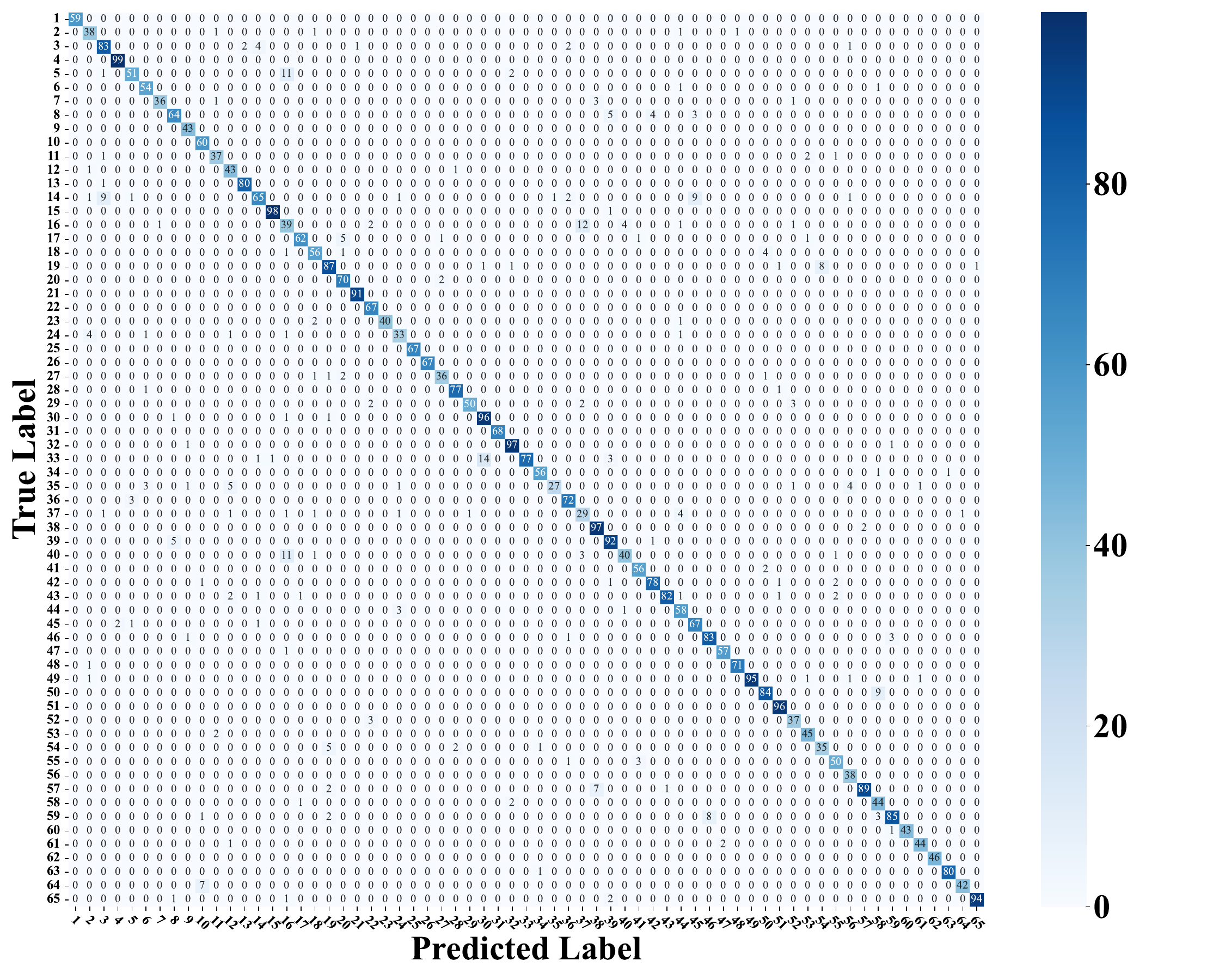}
        \caption{Rw$\to$Pr}
    \end{subfigure}
    \begin{subfigure}[t]{0.95\textwidth}
        \includegraphics[width=\textwidth]{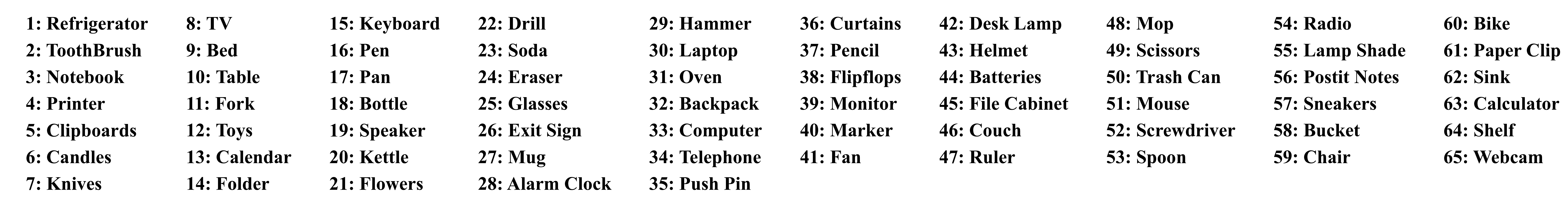}
    \end{subfigure}
    \caption{Confusion matrices for different domain shifts in the Office-Home dataset.}
    \label{cm_officehome}
\end{figure}

\paragraph{Confusion Analysis on VisDA-C.} VisDA-C is a challenging synthetic-to-real benchmark comprising over 280K images across 12 object categories. As shown in \autoref{cm_visda}, the confusion matrix for the \textit{train}$\rightarrow$\textit{val} task unveils frequent misclassifications among visually similar classes. For instance, \textit{car} is often misclassified as \textit{person} (916 instances), and there is substantial confusion between \textit{bus} and \textit{truck}, as well as between \textit{bicycle} and \textit{motorcycle}. These trends reflect the difficulty of preserving fine-grained semantics under domain shift, especially in real-world scenes characterized by complex textures, occlusions, and noisy backgrounds. While SSA successfully mitigates many such confusions through its hierarchical semantic alignment mechanism, its use of fixed-level feature aggregation imposes certain constraints. Specifically, it may struggle in scenarios with high intra-class variability or overlapping inter-class appearances. This suggests that future improvements could benefit from incorporating more flexible, instance-aware representations capable of adapting to subtle semantic distinctions and dynamic visual contexts.

\begin{figure}[t]
    \centering
    \includegraphics[width=0.8\textwidth]{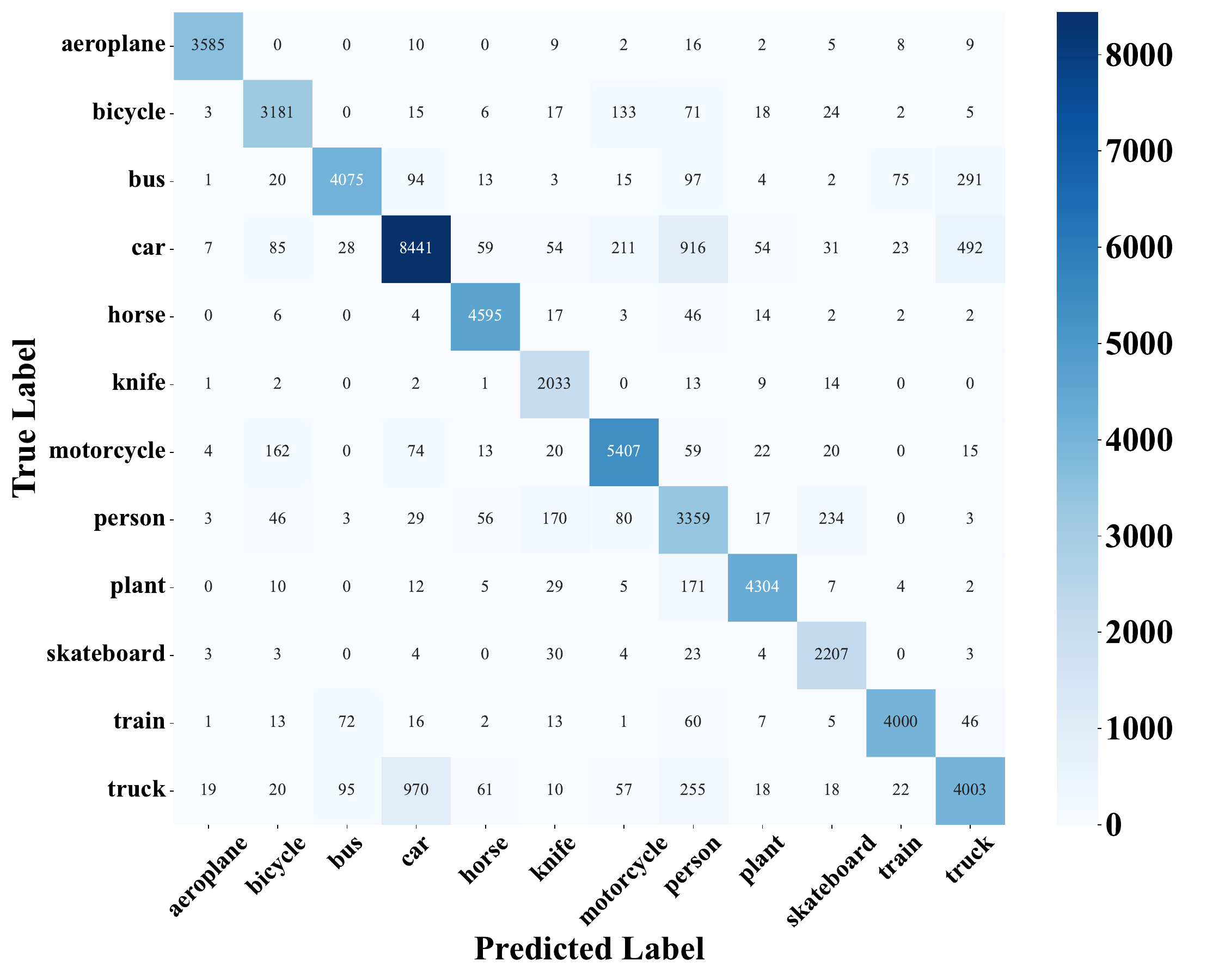}{}
    \caption{Confusion matrices for different domain shifts in the VisDA-C dataset.}
    \label{cm_visda}
    % \small
\end{figure}

\paragraph{Confusion Analysis on Office-31.} Office-31 is a small-scale dataset with 3 domains and 6 transfer tasks. In the Amazon$\rightarrow$DSLR (A$\rightarrow$D) setting (\autoref{cm_office31}), the model achieves strong overall performance but still confuses visually or contextually similar classes, such as \textit{monitor} vs. \textit{desk lamp} and \textit{laptop} vs. \textit{ring binder}. These errors likely stem from shared shapes or frequent co-occurrence. While SSA handles mild domain shifts well via stepwise alignment, its global fusion may overlook subtle distinctions, suggesting the need for more localized or instance-aware modeling.

\begin{figure}[t]
    \centering
    % -------- Row 1 --------
    \begin{subfigure}[t]{0.23\textwidth}
        \includegraphics[width=\textwidth]{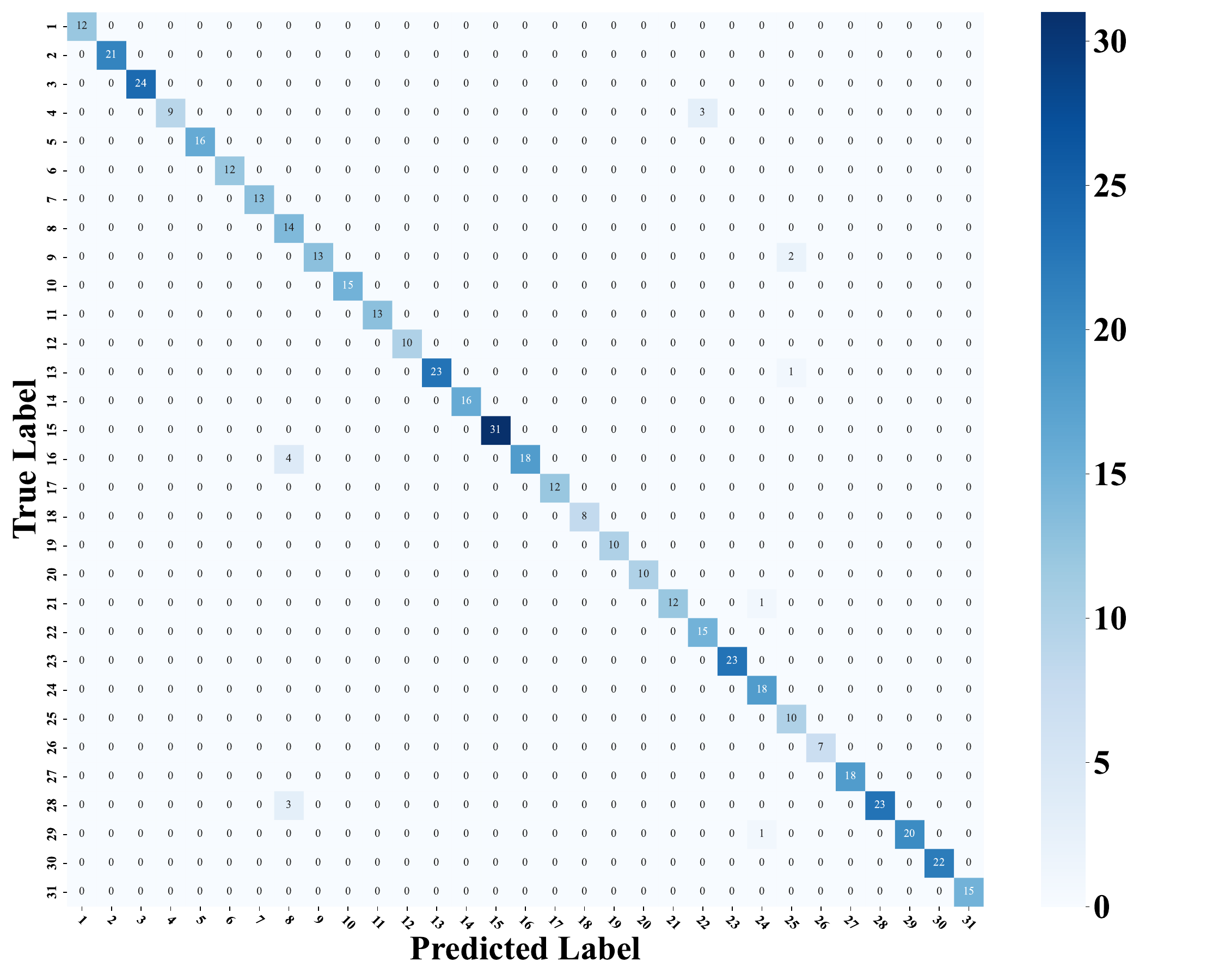}
        \caption{A$\to$D}
    \end{subfigure}
    % \hfill
    \begin{subfigure}[t]{0.23\textwidth}
        \includegraphics[width=\textwidth]{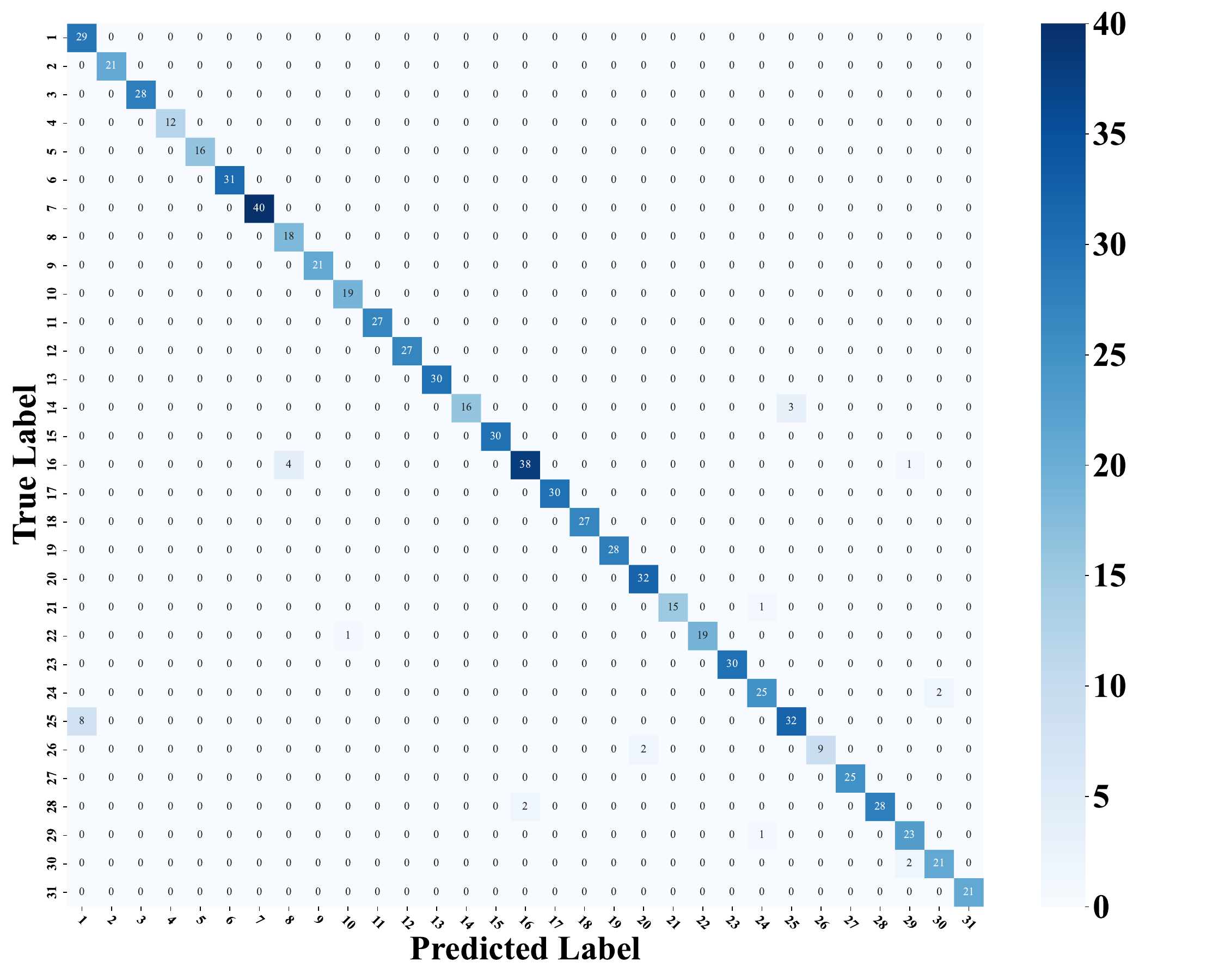}
        \caption{A$\to$W}
    \end{subfigure}
    % \hfill
    \begin{subfigure}[t]{0.23\textwidth}
        \includegraphics[width=\textwidth]{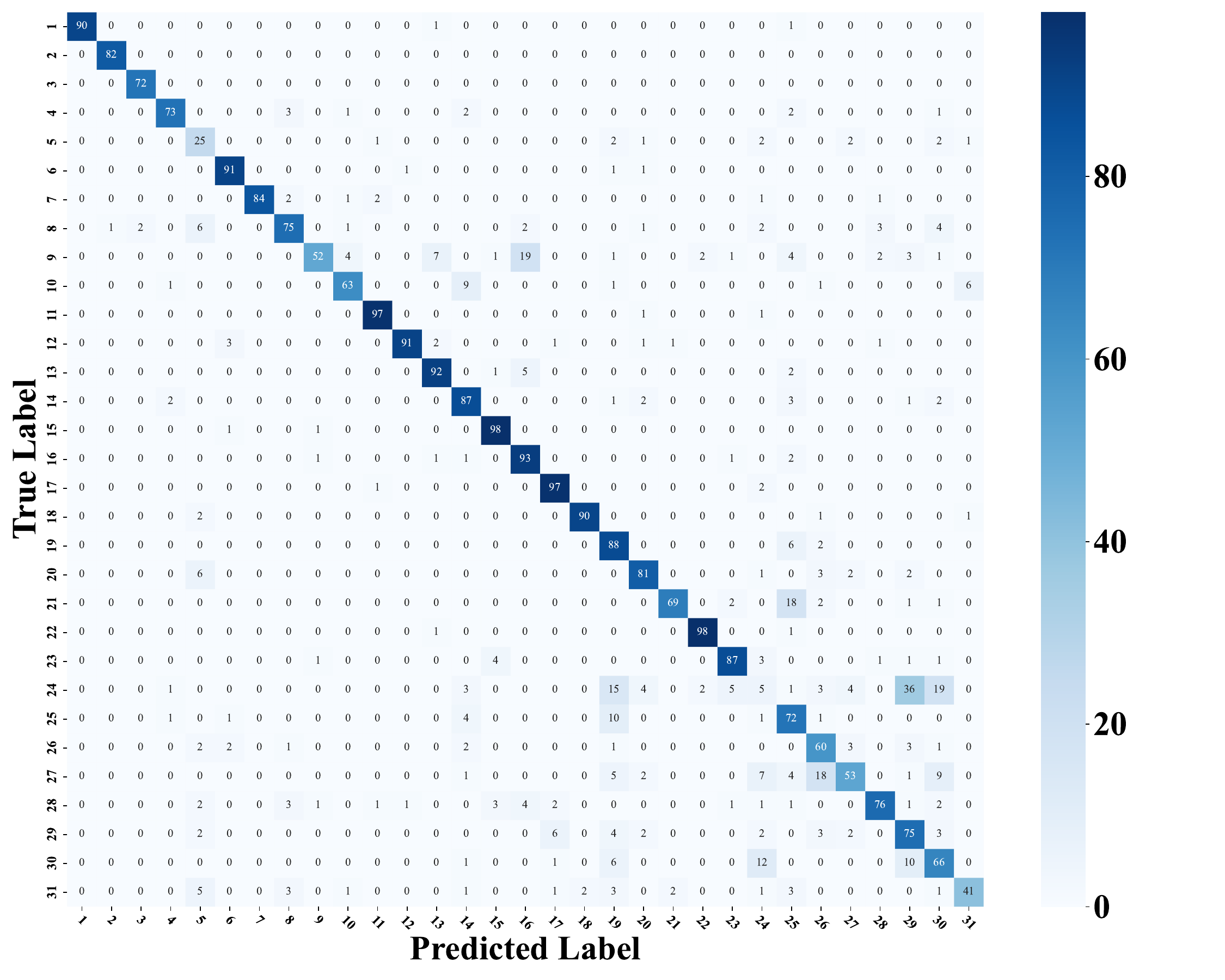}
        \caption{D$\to$A}
    \end{subfigure}
    
    % -------- Row 2 --------
    \begin{subfigure}[t]{0.23\textwidth}
        \includegraphics[width=\textwidth]{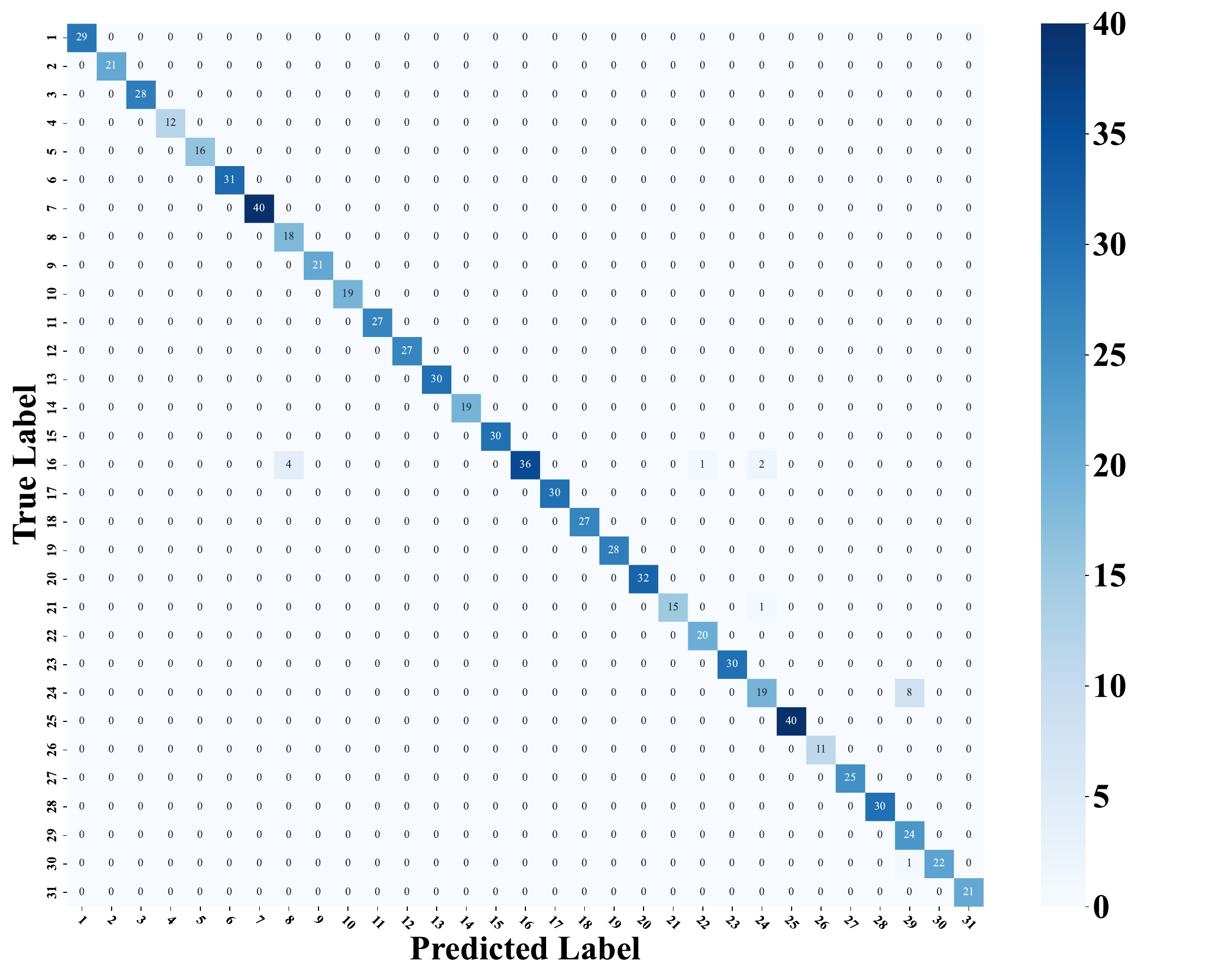}
        \caption{D$\to$W}
    \end{subfigure}
    % \hfill
    \begin{subfigure}[t]{0.23\textwidth}
        \includegraphics[width=\textwidth]{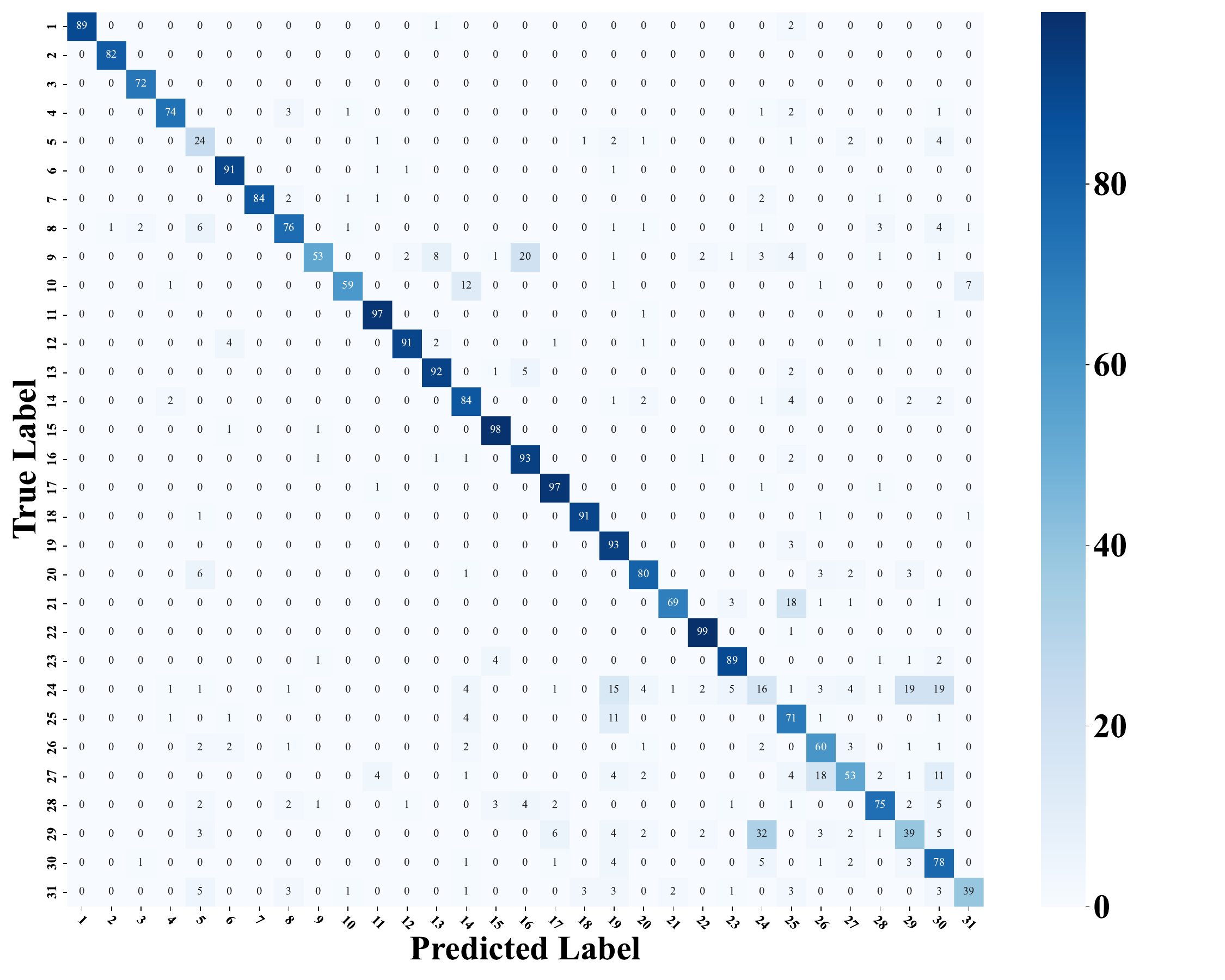}
        \caption{W$\to$A}
    \end{subfigure}
    % \hfill
    \begin{subfigure}[t]{0.23\textwidth}
        \includegraphics[width=\textwidth]{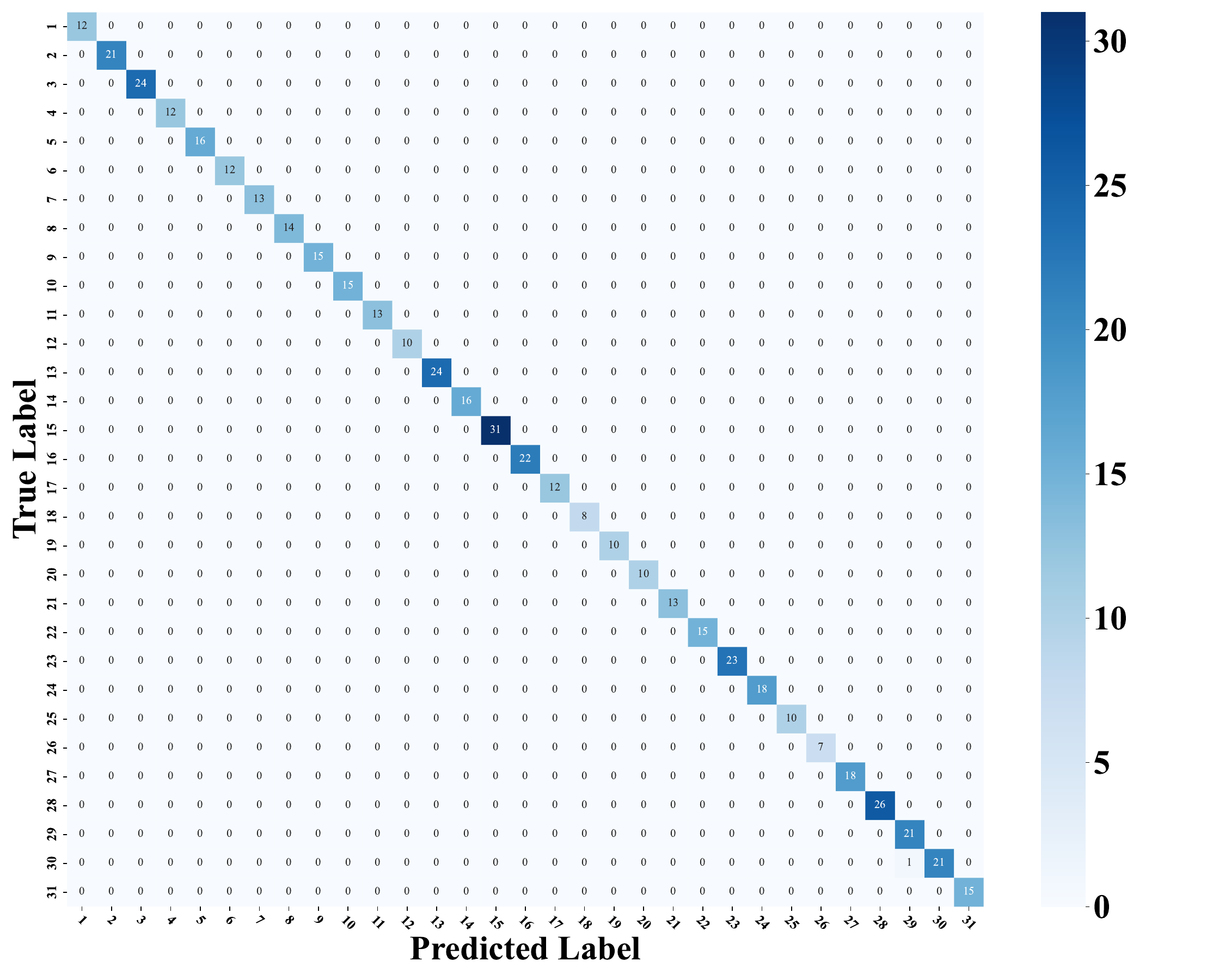}
        \caption{W$\to$D}
    \end{subfigure}
    \begin{subfigure}[t]{0.8\textwidth}
        \includegraphics[width=\textwidth]{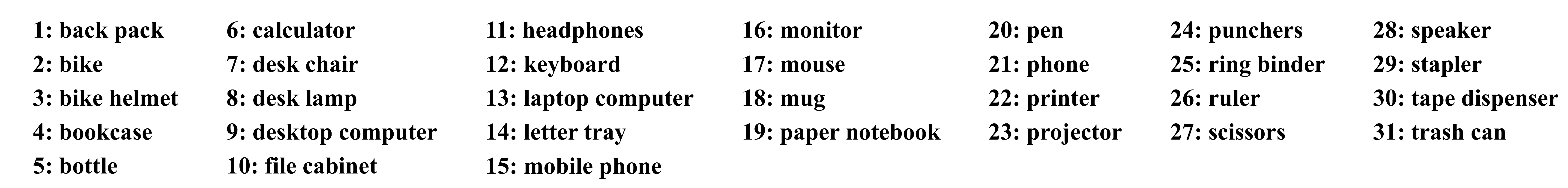}
    \end{subfigure}
    \caption{Confusion matrices for different domain shifts in the Office-31 dataset.}
    \label{cm_office31}
\end{figure}

\paragraph{Cross-Dataset Comparison.} The three datasets reflect a spectrum of domain adaptation challenges, from the large-scale synthetic-to-real gap in VisDA-C to the denser label space of Office-Home and the relatively mild shifts in Office-31. SSA maintains robust performance across these settings by progressively aligning semantic features, yet residual confusions persist in visually or functionally similar categories. This suggests that while SSA effectively addresses broad distribution gaps, its current feature modeling may lack the granularity needed for fine-grained discrimination. Incorporating part-aware cues, adaptive fusion, or relational reasoning could further improve its adaptability in complex or densely labeled domains.

\subsubsection{T-SNE visualization for image classification on VisDA-C}
\label{sec:appendix_tsne}

\autoref{tsne_show} illustrates the feature distributions of source-only, SHOT, and SSA on the VisDA-C dataset using T-SNE visualization. The \textbf{source-only features (a)} show significant overlap and scattered clusters, indicating severe \textbf{class confusion} and weak semantic separability under domain shift. This suggests that models trained only on source data struggle to form clear boundaries between classes in the target domain.

\begin{figure}[H]
    \centering
    % -------- Row 1 --------
    \begin{subfigure}[t]{0.2\textwidth}
        \includegraphics[width=\textwidth]{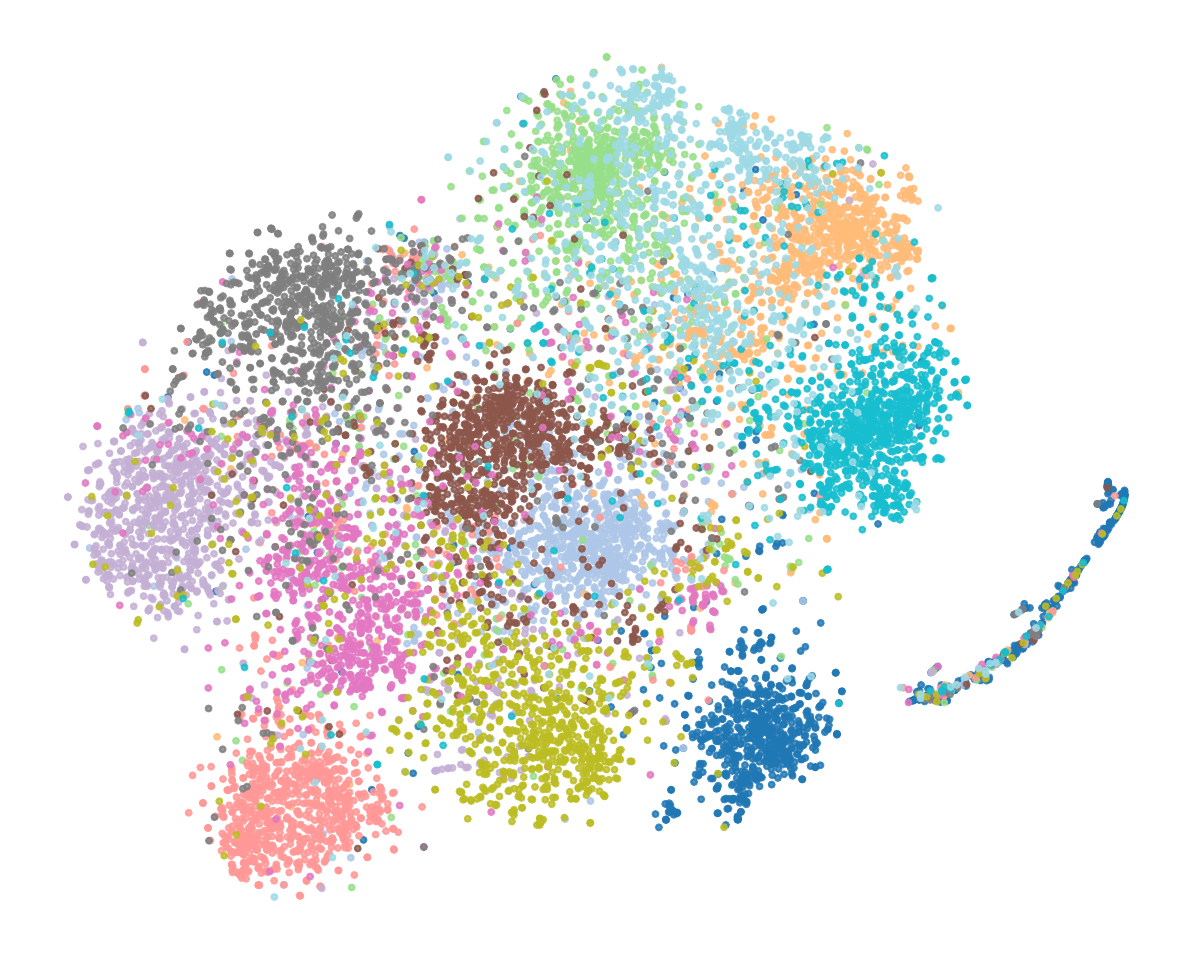}
        \caption{Source Only}
    \end{subfigure}
    %\hfill
    \begin{subfigure}[t]{0.2\textwidth}
        \includegraphics[width=\textwidth]{./img/show/tsne/tsne_shot_visda.pdf}
        \caption{SHOT}
    \end{subfigure}
    %\hfill
    \begin{subfigure}[t]{0.2\textwidth}
        \includegraphics[width=\textwidth]{./img/show/tsne/tsne_shlsa_visda.pdf}
        \caption{SSA}
    \end{subfigure}
    \begin{subfigure}[t]{0.6\textwidth}
        \includegraphics[width=\textwidth]{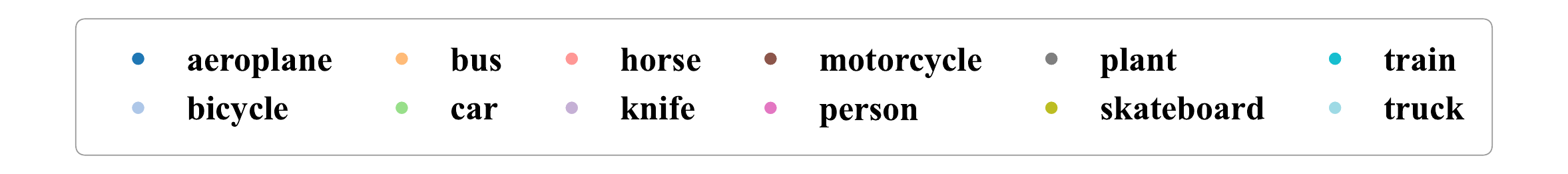}
    \end{subfigure}
    \caption{t-SNE visualization for image classification on VisDA-C dataset.}
    \label{tsne_show}
\end{figure}

The \textbf{SHOT method (b)} partially improves the compactness and separation of clusters. However, some ambiguity and overlap persist among visually or semantically related classes, revealing that SHOT’s global alignment alone cannot fully resolve fine-grained category differences.

In contrast, \textbf{SSA (c)} generates the most distinct and well-separated clusters. Its hierarchical semantic alignment enhances both intra-class cohesion and inter-class separation, effectively reducing class confusion. This results in more stable decision boundaries and improved feature discriminability. The visualization thus provides strong empirical support for SSA’s ability to handle challenging domain shifts better than conventional low-level alignment methods by capturing richer semantic structure.

%%%%%%%%%%%%%%%%%%%%%%%%%%%%%%%%%%%%%%%%%%%%%%%%%%%%%%%%%%%%%%%%%%%%%%%%%%%%%%%
%%%%%%%%%%%%%%%%%%%%%%%%%%%%%%%%%%%%%%%%%%%%%%%%%%%%%%%%%%%%%%%%%%%%%%%%%%%%%%%

\end{document}